\documentclass[anon,12pt]{conftopaper}

\newif\ifDRAFT
\DRAFTtrue
\ifDRAFT
\newcommand{\marrow}{\marginpar[\hfill$\longrightarrow$]{$\longleftarrow$}}
\newcommand{\niceremark}[3]
   {\textcolor{red}{\textsc{#1 #2:} \marrow\textsf{#3}}}
\newcommand{\ken}[2][says]{\niceremark{Ken}{#1}{#2}}

\newcommand{\michael}[2][says]{\niceremark{Michael}{#1}{#2}}
\newcommand{\michal}[2][says]{\niceremark{Michal}{#1}{#2}}
\newcommand{\feynman}[2][says]{\niceremark{Feynman}{#1}{#2}}
\else
\newcommand{\ken}[1]{}
\newcommand{\michael}[1]{}
\newcommand{\michal}[1]{}
\newcommand{\feynman}[1]{}
\fi
\newcommand{\norm}[1]{{\| #1 \|}}

\def\Nb{{\mathbf{N}}}

\def\S{\mathbf{S}}

\def\val {{\mathrm{val}}}

\def\Deltab{{\mathbf{\Delta}}}

\def\H{\mathbf H}

\def\Q{\mathbf Q}

\ifx\BlackBox\undefined
\newcommand{\BlackBox}{\rule{1.5ex}{1.5ex}}  
\fi
\DeclareMathOperator*{\argmin}{\mathop{\mathrm{argmin}}}

\DeclareMathOperator*{\diag}{\mathop{\mathrm{diag}}}
\def\x{\mathbf x}
\def\y{\mathbf y}

\def\a{\mathbf a}
\def\b{\mathbf b}

\def\e{\mathbf e}
\def\zero{\mathbf 0}

\def\B{\mathbf B}
\def\A{\mathbf A}

\def\U{\mathbf U}

\def\D{\mathbf D}

\def\M{\mathbf M}

\def\I{\mathbf I}

\def\A{\mathbf A}

\def\E{\mathbb E}
\def\R{\mathbb R}
\def\N{\mathbb N}
\def\Pr{\mathrm{Pr}}
\def\tr{\mathrm{tr}}

\let\origtop\top
\renewcommand\top{{\scriptscriptstyle{\origtop}}} 

\definecolor{silver}{cmyk}{0,0,0,0.3}
\definecolor{yellow}{cmyk}{0,0,0.9,0.0}
\definecolor{reddishyellow}{cmyk}{0,0.22,1.0,0.0}
\definecolor{black}{cmyk}{0,0,0.0,1.0}
\definecolor{darkYellow}{cmyk}{0.2,0.4,1.0,0}
\definecolor{orange}{cmyk}{0.0,0.7,0.9,0}
\definecolor{darkSilver}{cmyk}{0,0,0,0.1}
\definecolor{grey}{cmyk}{0,0,0,0.5}
\definecolor{darkgreen}{cmyk}{1,0,1,0} 

\ifx\proof\undefined
\newenvironment{proof}{\par\noindent{\bf Proof\ }}{\hfill\BlackBox\\[2mm]}
\fi

\ifx\theorem\undefined
\newtheorem{theorem}{Theorem}
\fi

\ifx\example\undefined
\newtheorem{example}{Example}
\fi

\ifx\condition\undefined
\newtheorem{condition}{Condition}
\fi
\ifx\property\undefined
\newtheorem{property}{Property}
\fi

\ifx\lemma\undefined
\newtheorem{lemma}{Lemma}
\fi

\ifx\proposition\undefined
\newtheorem{proposition}{Proposition}
\fi

\ifx\remark\undefined
\newtheorem{remark}{Remark}
\fi

\ifx\corollary\undefined
\newtheorem{corollary}{Corollary}
\fi

\ifx\definition\undefined
\newtheorem{definition}{Definition}
\fi

\ifx\conjecture\undefined
\newtheorem{conjecture}{Conjecture}
\fi

\ifx\axiom\undefined
\newtheorem{axiom}{Axiom}
\fi

\ifx\claim\undefined
\newtheorem{claim}{Claim}
\fi

\ifx\assumption\undefined
\newtheorem{assumption}{Assumption}
\fi

\ifx\condition\undefined
\newtheorem{condition}{Condition}
\fi

\newcommand{\aZero}{\alpha_0}
\newcommand{\tZero}{t_0}

\newcommand{\Ls}{L}

\title[Last-Iterate Convergence of Randomized Kaczmarz
  and SGD]{Last-Iterate Convergence of Randomized Kaczmarz
  \\and SGD with Greedy Step Size}
\usepackage{times}
\coltauthor{%
 \Name{Micha{\l} Derezi\'nski} \\\Email{derezin@umich.edu} 
 \addr University of Michigan
\AND
 \Name{Xiaoyu Dong} \\\Email{xdong@nus.edu.sg} 
 \addr National University of Singapore%
}

\begin{document}

\maketitle

\begin{abstract}%
We study last-iterate convergence of SGD with greedy step size over smooth quadratics in the interpolation regime, a setting which captures the classical Randomized Kaczmarz algorithm as well as other popular iterative linear system solvers. For these methods, we show that the $t$-th iterate attains an $O(1/t^{3/4})$ convergence rate, addressing a question posed by Attia, Schliserman, Sherman, and Koren, who gave an $O(1/t^{1/2})$ guarantee for this setting. In the proof, we introduce the family of \emph{stochastic contraction processes}, whose behavior can be described by the evolution of a certain deterministic eigenvalue equation, which we analyze via a careful discrete-to-continuous~reduction.
\end{abstract}

\begin{keywords}%
  Stochastic Gradient Descent, Randomized Kaczmarz, Interpolation regime%
\end{keywords}

\section{Introduction}
Stochastic Gradient Descent \citep[SGD,][]{robbins1951stochastic} is one of the most extensively studied optimization algorithms. 
This has led to an in-depth understanding of the convergence properties of many variants of SGD across different optimization settings.
It is therefore perhaps surprising that, with such extensive literature, the worst-case convergence remains unresolved for one of the oldest SGD algorithms in one of the most classical settings: the Kaczmarz algorithm for solving consistent systems of linear equations \citep{kaczmarz37}. Given a system of $m$ equations with $n$ unknowns, the Kaczmarz algorithm starts with an initial $n$-dimensional estimate vector, and then iteratively selects one equation to solve, computing the solution that is the closest to its previous estimate. This method has seen renewed interest in the numerical analysis literature since  \cite{strohmer2009randomized} proposed a randomized sampling scheme for selecting the equations. Yet, the worst-case convergence rate (i.e., independent of any condition numbers) is still unknown for this method, regardless of how the equations are sampled, despite recent efforts in this direction \citep{steinerberger2023approximate,evron2025continual,attia2025fast}. 

In the broader context, the above question is directly tied to the study of last-iterate convergence of SGD with fixed step size in the smooth interpolation regime, which has seen significant interest thanks to the effectiveness of this type of algorithms for training highly over-parameterized deep learning models \citep{ma2018power}. Here, the Kaczmarz algorithm can be viewed as an instance of SGD minimizing an average of convex $\beta$-smooth functions, using step size $1/\beta$. This choice of step size is notable, as this is also the canonical choice for full gradient descent (GD) on a $\beta$-smooth function. While SGD is not expected in general to converge under fixed step size, it will do so if all of the averaged functions admit a common minimizer (the \emph{interpolation regime}). In particular, it is known that the average of $t$ such SGD iterates converges at the rate of $O(1/t)$ \citep{bach2013non,zou2021benign}, and after sufficiently shrinking the step size below $1/\beta$, nearly matching rates can be obtained for the last iterate \citep{varre2021last}. Yet, these guarantees do not cover last-iterate SGD with the canonical GD step size $1/\beta$ (called the greedy step size), empirically the most effective choice.

Recent efforts toward the understanding of SGD with greedy step size are motivated not only by the Kaczmarz method (along with its many extensions such as Block Kaczmarz, Coordinate Descent, Sketch-and-Project, etc.), but also by its application to the analysis of catastrophic forgetting in a class of realizable continual learning problems \citep{evron2022catastrophic}. In this context, \cite{evron2025continual} provided an analysis of SGD with greedy step size (including Kaczmarz), showing that its last iterate converges at an $O(1/t^{1/4})$ rate. Later, \cite{attia2025fast} improved this result to $O(1/t^{1/2})$, and asked whether this rate is optimal. (Further related work is provided in Appendix~\ref{a:related-work}.)

\paragraph{Our contributions.} In this work, we provide a new framework for analyzing the last-iterate convergence of SGD algorithms, and we use it to obtain the following main result:
\begin{center}
\emph{The last iterate of SGD over $\beta$-smooth quadratics in the interpolation regime with step size $1/\beta$, including Randomized Kaczmarz \citep{strohmer2009randomized} and Randomized Coordinate Descent \citep{leventhal2010randomized}, attains the $O(1/t^{3/4})$ convergence rate.}   
\end{center}
Curiously, we are able to show that the exponent $3/4$ in the rate is \emph{not} optimal, as our analysis can be pushed further to recover the exponent $3/4+0.001$, however we encounter a fundamental barrier around $3/4+0.003$. Furthermore, our results apply more generally than the canonical methods mentioned above, for example including all linear system solvers based on the so-called Sketch-and-Project framework \citep{gower2015randomized}. In particular, we use our techniques to show that a certain variant of Block Kaczmarz \citep{elfving1980block} attains a stronger worst-case last-iterate convergence guarantee than the classical Kaczmarz method.

\paragraph{Overview of our techniques.} To attain our results, we characterize the convergence of  SGD through what we call a \emph{stochastic contraction process} (Definition \ref{d:process}): a sequence of independent random positive semidefinite (psd) contraction operators applied to a high-dimensional vector. We observe that capturing SGD algorithms with greedy step size involves analyzing such a stochastic process in full generality, without imposing any restrictions (such as upper/lower bounds) on the contraction operators (Theorem~\ref{t:main}). 

We analyze the stochastic contraction process by characterizing it via a deterministic matrix recursion (Lemma \ref{l:mat-rec}). Unfolding this recursion reveals that its spectrum exhibits two regimes: one where the eigenvalues oscillate wildly, and one where they follow a smooth trajectory. We carefully unify these two regimes, and reduce them to a single summation bound (Lemma~\ref{l:main-technical}). Establishing this bound proves remarkably delicate (Section~\ref{s:main-technical}): We achieve this by performing a discrete-to-continuous reduction and analyzing the resulting ordinary differential equation (ODE).

\section{Main Result and Its Implications}

In this section, we present our main result, and then describe its implications for Randomized Kaczmarz and other SGD-type algorithms. To highlight the general nature of the claim, we frame it as a characterization of the behavior of a high-dimensional stochastic process defined by a sequence of independent random psd contraction operators with a common mean ($\preceq$ denotes the Loewner~order). 
\begin{definition}\label{d:process}
    Random sequence $\Deltab_0,\Deltab_1,\Deltab_2,...\in\R^n$ is called a \textbf{stochastic contraction process} with average rate $\bar\M$ if it satisfies $\Deltab_{t+1}=(\I-\M_t)\Deltab_t$ for a sequence of independent random $n\times n$ psd matrices $\M_0,\M_1,\M_2,...$ such that $\zero\preceq\M_t\preceq \I$ and $\E\,\M_t=\bar\M$ for all $t\geq 0$.
\end{definition}
Many stochastic algorithms can be cast as instances of such a process, and many existing convergence arguments can be viewed as analyzing this process under additional restrictions on the contractions (such as bounding them away from zero or from identity). Crucially, our result does not impose any such restrictions. Below and throughout, we use the notation $\|\x\|_{\M} = \sqrt{\x^\top\M\x}$. 
\begin{theorem}\label{t:main}
There are absolute constants $C>0$ and $\theta\geq 0.001$ such that any stochastic contraction process $\{\Deltab_t\}_{t\geq 0}$ with average rate $\bar\M$ satisfies
  \begin{align*}
    \E\,\|\Deltab_t\|_{\bar\M}^2 \leq
    C\cdot\frac{\E\,\|\Deltab_0\|^2}{t^{3/4\,+\,\theta}}.
  \end{align*}
\end{theorem}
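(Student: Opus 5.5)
We will reduce the stochastic process to a deterministic matrix recursion. Let $\Sigma_t=\E[\Deltab_t\Deltab_t^\top]$. By independence of $\M_t$ from $\Deltab_t$,
\[
\Sigma_{t+1}=\E[(\I-\M_t)\Sigma_t(\I-\M_t)].
\]
The objective is $\E\|\Deltab_t\|_{\bar\M}^2=\tr(\bar\M\Sigma_t)$. The difficulty is that $\E[\M_t\Sigma\M_t]$ is not determined by $\bar\M$. We do, however, have the two-sided control $\zero\preceq\M_t\Sigma\M_t$ and, since $\zero\preceq\M_t\preceq\I$, the bound $\tr(\M_t\Sigma\M_t)\le\tr(\M_t\Sigma)$ in trace against suitable test matrices.

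The plan is to track a family of scalar potentials $a_t^{(k)}=\tr(\bar\M^k\Sigma_t)$, or more flexibly $\tr(f(\bar\M)\Sigma_t)$ for spectral test functions $f$. From the recursion we get
\[
\tr(f(\bar\M)\Sigma_{t+1})\le \tr(f(\bar\M)\Sigma_t)-2\tr(f(\bar\M)\bar\M\Sigma_t)+\E\tr(f(\bar\M)\M_t\Sigma_t\M_t).
\]
For $f=\I$ the last term is at most $\tr(\bar\M\Sigma_t)$, which gives the classical telescoping $\sum_t \tr(\bar\M\Sigma_t)\le \tr\Sigma_0$. On its own this yields only an averaged $O(1/t)$ rate and a last-iterate rate of $1/\sqrt t$. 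To improve it, we will derive the deterministic recursion of Lemma~\ref{l:mat-rec}: a worst-case upper envelope on $\Sigma_t$, expressed spectrally in the eigenbasis of $\bar\M$, in which the adversarial quadratic term $\E\M_t\Sigma\M_t$ is bounded by the extremal choice consistent with the mean constraint.

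We will then unfold this recursion over $t$ steps. Eigenvalues $\lambda$ of $\bar\M$ that are large relative to $1/t$ contract quickly, with factor roughly $(1-\lambda)^{2t}$, but they feed mass back through the cross term; small eigenvalues evolve smoothly. We will split the spectrum at a threshold $\lambda\sim t^{-\gamma}$. The high part is controlled by the fast contraction together with the telescoped energy bound. The low part is controlled by $\lambda\cdot(\text{mass at }\lambda)$, using that the total injected mass is bounded by the energy sum. Combining the two regimes reduces everything to a single scalar summation bound, Lemma~\ref{l:main-technical}, of the form
\[
\sum_{s<t} w_{t,s}\,\tr(\bar\M\Sigma_s)\lesssim \tr\Sigma_0\, t^{-3/4-\theta}.
\]
Here the weights come from the spectral kernel, and the bound is established by bootstrapping a hypothesized decay $t^{-\alpha}$ for $\tr(\bar\M\Sigma_s)$ and showing that it improves itself.

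The main obstacle is proving this summation bound with exponent strictly beyond $3/4$. A naive bootstrap closes only at $1/2$. We will pass to a continuous limit, replacing the discrete kernel by $e^{-2\lambda(t-s)}$ and the sums by integrals. This turns the self-consistency condition into an ODE or integral inequality for the worst-case decay profile. We will analyze its fixed-point exponent, check numerically and then rigorously that the self-consistent exponent exceeds $3/4+0.001$, and finally control the discretization error between the discrete recursion and the ODE with explicit constants.
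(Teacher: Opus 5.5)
Your outline cites the right two lemmas, but the mechanisms you describe for connecting them would not work, and one regime is missing entirely.

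\textbf{First gap: the recursion runs on the wrong side.} The recursion of Lemma~\ref{l:mat-rec} cannot be an envelope on $\Sigma_t$ that is diagonal in the eigenbasis of $\bar\M$. Such an $S_t\succeq\Sigma_t$ must dominate $\Sigma_0=\Deltab_0\Deltab_0^\top$, which does not commute with $\bar\M$. Take $\bar\M=\diag(0,\tfrac14)$ and $\Deltab_0$ with nonzero first coordinate. The first eigenvalue of $S_t$ never moves, so $\|S_t\|$ never decays. The feedback $\|S_t\|\bar\M$ then drives the second eigenvalue to half the first, so $\tr(\bar\M S_t)\not\to 0$, while the true objective decays geometrically. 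The recursion has to run on the test-matrix side:
\begin{itemize}
\item Pull $\bar\M$ back through the maps $X\mapsto\E[(\I-\M_s)X(\I-\M_s)]$ for $s=t-1,\dots,0$.
\item At each step use $\E\,\M_sX\M_s\preceq\|X\|\,\E\,\M_s^2\preceq\|X\|\bar\M$.
\item Use that $X=\Nb_i$ commutes with $\bar\M$ (it starts at $\bar\M$), so the cross terms collapse to $\Nb_i(\I-2\bar\M)$.
\end{itemize}
This gives $\E\|\Deltab_t\|_{\bar\M}^2\le\|\Nb_t\|\,\E\|\Deltab_0\|^2$. Your inequality for $\tr(f(\bar\M)\Sigma_{t+1})$ is exactly this step if you bound its last term by $\|f\|\tr(\bar\M\Sigma_t)$ and chain backward from $f=\bar\M$, updating $f$ each time.

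Consequently, the quantity to bootstrap is $\|\Nb_s\|$, a worst case over initial directions, not $\tr(\bar\M\Sigma_s)$ for one run. No spectral split at $t^{-\gamma}$ and no telescoped energy sum is needed. Unrolling gives $\lambda_{k,t}=\rho_k(1-2\rho_k)^t+\rho_k\sum_{i=1}^t(1-2\rho_k)^{t-i}\|\Nb_{i-1}\|$. The induction $\|\Nb_{i-1}\|\le Ci^{-\alpha}$ then closes once the inequality of Lemma~\ref{l:main-technical} holds uniformly in $\rho\in(0,1/2]$. After the comparison $t^\alpha B_t(\rho,\alpha)\le L_\alpha\bigl(\tfrac{t}{2}(-\log(1-2\rho))\bigr)$, this reduces to $\sup_{\theta>0}L_\alpha(\theta)<1$ with some margin. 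The paper verifies this at $\alpha=0.751$ via the ODE $L_\alpha'=1-(2-\alpha/\theta)L_\alpha$ and a one-point check. That is the concrete content of your ``fixed-point exponent''.

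\textbf{Second gap: eigenvalues above $1/2$.} You assert that large eigenvalues contract quickly. In the recursion of Lemma~\ref{l:mat-rec} the coefficient is $1-2\rho_k$, not $(1-\rho_k)^2$. For $\rho_k>1/2$ it is negative, and it tends to $-1$ as $\rho_k\to1$; for $\rho_k=1$, $\lambda_{k,t+1}=\|\Nb_t\|-\lambda_{k,t}$. These eigenvalues therefore oscillate and are refilled to the level of $\|\Nb_t\|$ rather than damped. The unrolled sum has alternating signs, so inductive upper bounds cannot be inserted termwise, and a continuous kernel $e^{-2\lambda(t-s)}$ is meaningless there.

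This regime needs a separate idea. The paper uses a parity-dependent hypothesis: $\|\Nb_t\|\le C/(t+1)^\alpha$ for even $t$ and $C/(t+2)^\alpha$ for odd $t$.
\begin{itemize}
\item For even $t$ it uses the monotonicity $\lambda_{k,t}\le\lambda_{\max,t-1}$.
\item For odd $t$ it expands two steps and drops the term $\rho_k(2\rho_k-1)(\lambda_{\max,t-1}-\lambda_{\max,t-2})\le 0$.
\item It then substitutes $\beta_k=2\rho_k(1-\rho_k)\in[0,1/2]$, using $(1-2\rho_k)^2=1-2\beta_k$.
\end{itemize}
The odd subsequence then obeys a recursion of the $\rho\le 1/2$ form with constant $C/2^\alpha$, and Lemma~\ref{l:main-technical} applies again.
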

\begin{remark}
        The $O(1/t^{3/4\,+\,\theta})$ convergence rate appears to be the best rate attainable via our analysis framework (without introducing further restrictions on $\M_t$) up to $\theta\approx 0.003$ (see Section~\ref{s:optimality}).
\end{remark}

\begin{remark}
    A key feature of Theorem \ref{t:main} is that it allows both $\M_t$ and $\bar\M$ to vary in the full range between zero and identity, which enables it to capture the canonical versions of Randomized Kaczmarz and Randomized Coordinate Descent on worst-case inputs. Restricting $\M_t$ (or its expectation) to a smaller range, e.g., $c_1\I\preceq\M_t\preceq c_2\I$ where either $c_1>0$ or $c_2<1$, leads to simpler results (and potentially faster rates) that are well-known in the literature.
\end{remark}
\subsection{Implications for SGD with Greedy Step Size}
Theorem \ref{t:main} can be interpreted as a convergence guarantee for a stochastic gradient algorithm running on a quadratic function in the interpolation regime. To see this, consider minimizing $f(\x) = \frac1m \sum_{i=1}^m \psi_i(\x)$ over $\x\in\R^n$,
where each $\psi_i$ is a $\beta$-smooth quadratic, i.e., 
$\|\nabla \psi_i(\x) - \nabla\psi_i(\y)\|\leq \beta\|\x-\y\|$ for all $\x,\y\in\R^n$. Moreover, suppose that there exists $\x^*\in\R^d$ that simultaneously minimizes all $\psi_i$ (this is referred to as the \emph{interpolation regime}). 
As a concrete example, we can think of a regression problem defined by $m$ vectors $\a_1,...,\a_m\in\R^n$ and response values $b_1,...,b_m\in\R$, and let $\psi_i(\x) = \frac12(\a_i^\top\x - b_i)^2$ for each $i$. Here, $\beta = \max_i\|\a_i\|^2$, and the interpolation regime occurs when there is a linear model $\a_i\rightarrow \a_i^\top\x^*$ that perfectly fits all of the response values $b_i$, in which case $\psi_i(\x) = \frac12(\a_i^\top(\x-\x^*))^2$ and  $f(\x)=\frac1{2m}\|\x-\x^*\|_{\A^\top\!\A}^2$.

The standard SGD algorithm with fixed step size $\eta$ initialized at $\x_0\in\R^n$ proceeds by randomly sampling one component function at a time and taking a corresponding gradient descent step:
\begin{align*}
    \x_{t+1} = \x_t - \eta \nabla\psi_{i_t}(\x_t),\qquad i_t\sim \{1,...,m\}.
\end{align*}
This can be mapped to a stochastic contraction process via $\Deltab_t = \x_t - \x^*$. Indeed, since $\psi_{i_t}$ is a quadratic minimized by $\x^*$, using the Taylor expansion we have $\nabla\psi_{i_t}(\x_t) = \nabla^2\psi_{i_t}(\x^*) (\x_t-\x^*)$, so by choosing  $\M_t = \eta\nabla^2\psi_{i_t}(\x^*)$ we get $\Deltab_{t+1}=(\I-\M_t)\Deltab_t$. Using $\beta$-smoothness, we have $\M_t\preceq\eta\beta\I$ so the condition $\zero\preceq\M_t\preceq\I$ is satisfied for any $0<\eta\leq 1/\beta$. Since $f(\x_t) - f(\x^*)=\frac1{2\eta}\|\x_t-\x^*\|_{\bar\M}^2$, where $\bar\M = \E\,\M_t = \eta\nabla^2 f(\x^*)$, we obtain the following corollary of Theorem \ref{t:main}.
\begin{corollary}\label{c:sgd}
    Minimizing an average of $\beta$-smooth quadratic functions in the interpolation regime, SGD with step size $1/\beta$ satisfies:
\begin{align*}
    \E\big[f(\x_t) - f(\x^*)\big] 
    = O\bigg(\frac{\beta\|\x_0-\x^*\|^2}{t^{3/4\,+\,\theta}}\bigg).
\end{align*}
\end{corollary}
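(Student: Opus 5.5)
The corollary follows from Theorem~\ref{t:main} once SGD is written as a stochastic contraction process. The argument is the reduction sketched above, made precise, followed by a rescaling of the objective gap.

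\textbf{Step 1: the reduction.} Set $\eta=1/\beta$ and $\Deltab_t=\x_t-\x^*$. Each $\psi_i$ is a quadratic minimized at $\x^*$, so $\nabla\psi_i(\x^*)=\zero$, and exact Taylor expansion of the affine map $\nabla\psi_i$ gives $\nabla\psi_i(\x)=\H_i(\x-\x^*)$ with $\H_i=\nabla^2\psi_i$ constant. Define $\M_t=\eta\H_{i_t}$. Then
\[
\Deltab_{t+1}=\Deltab_t-\eta\H_{i_t}\Deltab_t=(\I-\M_t)\Deltab_t .
\]

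\textbf{Step 2: the conditions of Definition~\ref{d:process}.} I would check three things.
\begin{itemize}
\item Positive semidefiniteness. Each $\psi_i$ has a minimizer, so it is a convex quadratic and $\H_i\succeq\zero$. (A quadratic with an indefinite Hessian is unbounded below, so it cannot attain a minimum.)
\item Upper bound. $\beta$-smoothness gives $\|\H_i\|\le\beta$, hence $\M_t\preceq\eta\beta\I=\I$.
\item Independence and common mean. The indices $i_t$ are i.i.d.\ uniform, so the $\M_t$ are independent with common mean $\bar\M=\frac{\eta}{m}\sum_i\H_i=\eta\nabla^2 f$.
\end{itemize}

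\textbf{Step 3: expressing the objective gap.} Since $f$ is the average of the $\psi_i$, it is itself a quadratic minimized at $\x^*$ with Hessian $\nabla^2 f=\bar\M/\eta$. Hence
\[
f(\x_t)-f(\x^*)=\tfrac12\|\Deltab_t\|^2_{\nabla^2 f}=\tfrac{\beta}{2}\|\Deltab_t\|_{\bar\M}^2 .
\]

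\textbf{Step 4: applying the theorem.} Take expectations and apply Theorem~\ref{t:main} with the deterministic initial vector $\Deltab_0=\x_0-\x^*$. This gives the bound $\frac{\beta}{2}\cdot C\|\x_0-\x^*\|^2/t^{3/4+\theta}$, which is the claimed rate.

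The only step needing care is Step 2's justification that $\H_i\succeq\zero$, which relies on each $\psi_i$ actually attaining its minimum at $\x^*$. Everything else is direct substitution, so I expect no real obstacle.
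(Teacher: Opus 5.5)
Your proposal is correct and follows the paper's own argument: you cast SGD as a stochastic contraction process with $\M_t=\eta\nabla^2\psi_{i_t}$, use $\beta$-smoothness to get $\M_t\preceq\I$, and write $f(\x_t)-f(\x^*)=\frac{\beta}{2}\|\Deltab_t\|_{\bar\M}^2$ before invoking Theorem~\ref{t:main}. Your explicit argument that each Hessian is psd, because a quadratic attaining its minimum cannot have a negative eigenvalue, fills in a step the paper leaves implicit; this covers the negative-definite case too, so ``indefinite'' is slightly too narrow a word.
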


Here, the fact that our result allows the choice of step size $\eta=1/\beta$ is crucial. For standard GD on a $\beta$-smooth function, $1/\beta$ is the canonical choice of step size \citep{bertsekas2016}. However, when dealing with stochastic gradients, it is common to use either a much smaller fixed step size or a decaying step size schedule in order to compensate for the noise in the convergence analysis \citep{varre2021last,liu2023revisiting}. Yet, in the interpolation regime, the canonical choice of $\eta=1/\beta$ (i.e., the greedy step size) is often empirically the most effective one. Corollary~\ref{c:sgd} continues a recent line of works aiming to close the theory-practice gap in our understanding of SGD with greedy step sizes \citep{evron2025continual,attia2025fast}, improving the rate from $O(1/t^{1/2})$ to $O(1/t^{3/4\,+\,\theta})$.

\subsection{Key Example: Randomized Kaczmarz}

Perhaps the most important application of Theorem \ref{t:main} is in the analysis of randomized iterative methods such as the Kaczmarz algorithm for solving consistent systems of linear equations. Here, we are given an $m\times n$ matrix $\A$ and an $m$-dimensional vector $\b$ such that there exists $\x^*$ satisfying $\A\x^*=\b$. Given an iterate $\x_t$, Kaczmarz chooses one of the $m$ linear equations, $\a_{i_t}^\top\x=b_{i_t}$ (where $\a_i^\top$ denotes the $i$th row of $\A$), and computes $\x_{t+1}$ as the projection of $\x_t$ onto the subspace of the solutions of that equation:
\begin{align*}
    \x_{t+1} = \x_t - \frac{\a_{i_t}^\top\x_t - b_{i_t}}{\|\a_{i_t}\|^2}\a_{i_t}.
\end{align*}
The Kaczmarz algorithm can be viewed as a type of weighted SGD minimizing the $\|\A\x-\b\|^2$ objective \citep{needell2014stochastic}, and thus it analogously maps to Definition \ref{d:process} by setting: 
\begin{align}
    \Deltab_t = \x_t-\x^*\qquad\text{and}\qquad\M_t = \frac{\a_{i_t}\a_{i_t}^\top}{\|\a_{i_t}\|^2},\label{eq:rk-process}
\end{align}
where note that $\M_t$ is simply the rank-1 projection onto the span of $\a_{i_t}$. Naturally, how we select the equation indices $i_t$ has a great impact on the convergence rate of the Kaczmarz algorithm, and \cite{strohmer2009randomized} showed that if we sample $i_t$ with probability proportional to the squared row norm, $\Pr[i_t = i] \propto \|\a_{i}\|^2$, then this Randomized Kaczmarz algorithm will converge to the optimum $\x^*$ at the rate that depends only on the condition number of $\A$, and it requires fewer passes over the matrix than full gradient descent. However, they provide no convergence guarantee that is free of condition number dependence, and despite extensive literature on this subject, the last-iterate convergence rate of Randomized Kaczmarz on worst-case inputs remains unresolved.

Mapping Randomized Kaczmarz to Definition \ref{d:process}, we observe that $\E[\M_t] = \A^\top\A/\|\A\|_F^2$ and $\|\Deltab_t\|_{\bar\M}^2 = \|\A\x_t-\b\|^2/\|\A\|_F^2$, where $\|\A\|_F=\sqrt{\tr(\A^\top\A)}$ denotes the Frobenius norm of $\A$. This yields the following corollary.
\begin{corollary}\label{c:rk}
    For a linear system $\A\x=\b$ with solution $\x^*$, Randomized Kaczmarz satisfies:
    \begin{align*}
        \E\,\|\A\x_t-\b\|^2 = O\bigg(\frac{\|\A\|_F^2\|\x_0-\x^*\|^2}{t^{3/4\,+\,\theta}}\bigg).
    \end{align*}
\end{corollary}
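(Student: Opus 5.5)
The plan is to derive Corollary~\ref{c:rk} directly from Theorem~\ref{t:main}. First I check that Randomized Kaczmarz, written as in \eqref{eq:rk-process}, is a stochastic contraction process in the sense of Definition~\ref{d:process}. Set $\Deltab_t=\x_t-\x^*$. Since $b_{i_t}=\a_{i_t}^\top\x^*$, the update gives
\[
\Deltab_{t+1}=\Deltab_t-\frac{\a_{i_t}\a_{i_t}^\top}{\|\a_{i_t}\|^2}\Deltab_t=(\I-\M_t)\Deltab_t .
\]
Here $\M_t$ is an orthogonal rank-one projection, so $\zero\preceq\M_t\preceq\I$. The indices $i_t$ are drawn i.i.d., hence the $\M_t$ are independent with a common mean:
\[
\bar\M=\sum_i \frac{\|\a_i\|^2}{\|\A\|_F^2}\cdot\frac{\a_i\a_i^\top}{\|\a_i\|^2}=\frac{\A^\top\A}{\|\A\|_F^2}.
\]
Zero rows can be discarded, since they have probability zero under the sampling rule. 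One small point: $\M_t$ depends on $i_t$ only, and $\Deltab_t$ depends on $i_0,\dots,i_{t-1}$, so the independence hypothesis concerns the $\M_t$ alone, which is exactly what we have.

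Next I compute the norm appearing in Theorem~\ref{t:main}:
\[
\|\Deltab_t\|_{\bar\M}^2=\frac{(\x_t-\x^*)^\top\A^\top\A(\x_t-\x^*)}{\|\A\|_F^2}=\frac{\|\A\x_t-\b\|^2}{\|\A\|_F^2},
\]
using $\A\x^*=\b$. The initial vector $\Deltab_0=\x_0-\x^*$ is deterministic, so $\E\|\Deltab_0\|^2=\|\x_0-\x^*\|^2$.

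Finally, Theorem~\ref{t:main} gives
\[
\E\|\A\x_t-\b\|^2/\|\A\|_F^2\le C\,\|\x_0-\x^*\|^2/t^{3/4+\theta}.
\]
Multiplying through by $\|\A\|_F^2$ yields the stated $O(\cdot)$ bound, with the same absolute constants $C$ and $\theta$.

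There is no real obstacle here. The only step needing care is confirming that the hypotheses of Definition~\ref{d:process} hold exactly, in particular $\M_t\preceq\I$, which is where the greedy (projection) step size matters. All of the analytical difficulty lives in Theorem~\ref{t:main}, which we take as given.
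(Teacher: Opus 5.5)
Your proposal is correct and follows the paper's route exactly: cast Randomized Kaczmarz as a stochastic contraction process via \eqref{eq:rk-process}, compute $\bar\M=\A^\top\A/\|\A\|_F^2$ and $\|\Deltab_t\|_{\bar\M}^2=\|\A\x_t-\b\|^2/\|\A\|_F^2$, then invoke Theorem~\ref{t:main}. The extra checks you include (discarding zero rows, independence of the $\M_t$ only, deterministic $\Deltab_0$) are details the paper leaves implicit.
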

We note that, just like in existing guarantees for weighted SGD \citep{needell2014stochastic}, the use of importance sampling as opposed to uniform sampling allows us to replace the smoothness parameter $\beta=\max_i\|\a_i\|^2$ with an average smoothness $\bar\beta=\frac1m\sum_{i=1}^m\|\a_i\|^2 = \frac1m\|\A\|_F^2\leq \beta$, which is why the above bound has $\|\A\|_F^2$ instead of $m\cdot \max\|\a_i\|^2$ in the numerator. Here, again, our result provides an improvement in the last-iterate convergence rate of Randomized Kaczmarz from the previous $O(1/t^{1/2})$ attained by \cite{attia2025fast} to $O(1/t^{3/4\,+\,\theta})$.

\subsection{Further Implications for Sketch-and-Project Algorithms}

Thanks to its generality, Theorem \ref{t:main} covers a number of other randomized iterative methods for linear systems, including all of those that fall under the framework of Sketch-and-Project, developed by \cite{gower2015randomized}, which in addition to Randomized Kaczmarz also includes Block Kaczmarz \citep{elfving1980block} and Randomized Coordinate Descent \citep{leventhal2010randomized}, among many others. Here, the update is defined via a random $b\times m$ matrix $\S_t$ (the sketching operator) and an $n\times n$ positive definite matrix~$\B$ (which determines the projection norm):
\begin{align}
    \x_{t+1} = \argmin_{\x\in\R^n} \|\x - \x_t\|_\B\quad\text{subject to}\quad \S_t\A\x = \S_t\b.\label{eq:sp}
\end{align}
Under this framework, Randomized Kaczmarz is recovered by letting $\S_t=\e_{i_t}^\top\in\R^{1\times m}$ be a random standard basis row-vector and setting $\B=\I$. Stacking $b>1$ random standard basis row-vectors to produce $\S_t\in\R^{b\times m}$, we recover the Block Kaczmarz method. When $\A$ is a positive definite matrix, then we can consider a different scheme by letting $\B=\A$, which yields the Randomized Coordinate Descent method, proposed by \cite{leventhal2010randomized}. 

To analyze the last-iterate convergence of all of these methods together, we map the general sketch-and-project update \eqref{eq:sp} to a stochastic contraction process as follows:
\begin{align*}
    \Deltab_t = \B^{1/2}(\x_t-\x^*)\qquad\text{and}\qquad \M_t = (\S_t\A\B^{-1/2})^\dagger\S_t\A\B^{-1/2}. 
\end{align*}
Note that here again the matrices $\M_t$ are orthogonal projections, which means that they are between zero and identity, but cannot be bounded away from either, thus requiring the careful convergence analysis framework from Theorem \ref{t:main}. To complete the analysis, observe that 
    $\|\A\x_t- \b\| = \|\Deltab_t\|_{\B^{-1/2}\A^\top\A\B^{-1/2}}$,
so in order to analyze convergence in the residual norm it suffices to bound the matrix $\B^{-1/2}\A^\top\A\B^{-1/2}$ in terms of $\bar\M=\E\,\M_t$ in the Loewner ordering. 

For Randomized Coordinate Descent, this turns out to be straightforward, given the right choice of sampling probabilities. The distribution proposed by \cite{leventhal2010randomized} samples proportionally to the diagonal entries of $\A$, namely $\Pr[\S_t = \e_{i_t}^\top]\propto \A_{i_t,i_t}$. After a simple calculation, this yields $\E\,\M_t = \A/\tr(\A)$. Since $\B^{-1/2}\A^\top\A\B^{-1/2}=\A$, we obtain the following corollary.\footnote{Even though the above discussion assumes that $\B$ is invertible, Corollary \ref{c:rcd} easily extends to any psd linear system.}
\begin{corollary}\label{c:rcd}
    For a psd system $\A\x=\b$ with solution $\x^*$, Randomized Coordinate Descent satisfies:
    \begin{align*}
        \E\,\|\A\x_t-\b\|^2 = O\bigg(\frac{\tr(\A)\|\x_0-\x^*\|_{\A}^2}{t^{3/4\,+\,\theta}}\bigg).
    \end{align*}
\end{corollary}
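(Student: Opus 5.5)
The plan is to derive Corollary~\ref{c:rcd} directly from Theorem~\ref{t:main}. Assume first that $\A$ is positive definite, and set $\B=\A$. Define
\[
\Deltab_t=\A^{1/2}(\x_t-\x^*), \qquad \M_t=(\S_t\A^{1/2})^\dagger\S_t\A^{1/2}, \qquad \S_t=\e_{i_t}^\top .
\]
First I will check that the sketch-and-project update \eqref{eq:sp} gives $\Deltab_{t+1}=(\I-\M_t)\Deltab_t$. Change variables to $\y=\A^{1/2}\x$. The update then becomes the Euclidean projection of $\A^{1/2}\x_t$ onto the affine set $\{\y:\e_{i_t}^\top\A^{1/2}\y=b_{i_t}\}$, and this set contains $\A^{1/2}\x^*$. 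Subtracting $\A^{1/2}\x^*$ shows that $\Deltab_{t+1}$ is the orthogonal projection of $\Deltab_t$ onto the complement of the span of $\u_i=\A^{1/2}\e_i$. So $\M_t=\u_{i_t}\u_{i_t}^\top/\|\u_{i_t}\|^2$, which is a rank-one orthogonal projection. Hence $\zero\preceq\M_t\preceq\I$. The indices $i_t$ are drawn i.i.d., so the $\M_t$ are independent with a common mean. This makes $\{\Deltab_t\}$ a stochastic contraction process in the sense of Definition~\ref{d:process}.

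Next I will compute the mean. Note $\|\u_i\|^2=\e_i^\top\A\e_i=\A_{ii}$, and $\Pr[i_t=i]=\A_{ii}/\tr(\A)$. Therefore
\[
\bar\M=\sum_i\frac{\A_{ii}}{\tr(\A)}\cdot\frac{\A^{1/2}\e_i\e_i^\top\A^{1/2}}{\A_{ii}}=\frac{\A^{1/2}\I\A^{1/2}}{\tr(\A)}=\frac{\A}{\tr(\A)} .
\]
Indices with $\A_{ii}=0$ have zero probability and can be ignored. Then
\[
\|\A\x_t-\b\|^2=\|\A(\x_t-\x^*)\|^2=\Deltab_t^\top\A\Deltab_t=\tr(\A)\,\|\Deltab_t\|_{\bar\M}^2 .
\]
Theorem~\ref{t:main} gives $\E\|\Deltab_t\|_{\bar\M}^2\le C\|\Deltab_0\|^2/t^{3/4+\theta}$, and $\|\Deltab_0\|^2=\|\x_0-\x^*\|_\A^2$ is deterministic. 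Multiplying by $\tr(\A)$ yields the claimed bound.

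The only real obstacle is the singular psd case, which the footnote mentions. Here $\B=\A$ is not invertible, so I will not use $\A^{-1/2}$; the definitions above use only $\A^{1/2}$. The update \eqref{eq:sp} with $\|\cdot\|_\A$ is then a seminorm minimization and has a non-unique minimizer. I will fix the standard choice
\[
\x_{t+1}=\x_t-\frac{\e_{i_t}^\top(\A\x_t-\b)}{\A_{i_t i_t}}\,\e_{i_t},
\]
which is the coordinate descent step. Since $\b=\A\x^*$, we get
\[
\A^{1/2}(\x_{t+1}-\x^*)=\Big(\I-\frac{\u_{i_t}\u_{i_t}^\top}{\|\u_{i_t}\|^2}\Big)\A^{1/2}(\x_t-\x^*) .
\]
This is the same recursion, verified by direct computation. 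It does not depend on any choice of minimizer, so the argument above goes through unchanged.
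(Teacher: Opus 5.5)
Your proposal is correct and follows the paper's argument: you use the same map $\Deltab_t=\A^{1/2}(\x_t-\x^*)$ with rank-one projections $\M_t=\A^{1/2}\e_{i_t}\e_{i_t}^\top\A^{1/2}/\A_{i_t i_t}$, compute $\bar\M=\A/\tr(\A)$ under diagonal-proportional sampling, and feed the identity $\|\A\x_t-\b\|^2=\tr(\A)\|\Deltab_t\|_{\bar\M}^2$ into Theorem~\ref{t:main}. Your explicit treatment of the singular psd case (avoiding $\A^{-1/2}$ and using the coordinate step directly) fills in what the paper leaves to a footnote; the one point worth stating there is that $\A_{ii}=0$ forces $\A^{1/2}\e_i=\zero$, which is why dropping those indices still gives $\bar\M=\A/\tr(\A)$.
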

For Block Kaczmarz, computing the expectation $\E\,\M_t$ explicitly is less straightforward. However, since the projection $\M_t$ only gets larger (in Loewner ordering) when introducing multiple equations, it follows that Block Kaczmarz inherits the guarantee of the corresponding single-row Kaczmarz, such as the one in Corollary~\ref{c:rk} (and same is true for block versions of coordinate descent). Nevertheless, this feels somewhat unsatisfying, since we would hope that the convergence guarantee improves as we increase the block size. In fact, \cite{derezinski2024solving} showed such an improved convergence guarantee when the problem is parameterized by an appropriate notion of condition number and as long as we preprocess the linear system with the Randomized Hadamard Transform \citep[RHT,][]{ailon2009fast}. Here, RHT refers to a random orthogonal transformation\footnote{$\Q=\frac1{\sqrt n}\H\D$, where $\H$ is the $m\times m$ Hadamard matrix and $\D$ is diagonal with i.i.d. Rademacher entries.}
$\Q\in\R^{m\times m}$ which can be applied to $\A$ and $\b$ in $O(mn\log m)$ time and has the property that the transformed system $\Q\A\x=\Q\b$ is equivalent to the original one, but its equations have roughly equal importance. This allows us to use uniform sampling in the Block Kaczmarz algorithm. In the following corollary, we show that when the block size is proportional to the stable rank of $\A$, such RHT-preprocessed Block Kaczmarz satisfies a stronger convergence bound than single-row Kaczmarz, replacing the Frobenius norm with the spectral norm, thus matching full gradient descent up to the convergence exponent.
\begin{corollary}
    Given a system $\A\x=\b$ with solution $\x^*$ and stable rank $r=\|\A\|_F^2/\|\A\|^2$, after preprocessing with the RHT, Block Kaczmarz with block size $b\geq O(r\log r +\log mt)$ satisfies:
    \begin{align*}
        \E\,\|\A\x_t-\b\|^2 = O\bigg(\frac{\|\A\|^2\|\x_0-\x^*\|^2}{t^{3/4\,+\,\theta}}\bigg).
    \end{align*}
\end{corollary}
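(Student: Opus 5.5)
The approach is to reduce the statement to Theorem~\ref{t:main}, applied to the sketch-and-project mapping with $\B=\I$. Let $\tilde\A=\Q\A$, $\tilde\b=\Q\b$, and let $\S_t$ stack $b$ uniformly sampled standard basis rows. Then $\Deltab_t=\x_t-\x^*$, and $\M_t=(\S_t\tilde\A)^\dagger\S_t\tilde\A$ is the orthogonal projection onto the row span of the sampled block. Since $\Q$ is orthogonal, $\|\A\x_t-\b\|^2=\|\tilde\A\x_t-\tilde\b\|^2=\|\Deltab_t\|_{\A^\top\A}^2$. It therefore suffices to prove the Loewner bound
\[
\A^\top\A\preceq c\,\|\A\|^2\,\bar\M
\]
for an absolute constant $c$, where $\bar\M$ is the conditional expectation of $\M_t$ given $\Q$. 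Then $\E\|\A\x_t-\b\|^2\le c\|\A\|^2\,\E\|\Deltab_t\|_{\bar\M}^2$, and Theorem~\ref{t:main} (applied conditionally on $\Q$, since the $\M_t$ are i.i.d.\ given $\Q$) gives the claim.

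To prove the Loewner bound, I use a lower bound on projections. For any psd matrix $\mathbf{G}$ with range inside the row span of $\S_t\tilde\A$ and $\mathbf{G}\preceq\I$, we have $\M_t\succeq \mathbf{G}$. The natural choice is
\[
\mathbf{G}=\tilde\A^\top\S_t^\top\S_t\tilde\A\,/\,\lambda,\qquad \lambda\ge\|\S_t\tilde\A\|^2 .
\]
Under the RHT, with high probability every row of $\tilde\A$ satisfies $\|\tilde\a_i\|^2\lesssim \|\A\|_F^2\log(m/\delta)/m$. A matrix Chernoff bound (rank-one summands of size $\lesssim\|\A\|^2 r\log(m)/m$ each, with mean $b\,\A^\top\A/m$) then gives
\[
\|\S_t\tilde\A\|^2\le C\,\|\A\|^2\,\frac{b}{m}\Big(1+\frac{r\log(m/\delta)}{b}\Big).
\]
With $b\gtrsim r\log r+\log(mt)$ this becomes $\lambda\approx C\|\A\|^2 b/m$, with failure probability $\delta$ per step.

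On the good event $E_t$ (row norms flat and $\|\S_t\tilde\A\|^2\le\lambda$), I take $\M_t\succeq \tilde\A^\top\S_t^\top\S_t\tilde\A/\lambda$; off this event I use only $\M_t\succeq\zero$. Taking expectations,
\[
\bar\M\succeq \frac{\E[\tilde\A^\top\S_t^\top\S_t\tilde\A\,\mathbf 1_{E_t}]}{\lambda}.
\]
Since $\E[\tilde\A^\top\S_t^\top\S_t\tilde\A]=b\A^\top\A/m$, it remains to show that the contribution removed by $\mathbf 1_{E_t^c}$ is at most half of this.

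This truncation is the step I expect to be the main obstacle. A Loewner-order argument does not allow me to simply discard a small-probability event, so I need a multiplicative bound
\[
\E[\tilde\A^\top\S_t^\top\S_t\tilde\A\,\mathbf 1_{E_t^c}]\preceq \Pr(E_t^c)^{1/2}\cdot\mathrm{poly}\cdot\A^\top\A.
\]
I would try to obtain it by writing each summand $\tilde\a_i\tilde\a_i^\top$, applying Cauchy--Schwarz in the quadratic form $\mathbf v^\top(\cdot)\mathbf v$, and using the crude bound $\|\S_t\tilde\A\mathbf v\|^2\le b\,\mathbf v^\top\A^\top\A\mathbf v$ (valid once the row norms are flat). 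The $\log(mt)$ term in $b$ makes $\delta$ small enough to absorb the polynomial factors.

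The failure of the RHT itself (non-flat rows) has probability $\ll 1/t$. It can be handled either by conditioning on $\Q$ in the statement, or by noting that Kaczmarz never increases $\|\Deltab_t\|$, so the contribution of that event is at most $\delta\|\A\|^2\|\x_0-\x^*\|^2$, which is dominated by the main term. Combining all of this gives $\A^\top\A\preceq 2C\|\A\|^2\bar\M$, and the corollary follows from Theorem~\ref{t:main}.
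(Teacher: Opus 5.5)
The reduction to a Loewner bound on $\bar\M$ (conditional on $\Q$) and your treatment of the RHT failure event are fine. There is, however, a genuine gap in the block size: as proposed, your argument cannot deliver $b\ge O(r\log r+\log mt)$.

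Your lower bound $\M_t\succeq\tilde{\A}^\top\S_t^\top\S_t\tilde{\A}/\lambda$ is only useful if the sampled block concentrates in spectral norm, i.e.\ $\|\S_t\tilde{\A}\|^2\le C\|\A\|^2b/m$. The claim that ``with $b$ of order $r\log r+\log(mt)$ this becomes $\lambda\approx C\|\A\|^2b/m$'' does not follow even from your own display. The factor $1+r\log(m/\delta)/b$ is bounded only when $b\ge c\,r\log(m/\delta)$. For $b$ of order $r\log r+\log(mt)$ it can be of order $\log m/\log r$, e.g.\ for $r=\log m$ and $t\le m$.

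The same issue breaks the truncation step. In the matrix Chernoff bound you invoke, the failure exponent is governed by $\mu_{\max}/R=b\|\A\|^2/(mR)$. Since $R=\max_i\|\tilde{\a}_i\|^2\ge\|\A\|_F^2/m$, this exponent is at most of order $b/r$. An additive $\log(mt)$ in $b$ therefore buys only about $\log(mt)/r$ in the exponent. It cannot make $\Pr(E_t^c)$ polynomially small in $m$ unless $b\ge c\,r\log m$.

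Your truncation needs exactly that. The bound $\|\S_t\tilde{\A}\mathbf v\|^2\le b\|\A\mathbf v\|^2$ holds for any rows, not just flat ones. It must be compared with the target $\frac{b}{2m}\|\A\mathbf v\|^2$. So you need $\Pr(E_t^c)=O(1/m)$ directly, or $O(1/m^2)$ after your Cauchy--Schwarz step.

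Allowing an additive error instead removes the $1/m$ requirement. But the obvious bound on the deficit, $\E[\|\S_t\tilde{\A}\|^2\mathbf 1_{E_t^c}]/\lambda$, is at least $\Pr(E_t^c)$. You would then still need $\Pr(E_t^c)=O(t^{-3/4})$, i.e.\ $b$ of order $r\log t$ with these tools. So this route proves, at best, the corollary with a block size in which $r$ multiplies $\log(mt)$ rather than adding to it.

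The paper never needs the sketch to concentrate in spectral norm. It cites Lemma~14 of \cite{derezinski2024fine}: after RHT, for $b\ge O(\log(m/\delta))$, the expected projection satisfies $\bar\M\succeq\A^\top(\A\A^\top+\lambda\I)^{-1}\A-\delta\I$ with $\lambda=O(\frac{\log b}{b}\|\A\|_F^2)$. Taking $b\ge O(r\log r)$ gives $\lambda\le\|\A\|^2$. Hence $\A^\top\A\preceq\|\A\A^\top+\lambda\I\|\,(\bar\M+\delta\I)\preceq2\|\A\|^2(\bar\M+\delta\I)$.

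The additive term with $\delta=1/t$ is absorbed via $\E\|\x_t-\x^*\|^2\le\|\x_0-\x^*\|^2$, the same non-expansiveness you already use for the RHT failure event. The failure probability thus enters only through the additive condition $b\ge O(\log(mt))$, which is why the block size is a sum rather than a product. What your proposal is missing is this direct bound on $\E\,\M_t$; a pointwise Gram-matrix lower bound on $\M_t$ followed by truncation is too lossy to reach the stated block size.
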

\begin{proof}
    Lemma 14 of \cite{derezinski2024fine} shows that after RHT preprocessing, $\A\leftarrow \Q\A$, the projection matrix $\M_t=(\S_t\A)^\dagger\S_t\A$ for Block Kaczmarz with $b\geq O(\log (m/\delta))$ satisfies:
    \begin{align*}
        \bar\M = \E\,\M_t \succeq \A^\top(\A\A^\top+\lambda\I)^{-1}\A - \delta\I,
    \end{align*}
    where $\lambda = O\big(\frac{\log b}{b}\|\A\|_F^2\big)$. Choosing $\delta = 1/t$ and $b\geq O(r\log r + \log mt)$ so that $\lambda\leq \|\A\|^2$,
    \begin{align*}
        \E\,\|\A\x_t-\b\|^2 
        &=\E\,\|\x_t-\x^*\|_{\A^\top\A}^2
      \leq\|\A\A^\top+\lambda\I\|\cdot \E\,\|\x_t-\x^*\|_{\bar\M+\delta\I}^2
        \\
        &\leq 2\|\A\|^2\cdot \Big(\E\,\|\x_t-\x^*\|_{\bar\M}^2 + \delta\cdot\E\,\|\x_t-\x^*\|^2\Big)
      = O\bigg(\frac{\|\A\|^2\|\x_0-\x^*\|^2}{t^{3/4+\theta}}\bigg),
    \end{align*}
    where we used that $\E\,\|\x_t-\x^*\|^2\leq \|\x_0-\x^*\|^2$. Noting that RHT preprocessing preserves all of the above norms concludes the proof.
\end{proof}

\subsection{Connections to Averaged Iterate Analysis}

We remark that the main challenge in our analysis of stochastic contraction processes, and their special cases such as SGD with greedy step size and Randomized Kaczmarz, stems from the fact that we seek to bound the last iterate, as opposed to for instance the averaged iterate or a random iterate. To highlight this fact, as an auxiliary result, we provide a simple $O(1/t)$ convergence guarantee for the averaged and random iterates of a stochastic contraction process.
\begin{theorem}\label{t:averaged}
    Given a stochastic contraction process $\{\Deltab_t\}_{t\geq 0}$ with average rate $\bar\M$, let $\tau$ be a random variable uniformly sampled from $\{0,1,...,t\}$, and define $\bar\Deltab_t = \frac1{t+1}\sum_{i=0}^t\Deltab_i$. Then:
    \begin{align*}
        \E\,\|\bar\Deltab_t\|_{\bar\M}^2 \leq \E\,\|\Deltab_\tau\|_{\bar\M}^2 \leq \frac{\E\,\|\Deltab_0\|^2}{t+1}.
    \end{align*}
\end{theorem}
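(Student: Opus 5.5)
The argument has two parts: Jensen's inequality for the first bound, and a one-step expected-decrease identity, telescoped, for the second.

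\emph{First inequality.} The map $\x\mapsto\|\x\|_{\bar\M}^2$ is convex, so
\[
\|\bar\Deltab_t\|_{\bar\M}^2\le\frac1{t+1}\sum_{i=0}^t\|\Deltab_i\|_{\bar\M}^2 .
\]
Taking expectations, the right-hand side becomes $\E\,\|\Deltab_\tau\|_{\bar\M}^2$, since $\tau$ is uniform on $\{0,\dots,t\}$ and independent of the process.

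\emph{One-step identity.} Next I show that the squared Euclidean norm drops in expectation by at least $\|\Deltab_i\|_{\bar\M}^2$ at each step. Expanding the recursion,
\[
\|\Deltab_{i+1}\|^2=\Deltab_i^\top(\I-\M_i)^2\Deltab_i=\|\Deltab_i\|^2-2\Deltab_i^\top\M_i\Deltab_i+\Deltab_i^\top\M_i^2\Deltab_i .
\]
Because $\zero\preceq\M_i\preceq\I$, we have $\M_i^2\preceq\M_i$, so the last term is at most $\Deltab_i^\top\M_i\Deltab_i$. This gives
\[
\|\Deltab_{i+1}\|^2\le\|\Deltab_i\|^2-\|\Deltab_i\|_{\M_i}^2 .
\]
Now $\Deltab_i$ depends only on $\M_0,\dots,\M_{i-1}$, so it is independent of $\M_i$. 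Conditioning on $\Deltab_i$ and using $\E\,\M_i=\bar\M$ therefore yields
\[
\E\,\|\Deltab_{i+1}\|^2\le\E\,\|\Deltab_i\|^2-\E\,\|\Deltab_i\|_{\bar\M}^2 .
\]

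\emph{Telescoping.} Sum this inequality over $i=0,\dots,t$ and drop the nonnegative term $\E\,\|\Deltab_{t+1}\|^2$:
\[
\sum_{i=0}^t\E\,\|\Deltab_i\|_{\bar\M}^2\le\E\,\|\Deltab_0\|^2 .
\]
Dividing by $t+1$ gives $\E\,\|\Deltab_\tau\|_{\bar\M}^2\le\E\,\|\Deltab_0\|^2/(t+1)$, which is the second inequality.

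There is no real obstacle in this argument. The only points needing care are the inequality $\M_i^2\preceq\M_i$, which uses the upper bound $\M_i\preceq\I$, and the independence of $\Deltab_i$ from $\M_i$, which is what lets the expectation of $\M_i$ pass through.
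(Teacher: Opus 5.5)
Your proof is correct and matches the paper's argument: the paper also derives $\E\,\|\Deltab_{i+1}\|^2\le\E\,\|\Deltab_i\|^2-\E\,\|\Deltab_i\|_{\bar\M}^2$, telescopes it over $i=0,\dots,t$, and uses convexity of $\|\cdot\|_{\bar\M}^2$ for the averaged iterate; you spell out the two justifications the paper leaves implicit, namely $\M_i^2\preceq\M_i$ and the independence of $\Deltab_i$ from $\M_i$. That independence also needs $\Deltab_0$ to be independent of the $\M_i$'s, an assumption the paper makes tacitly as well (e.g., by taking $\Deltab_0$ deterministic in the proof of Lemma~\ref{l:mat-rec}).
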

\begin{proof}
    The proof follows a standard argument from averaged iterate analysis of SGD. Observe that:
    \begin{align*}
        \E\,\|\Deltab_{i+1}\|^2 = \E\,\Deltab_i^\top(\I-\M_i)^2\Deltab_i \leq \E\,\|\Deltab_i\|^2 - \E\,\|\Deltab_i\|_{\bar\M}^2.
    \end{align*}
    Summing and canceling out both sides from $0$ to $t$, we get
    \begin{align*}
        \E\,\|\Deltab_{t+1}\|^2 + \sum_{i=0}^t\E\,\|\Deltab_i\|_{\bar\M}^2 \leq \E\,\|\Deltab_0\|^2.
    \end{align*}
    From this, we immediately obtain that
    \begin{align*}
        \E\,\Big\|\frac1{t+1}\sum_{i=0}^t\Deltab_i\Big\|_{\bar\M}^2 \leq 
        \frac1{t+1}\sum_{i=0}^t\E\,\|\Deltab_i\|_{\bar\M}^2 \leq \frac{\E\,\|\Deltab_0\|^2}{t+1},
    \end{align*}
    which concludes the proof.
\end{proof}
Therefore, replacing the last iterate with the averaged/random iterate in each of our examples yields the optimal $O(1/t)$ convergence rate. In particular, using Theorem \ref{t:averaged} to analyze the random iterate $\x_{\tau}$ in Randomized Kaczmarz yields a positive answer to a question posed by \cite{steinerberger2023approximate} about whether there always exists a sequence of $t = O(\|\A\|_F^2/\epsilon^2)$ Kaczmarz updates that attains the guarantee $\|\A\x_t - \b\|\leq \epsilon\|\x_0-\x^*\|$.

\section{Convergence Analysis via Matrix Recursion}
In this section, we characterize the convergence behavior of a stochastic contraction process. We start by upper bounding the expected norm of the random vectors by defining a matrix recursion that captures how the process evolves along the eigendirections of the average rate matrix $\bar\M$.
\begin{lemma}\label{l:mat-rec}
  Given an $n\times n$ psd matrix $\zero\preceq\bar\M\preceq\I$, define the following matrix recursion:
  \begin{align}
    \Nb_0 = \bar\M,\qquad \Nb_{t+1} = \Nb_t(\I-2\bar\M) + \|\Nb_t\|\cdot \bar\M.\label{eq:mat-rec}
  \end{align}
Then, any stochastic contraction process $\{\Deltab_t\}_{t\geq 0}$ with average rate $\bar\M$ satisfies:
\begin{align*}
  \E\,\|\Deltab_t\|_{\bar\M}^2 \leq \E\,\|\Deltab_0\|_{\Nb_t}^2.
\end{align*}
\end{lemma}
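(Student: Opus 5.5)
We prove a stronger, "quadratic-form" statement by induction on $t$ and then read off the claim. The plan is to show that for every $t\geq 0$, every $s\geq 0$, and every psd matrix $\Nb$ that commutes with $\bar\M$ and is of the form produced by the recursion \eqref{eq:mat-rec},
\begin{align*}
\E\,\|\Deltab_{s+1}\|_{\Nb}^2 \leq \E\,\|\Deltab_s\|_{\Nb(\I-2\bar\M)+\|\Nb\|\bar\M}^2 .
\end{align*}
Iterating this one-step inequality backwards, starting from $\Nb_0=\bar\M$ at time $t$, gives
\begin{align*}
\E\,\|\Deltab_t\|_{\Nb_0}^2\leq \E\,\|\Deltab_{t-1}\|_{\Nb_1}^2\leq\dots\leq \E\,\|\Deltab_0\|_{\Nb_t}^2,
\end{align*}
which is exactly the lemma. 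Note the direction: the recursion index counts steps backwards from the final time. This works because the process is driven by i.i.d.-in-law (common-mean, independent) contractions, so the same one-step bound applies at every time.

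The core is the one-step bound. Condition on $\Deltab_s=\Deltab$; since $\M_s$ is independent of $\Deltab_s$,
\begin{align*}
\E\big[\Deltab^\top(\I-\M_s)\Nb(\I-\M_s)\Deltab\big] = \Deltab^\top\Nb\Deltab - 2\Deltab^\top \Nb\bar\M\,\Deltab\cdot{}\text{(symmetrized)} + \E\big[\Deltab^\top\M_s\Nb\M_s\Deltab\big].
\end{align*}
The cross term equals $-\Deltab^\top(\Nb\bar\M+\bar\M\Nb)\Deltab$, and this is $-2\Deltab^\top\Nb\bar\M\Deltab$ when $\Nb$ commutes with $\bar\M$. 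For the quadratic term we use $\Nb\preceq\|\Nb\|\I$ to get $\M_s\Nb\M_s\preceq\|\Nb\|\M_s^2\preceq\|\Nb\|\M_s$, where the last step uses $\zero\preceq\M_s\preceq\I$. Taking expectation gives $\|\Nb\|\bar\M$. Combining the three terms yields the quadratic form $\Nb(\I-2\bar\M)+\|\Nb\|\bar\M$. Finally we take the outer expectation over $\Deltab_s$.

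It remains to check that every $\Nb_t$ commutes with $\bar\M$ and is psd, so that the induction is legitimate. Commutation is immediate by induction: $\Nb_0=\bar\M$ is a polynomial in $\bar\M$, and if $\Nb_t$ is a polynomial in $\bar\M$, then $\Nb_t(\I-2\bar\M)+\|\Nb_t\|\bar\M$ is again one (with scalar coefficient $\|\Nb_t\|$). All $\Nb_t$ are therefore simultaneously diagonalizable with $\bar\M$, with eigenvalues $\nu_i$ evolving as $\nu_i\mapsto \nu_i(1-2\lambda_i)+\|\Nb_t\|\lambda_i$, where $\lambda_i\in[0,1]$ are the eigenvalues of $\bar\M$. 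Positivity follows because $\nu_i\leq\|\Nb_t\|$: if $1-2\lambda_i\geq0$, both terms are nonnegative, and otherwise $\nu_i(1-2\lambda_i)+\|\Nb_t\|\lambda_i\geq \|\Nb_t\|(1-\lambda_i)\geq 0$. Strictly speaking psd-ness is not even needed, since only $\Nb\preceq\|\Nb\|\I$ was used. Here $\|\Nb\|$ is the spectral norm, which equals $\max_i\nu_i$ once nonnegativity is known.

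The main obstacle is the cross term. For a general psd $\Nb$ the symmetrization $\Nb\bar\M+\bar\M\Nb$ is not $2\Nb\bar\M$, and $\Nb(\I-2\bar\M)$ need not even be symmetric. The commuting property established above is what resolves this.
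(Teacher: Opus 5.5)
Your proof is correct and follows essentially the paper's argument. Both unroll backward from time $t$, use the commutation of $\Nb_i$ with $\bar\M$ to collapse the cross term, and bound $\E\,\M_s\Nb\M_s\preceq\|\Nb\|\,\E\,\M_s^2\preceq\|\Nb\|\bar\M$. The only structural difference is that you telescope the scalar expectations by conditioning on $\Deltab_s$ at each step. The paper instead forms the exact backward matrices $\bar\Nb_{t,i}$ and proves $\bar\Nb_{t,i}\preceq\Nb_i$ by induction, which additionally needs monotonicity of $X\mapsto\E[(\I-\M)X(\I-\M)]$; your route avoids that step.

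One small wording fix: the $\M_t$ need not be identically distributed ("i.i.d.-in-law" is inaccurate). Your argument uses only independence of $\M_s$ from $\Deltab_s$ and the common mean $\bar\M$, so nothing else changes.
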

\begin{proof}
 Let $\{\M_t\}_{t\geq 0}$ be the sequence of contractions that define $\{\Deltab_t\}_{t\geq 0}$. Recall that $\zero\preceq\M_t\preceq\I$ and $\E\,\M_t=\bar\M$ for
 all $t$. Fix some $t\geq 0$ and define the following sequence for $i=0,...,t$:
  \begin{align*}
    \bar\Nb_{t,0} = \bar\M,\qquad \bar\Nb_{t,i} = \E\big[(\I - \M_{t-i})\bar\Nb_{t,i-1}(\I-\M_{t-i})\big].
  \end{align*}
 Without loss of generality, assume that $\Deltab_0$ is deterministic. Then, $\E\,\|\Deltab_0\|_{\bar\M}^2 = \|\Deltab_0\|_{\bar\Nb_{t,0}}^2$
 and 
 \begin{align*}
   \E\,\|\Deltab_{t}\|_{\bar\M}^2
   &= \E\Big[\Deltab_{t-1}^\top\E\big[(\I-\M_{t-1})\bar\M(\I-\M_{t-1})\big]\Deltab_{t-1}\Big]
   \\
   &=\E\big[\Deltab_{t-1}^\top\bar\Nb_{t,1}\Deltab_{t-1}\big]= ... = \Deltab_0^\top\bar\Nb_{t,t}\Deltab_0.
 \end{align*}
 Next, we show by induction that $\bar\Nb_{t,i}\preceq \Nb_i$ for each
 $i=0,...,t$. Clearly, $\bar\Nb_{t,0}=\Nb_0$, so suppose that
 $\bar\Nb_{t,i-1}\preceq \Nb_{i-1}$ for some $1\leq i\leq t$. Then:
 \begin{align*}
   \bar\Nb_{t,i}
   &\preceq  \E\big[(\I -
   \M_{t-i})\Nb_{i-1}(\I-\M_{t-i})\big]
   \\
   & = \Nb_{i-1} - \E\,\M_{t-i}\Nb_{i-1} - \Nb_{i-1}\E\,\M_{t-i} +
     \E\,\M_{t-i}\Nb_{i-1}\M_{t-i}
   \\
   & = \Nb_{i-1} - \bar\M\Nb_{i-1} - \Nb_{i-1}\bar\M +
     \E\,\M_{t-i}\Nb_{i-1}\M_{t-i}
   \\
   &\preceq \Nb_{i-1}(\I - 2\bar\M) + \|\Nb_{i-1}\|\cdot
     \E\,\M_{t-i}^2    , 
   \\
   &\preceq \Nb_{i-1}(\I - 2\bar\M) + \|\Nb_{i-1}\|\cdot\bar\M,
 \end{align*}
 where we used that by definition $\Nb_{i-1}$
 commutes with $\bar\M$, and then that $\E\,\M_{t-i}^2\preceq
 \E\,\M_{t-i}$.
 Thus, we have shown that $\bar\Nb_{t,i}\preceq\Nb_i$
 for each $i=0,...,t$ and each $t\geq 0$. In particular, this implies
 that $\bar\Nb_{t,t}\preceq\Nb_t$. The claim now follows since
 $\E\,\|\Deltab_{t}\|_{\bar\M}^2=\|\Deltab_0\|_{\bar\Nb_{t,t}}^2\leq \|\Deltab_0\|_{\Nb_t}^2$.
\end{proof}
We next prove Theorem \ref{t:main} by analyzing the evolution of the eigenvalues in recursion \eqref{eq:mat-rec}.
\subsection{Proof of Theorem \ref{t:main}}
Lemma \ref{l:mat-rec} implies that
$\E\,\|\Deltab_{t}\|_{\bar\M}^2\leq \|\Nb_t\|\cdot\E\,\|\Deltab_0\|^2$, so
it now suffices to bound $\|\Nb_t\|$ for matrices $\Nb_t$ defined
by the recursion \eqref{eq:mat-rec}. Since these matrices all commute
with $\bar\M$ (and therefore, with each other), we can rewrite the
recursion purely as a transformation of their eigenvalues in the
common basis. Let $\bar\M = \U\D\U^\top$ be the eigendecomposition of
$\bar\M$, where $\D = \diag(\rho_1,...,\rho_n)$ and $\rho_1\geq
\rho_2\geq ...$ denote the eigenvalues of $\bar\M$. Also, let $\lambda_{k,t}$
be the eigenvalue of $\Nb_t$ associated with the $k$th eigenvector in
$\U$. Then, these eigenvalues are governed by the following set of
recursions:
\begin{align}
  \lambda_{k,t+1} = \lambda_{k,t}\cdot (1 - 2\rho_k) + \rho_k\cdot
  \max_{i}\lambda_{i,t},\qquad k=1,...,n.\label{eq:lambda-rec}
\end{align}
It now suffices to show that $\|\Nb_t\|=\lambda_{\max,t}\leq
\frac{C}{t^{\alpha}}$ for each $t$, where  we define the shorthand $\lambda_{\max,t}
:= \max_k\lambda_{k,t}$, whereas the constant $C\geq 2$
will be chosen later.  We do
this via induction, using the following slightly stronger inductive hypothesis, 
distinguishing between even and odd $t$:
\begin{align*}
    \textbf{Inductive hypothesis:}\qquad \lambda_{\max,t}\leq \begin{cases} \frac{C}{(t+1)^\alpha} &\text{if $t$ is even,}\\
    \frac{C}{(t+2)^\alpha} &\text{if $t$ is odd}.
    \end{cases}
\end{align*}
  
As the base case, consider all $0\leq t\leq C-2$. Then, for all $k$ we have 
\begin{align*}
  \lambda_{k,t}
  &= \lambda_{k,t-1}(1-\rho_k) + (\lambda_{\max,t-1}-\lambda_{k,t-1})\rho_k
\leq \lambda_{\max,t-1}\leq...\leq \lambda_{\max,0}\leq 1\leq \frac{C}{(t+2)^\alpha},
\end{align*}
where we used that $\lambda_{k,t-1}\geq 0$ since $\Nb_{t-1}$ is positive semidefinite.

Next, suppose that the induction hypothesis holds for for all
$0,1,...,t-1$. We will show the hypothesis for $t$. We break the
analysis down
into two subsets of the indices $k$, depending on whether $\rho_k\leq 1/2$ or not, to account for the possibility of changing signs in the first term of \eqref{eq:mat-rec}.

\paragraph{Case 1: $\rho_k\leq 1/2$.} We start by
focusing only on $k$ such that $\rho_k\leq 1/2$. Here, we can treat
the even and odd $t$ cases together by showing that $\lambda_{k,t}\leq
\frac{C}{(t+2)^\alpha}$. We start by expanding the recursion:
\begin{align*}
  \lambda_{k,t}
  &= \lambda_{k,t-1}(1-2\rho_k) +
                  \rho_k\lambda_{\max,t-1}
\\
&= \rho_k(1-2\rho_k)^t + \rho_k\sum_{i=1}^t(1-2\rho_k)^{t-i}\lambda_{\max,i-1}
  \\
  &\leq \rho_k(1-2\rho_k)^t + C\rho_k\sum_{i=1}^t\frac{(1-2\rho_k)^{t-i}}{i^\alpha}.
\end{align*}
In the last inequality we use the assumption that $\rho_k\leq1/2$ to ensure that
$1-2\rho_k\geq 0$ and all of the terms in the expression are
non-negative. It now suffices to show that the above summation 
formula is bounded by $\frac{C}{(t+2)^\alpha}$, which is obtained
in the following key technical lemma (see Section~\ref{s:main-technical}).
\begin{lemma}\label{l:main-technical}
  For all $K,t\geq 1000$, $\rho\in(0,1/2]$, and
  $\alpha = 3/4+0.001$, we have
  \begin{align*}
    \rho(1-2\rho)^t +
    K\rho\sum_{i=1}^t\frac{(1-2\rho)^{t-i}}{i^\alpha} \leq \frac{K}{(t+2)^\alpha}.
  \end{align*}
\end{lemma}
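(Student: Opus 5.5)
Throughout, write $q=1-2\rho\in[0,1)$, $S(\rho,t)=\rho\sum_{i=1}^t q^{t-i}i^{-\alpha}$, and $x=\rho t$. Dividing by $K/(t+2)^\alpha$, the claim is
\begin{align*}
\frac{(t+2)^\alpha}{K}\,\rho q^t+(t+2)^\alpha S(\rho,t)\le 1 .
\end{align*}
The plan is to show that $(t+2)^\alpha S(\rho,t)\le 1-\delta$ for some absolute margin $\delta>0$ (say $\delta=10^{-3}$), uniformly over $t\ge 1000$ and $\rho\in(0,1/2]$, and that the first term is below $\delta$. The first term is easy. We have $\rho q^t\le \rho e^{-2\rho t}\le \frac{1}{2et}$, so $(t+2)^\alpha\rho q^t/K\le \frac{(1+2/t)^\alpha}{2e\,K\,t^{1-\alpha}}$. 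This is tiny, since $K\ge 1000$ and $t^{1-\alpha}\ge 1000^{0.249}$. So everything reduces to a uniform bound on $t^\alpha S(\rho,t)$. The factor $(1+2/t)^\alpha\le 1+2/t$ is then absorbed into the margin $\delta$.

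The heuristic is a discrete-to-continuous reduction. Substituting $i=st$ and $q^{t-i}\approx e^{-2x(1-s)}$ gives
\begin{align*}
t^\alpha S(\rho,t)\approx g(x):=x\int_0^1 e^{-2x(1-s)}s^{-\alpha}\,ds .
\end{align*}
As $x\to0$ we have $g(x)\sim x/(1-\alpha)\to 0$, and as $x\to\infty$ we have $g(x)\to 1/2$, with correction $g(x)=\tfrac12+\tfrac{\alpha}{4x}+O(x^{-2})$. The real content is that $\sup_{x>0}g(x)<1$ for $\alpha=3/4+0.001$. I expect this to be the crux and the main obstacle, since it is presumably exactly where the barrier near $3/4+0.003$ comes from: the supremum crosses $1$ slightly above this $\alpha$, so the margin is genuinely small. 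I would first try to analyze $g$ via its ODE. Differentiating in $x$ (equivalently, $h(x)=g(x)/x$ satisfies $h'=-2h+\int_0^1 2s^{1-\alpha}e^{-2x(1-s)}ds$) locates the maximizer $x^*=O(1)$. Then I would bound $g$ on a compact interval $[x_0,x_1]$ rigorously, using monotonicity and a finite grid with Lipschitz control: $|g'|$ is bounded explicitly on compacts. This would be combined with the analytic small-$x$ bound $g\le x/(1-\alpha)$ for $x\le x_0$ and the asymptotic expansion, with explicit error, for $x\ge x_1$.

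It remains to make the approximation rigorous with one-sided errors, splitting into three regimes of $\rho$.

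\emph{Small $\rho$ (say $\rho\le 1/\sqrt t$).} Here $q^{t-i}\le e^{-2\rho(t-i)}$ and $\log(1/q)=2\rho+O(\rho^2)$. Since $i^{-\alpha}\le\int_{i-1}^i s^{-\alpha}ds$ and $e^{-2\rho(t-s)}\ge e^{-2\rho}e^{-2\rho(t-i)}$ on $[i-1,i]$, the sum is at most $e^{2\rho}$ times the integral. So $t^\alpha S\le(1+O(\rho))\,g(x)\le (1+O(t^{-1/2}))\,g(x)$, which is below $1-\delta$ for $t\ge 1000$ provided $\sup g$ has enough room.

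\emph{Large $\rho$ (say $\rho\ge t^{-1/2}$, so $x\ge\sqrt t\ge 31$).} Here I would not use the integral at all. Reindex with $j=t-i$. The terms with $j\ge t/2$ contribute at most $\rho\,t\,q^{t/2}$, which is exponentially small in $\sqrt t$. The terms with $j<t/2$ satisfy $(t-j)^{-\alpha}\le t^{-\alpha}(1+2\alpha j/t)$, so
\begin{align*}
t^\alpha S\le \rho\sum_j q^j\big(1+\tfrac{2\alpha j}{t}\big)\le \tfrac12+\tfrac{\alpha}{2\rho t}\le \tfrac12+O(t^{-1/2}),
\end{align*}
which is comfortably below $1-\delta$. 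This also covers $\rho$ near $1/2$, where $q\to0$ and the continuous approximation degenerates.

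Assembling the pieces: the two regimes overlap, the first term is below $\delta/2$, and the $(t+2)/t$ correction costs less than $\delta/2$. Together these give the lemma. The bookkeeping is routine. The only delicate, numerically tight step is certifying $\sup_x g(x)\le 1-\delta'$ for $\alpha=0.751$ with an explicit $\delta'$ large enough to absorb the $O(t^{-1/2})$ discretization losses at $t=1000$.
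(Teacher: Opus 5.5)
Your architecture is the paper's: you make the $\rho q^t$ term negligible using $K\ge 1000$, compare $t^\alpha S$ with $g=L_\alpha$, show $\sup g<1$, and absorb $(1+2/t)^\alpha$. Your first-term bound $\rho q^t\le 1/(2et)$ and your large-$\rho$ estimate $\tfrac12+\tfrac{\alpha}{2\rho t}$ are fine. \textbf{The step that fails is the small-$\rho$ regime.} The comparison $t^\alpha S\le e^{2\rho}g(\rho t)$ is correct. You then decouple the loss as a uniform factor $1+O(t^{-1/2})$ and plan to absorb it into the room below $\sup g$. At $t=1000$ with $\rho$ up to $t^{-1/2}$, that factor is $e^{2/\sqrt{1000}}\approx 1.065$. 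For $\alpha=0.751$ the room is only about $3.4\cdot 10^{-3}$: the series $g(x)=xe^{-2x}\sum_{n\ge0}(2x)^n/(n!\,(n+1-\alpha))$ gives $g(3/4)\approx 0.9966$. So the certification you end with ($\sup g\le 1-\delta'$, with $\delta'$ absorbing the $O(t^{-1/2})$ loss at $t=1000$) would need $\sup g< e^{-2/\sqrt{1000}}\approx 0.94$, which is false. Moving the regime threshold does not by itself fix this. Your large-$\rho$ bound and the $j\ge t/2$ tail are only small once $x=\rho t$ is well beyond the maximizer $x^*\approx 0.75$. The loss has to stay coupled to $x$: $e^{2\rho}=e^{2x/t}\approx 1+1.5\cdot 10^{-3}$ where $g$ peaks. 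You would then need to certify $\sup_x e^{x/500}g(x)$ rather than $\sup g$. After the $(1+2/t)^\alpha$ factor, the total margin is only about $4\cdot 10^{-4}$, not $10^{-3}$.

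The paper removes this loss with an exact discretization. With $\delta=-\log q$ one has $\rho q^k=\tfrac12(q^k-q^{k+1})=\tfrac12\int_k^{k+1}\delta e^{-\delta y}\,dy$, and $(t-k)^{-\alpha}\le (t-y)^{-\alpha}$ on $[k,k+1]$. Hence $t^\alpha S\le L_\alpha(\tfrac t2\delta)$ for all $t\ge 1$ and $\rho\in(0,\tfrac12)$, with no multiplicative error and no case split ($\rho=\tfrac12$ gives exactly $\tfrac12$). The paper also replaces your grid, Lipschitz and asymptotics certification with a one-point criterion from the ODE $L_\alpha'=1-(2-\alpha/\theta)L_\alpha$. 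At any $\theta$ with $L_\alpha(\theta)=\ell$, the sign of $L_\alpha'$ equals that of $\theta_\ell-\theta$, where $\theta_\ell=\alpha/(2-1/\ell)$. So $L_\alpha(\theta_\ell)<\ell$ alone forces $\sup L_\alpha\le \ell$. One rigorous series evaluation at $\theta_\ell$ with $\ell=499/500$ then yields $(t+2)^\alpha B_t\le (1+\tfrac{2\alpha}{1000})\tfrac{499}{500}<1$. A certified grid would also work in principle, but with margins of order $10^{-3}$ it has to be very fine near the maximizer.
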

Despite being entirely elementary, Lemma \ref{l:main-technical} turns out to require a quite involved and technical analysis. In Section \ref{s:main-technical}, we give a proof sketch of a slightly weaker claim with $\alpha=3/4$ (which is still highly technical, but much more readable), and the complete argument is given in the~Appendix.
\paragraph{Case 2: $\rho_k> 1/2$.} Next, we consider only $k$ such
that $\rho_k> 1/2$. In this case, $1-2\rho_k$ is negative, which leads the
recursion to oscillate up and down between even and odd $t$'s (see Figure \ref{f:mat-rec}). We thus
consider these two sub-cases.

\paragraph{Case 2a: Even $t$.} Note that from the inductive hypothesis we
have $\lambda_{\max,t-1}\leq \frac{C}{(t+1)^\alpha}$, since $t-1$ is
odd. Furthermore, as shown before, we have $\lambda_{k,t}\leq \lambda_{\max,t-1}$, so the claim follows.

\paragraph{Case 2b: Odd $t$.} We again expand the recursion, but in a
slightly different way:
\begin{align*}
  \lambda_{k,t}
  &= \lambda_{k,t-1}(1-2\rho_k) + \rho_k\lambda_{\max,t-1}
  \\
  &=\big(\lambda_{k,t-2}(1-2\rho_k) + \rho_k\lambda_{\max,t-2}\big)(1-2\rho_k) +
    \rho_k\lambda_{\max,t-1}
  \\
  &=\lambda_{k,t-2}(1-2\rho_k)^2 + 2\rho_k(1-\rho_k)\lambda_{\max,t-1} +
    \rho_k(2\rho_k-1)(\lambda_{\max,t-1} - \lambda_{\max,t-2})
  \\
  &\leq \lambda_{k,t-2}(1-2\rho_k)^2 + 2\rho_k(1-\rho_k)\lambda_{\max,t-2}, 
\end{align*}
where in the last inequality we used again that $\lambda_{\max,t-1}\leq
\lambda_{\max,t-2}$. In order to recover an analog of Case 1, we substitute $\beta_k = 2\rho_k(1-\rho_k)\in[0,1/2]$, observing that $(1-2\rho_k)^2 = 1-2\beta_k$, and then we re-index the recursion to go over only odd indices $t$. Defining $\gamma_{k,i} = \lambda_{k,2i+1}$ and $\gamma_{\max,i} = \lambda_{\max,2i+1}$ and letting $t=2s+1$, 
we obtain:
\begin{align*}
  \lambda_{k,t} = \gamma_{k,s}
  &\leq \gamma_{k,s-1}(1-2\beta_k) +
  \beta_k\gamma_{\max,s-1}
\leq \gamma_{k,0}(1-2\beta_k)^{s} + \beta_k\sum_{i=1}^s(1-2\beta_k)^{s-i}\gamma_{\max,i-1}.
\end{align*}
Note that we can bound $\gamma_{k,0}=\lambda_{k,1}$ as follows:
\begin{align*}
  \lambda_{k,1}
  &= \lambda_{k,0}(1-2\rho_k) + \rho_k\lambda_{\max,0}
  \leq \rho_k(1-2\rho_k) + \rho_k = 2\rho_k(1-\rho_k)=\beta_k.
\end{align*}
Using the inductive hypothesis, $\gamma_{\max,i-1}=\lambda_{\max,2i-1}\leq \frac{C}{(2i)^\alpha}$, so we
conclude:
\begin{align*}
  \lambda_{k,t}
  &\leq \beta_k(1-2\beta_k)^s + (C/2^\alpha)\beta_k\sum_{i=1}^s\frac{(1-2\beta_k)^{s-i}}{i^\alpha}\leq \frac{C/2^\alpha}{(s+2)^\alpha}\leq \frac{C}{(t+2)^\alpha},
\end{align*}
where in the second-to-last step we again used Lemma
\ref{l:main-technical}, choosing $C = 2^\alpha \cdot1000$ and $\rho=\beta_k$. Thus, we have shown the inductive hypothesis,
which concludes the proof of Theorem \ref{t:main}.

\subsection{(Near-)Optimality of the Recursion Analysis}
\label{s:optimality}

Given the peculiarity of the exponent $3/4+0.001$ in the convergence rate shown above, it is natural to ask what is the minimax optimal rate achieved by the matrix recursion defined in Lemma~\ref{l:mat-rec}.  To examine this question, we first give a nearly matching lower bound construction which shows that the exponent cannot be improved beyond $3/4+0.003$.
\begin{theorem}\label{t:lower}
    For any $T\geq 1$, there is a psd matrix $\zero\preceq \bar\M\preceq \I$ such that the sequence
    \begin{align*}
    \Nb_0 = \bar\M,\qquad \Nb_{t+1} = \Nb_t(\I-2\bar\M) + \|\Nb_t\|\cdot \bar\M
    \end{align*}
    satisfies $\|\Nb_t\| \geq c/t^{3/4\,+\,0.003}$ for every $t=1,2,...,T$, where $c>0$ is an absolute constant.
\end{theorem}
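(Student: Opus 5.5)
We would prove Theorem~\ref{t:lower} by an explicit diagonal construction, feeding the recursion a rich enough spectrum so that, at every time $t\le T$, some eigendirection has $\lambda_{k,t}$ large. Take $\bar\M$ diagonal with eigenvalues $\rho_s = x^*/s$ for $s=s_0,\dots,T$, all at most $1/2$, plus possibly $\rho=1/2$ to handle small $t$. Here $x^*$ is a constant fixed below. Because every $\rho_k\le 1/2$, all terms in the eigenvalue recursion \eqref{eq:lambda-rec} are nonnegative. Unfolding it exactly as in Case 1 gives, for every $k$,
\begin{align*}
\lambda_{\max,t}\;\ge\;\lambda_{k,t}\;\ge\;\rho_k\sum_{i=1}^{t}(1-2\rho_k)^{t-i}\lambda_{\max,i-1}.
\end{align*}
This is the reverse of the inequality exploited in Lemma~\ref{l:main-technical}, and it now holds with equality in the recursion, so no slack is lost.

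We would then show by strong induction on $t\le T$ that $\lambda_{\max,t}\ge c\,t^{-a}$ with $a=3/4+0.003$ and a small absolute constant $c$. For the inductive step, plug the hypothesis $\lambda_{\max,i-1}\ge c\,i^{-a}$ into the display and choose the index $k$ with $\rho_k=x^*/t$. Substituting $i=ut$ and $(1-2\rho)^{t-i}\approx e^{-2x^*(1-u)}$ turns the sum into a Riemann sum:
\begin{align*}
\lambda_{\max,t}\;\gtrsim\; c\,t^{-a}\cdot g_a(x^*),\qquad g_a(x):=x\int_0^1 e^{-2x(1-u)}u^{-a}\,du .
\end{align*}
The induction closes provided $\sup_x g_a(x)>1$, with some margin. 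This continuous self-consistency condition is what pins down the critical exponent near $0.753$. The matching upper-bound analysis of Lemma~\ref{l:main-technical} is essentially the statement that $\sup_x g_a(x)<1$ for $a$ slightly below $3/4$. So the key computation is a numerical certification that $\sup_x g_{0.753}(x)\ge 1+\epsilon$ for an explicit $\epsilon>0$ and an explicit $x^*$. We would do this with rigorous quadrature bounds; $g$ is smooth apart from the integrable singularity at $u=0$, which can be integrated in closed form near $0$.

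Next come the discretization errors. These are the gap between $(1-2\rho)^{t-i}$ and $e^{-2\rho(t-i)}$, and between the sum and the integral. For the first, use $(1-2\rho)\ge e^{-2\rho-4\rho^2}$, which costs a factor $1-O(1/t)$. For the second, compare $\sum_i i^{-a}w_i$ with the integral using monotonicity of $u^{-a}$; the only real loss is at the singular endpoint $i=1$, and that loss only helps the lower bound. Both errors are $O(t^{-(1-a)})$ relative, so they are absorbed by the margin $\epsilon$ once $t\ge t_0$ for an absolute $t_0$. For $t<t_0$ the base case is handled separately. An eigenvalue $\rho=1/2$ gives $\lambda_{k,t}=\tfrac12\lambda_{\max,t-1}$, so $\lambda_{\max,t}\ge 2^{-t}\lambda_{\max,0}\ge 2^{-t_0-1}$. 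Taking $c\le 2^{-t_0-1}t_0^{-1}$ covers all small $t$. Since $T$ is finite, a grid of $O(T)$ eigenvalues suffices, and $n$ is allowed to depend on $T$.

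The main obstacle is the certified numerics: proving $\sup_x g_{0.753}(x)>1$ with enough room to swallow the discretization constants. The threshold sits very close to the truth, which is why the exponent $0.003$ rather than $0$ appears. We would first try evaluating $g$ at a well-chosen $x^*$, located by scanning numerically, via a series expansion of $e^{2xu}$ integrated termwise against $u^{-a}$, with explicit tail bounds.
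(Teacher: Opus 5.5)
Your proposal is correct and is essentially the paper's proof. The shared ingredients are: a diagonal $\bar{\mathbf M}$ with eigenvalues on a $1/s$-grid; strong induction through the unrolled eigenvalue recursion evaluated at the eigenvalue matched to time $t$; a discrete-to-continuous lower bound $t^{\alpha}B_t(\rho,\alpha)\ge\Gamma_{\alpha,t}(\theta)$, where $\Gamma_{\alpha,t}(\theta)$ increases to $L_\alpha(\theta)=g_\alpha(\theta)$; and a truncated-series certification that $L_{0.753}(3/4)>1$. Two details differ. The paper takes $\rho_m=(1-e^{-3/(2m)})/2$, so that $-\tfrac t2\log(1-2\rho_t)=3/4$ exactly, which removes your $e^{-4\rho^2}$ correction. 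It also handles small $t$ via $\|\mathbf N_t\|\ge\rho_1(1-\rho_1)^t$ rather than through an extra eigenvalue $1/2$.

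One small correction: the discretization loss near the singular endpoint hurts the lower bound rather than helping it. Because the exponential weight and $u^{-\alpha}$ have opposite monotonicity, the clean sum-to-integral comparison needs the one-step shift $(t-x+1)^{-\alpha}$. Since the resulting $\Gamma_{\alpha,t}$ converges monotonically to $L_\alpha$, this loss is still absorbed by any strict margin $L_{0.753}(x^*)>1$, as you say.
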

\paragraph{Proof sketch.} The key idea is to first show that the inequality in the summation bound (Lemma~\ref{l:main-technical}) does not hold with $\alpha = 3/4+0.003$. This follows essentially by showing the opposite inequality for some sufficiently small $\rho>0$. We do this using a discrete-to-continuous reduction that, while different from the argument described in Section \ref{s:main-technical}, has a similar flavor. Then, we  extend this lower bound to the deterministic recursion by constructing a sufficiently large and ill-conditioned matrix $\bar\M$ so that it has eigenvalues that densely cover the interval $[\rho,1]$, forcing the recursion to exhibit the worst-case behavior represented by the summation in Lemma \ref{l:main-technical}. See Appendix \ref{app:lower-bound} for~details.

\begin{figure}
\hspace{-9mm}\includegraphics[width=0.575\textwidth]{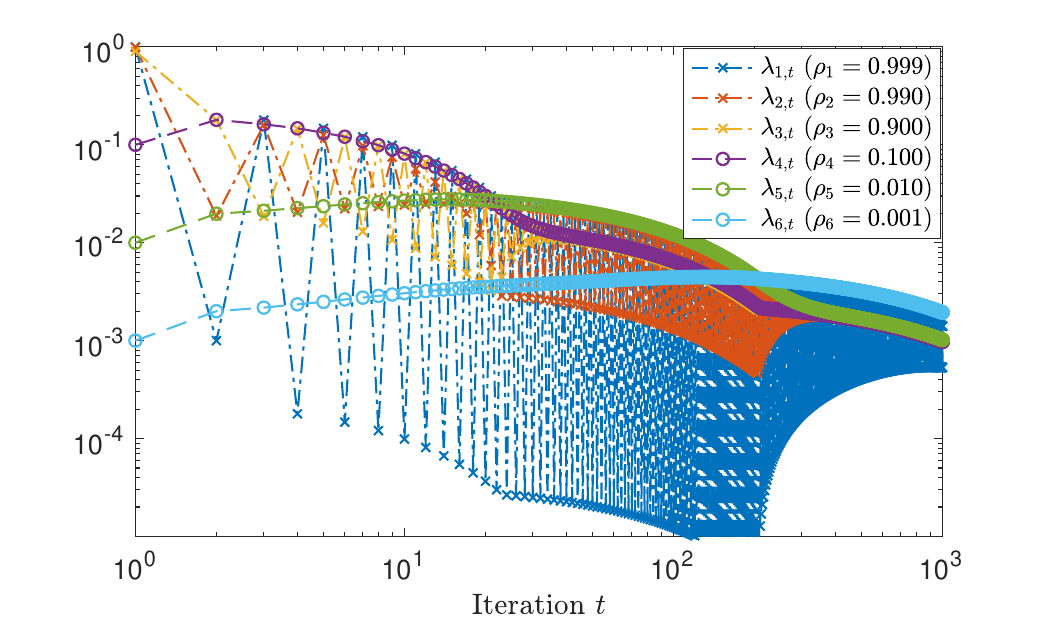}%
  \hspace{-6mm}\includegraphics[width=0.575\textwidth]{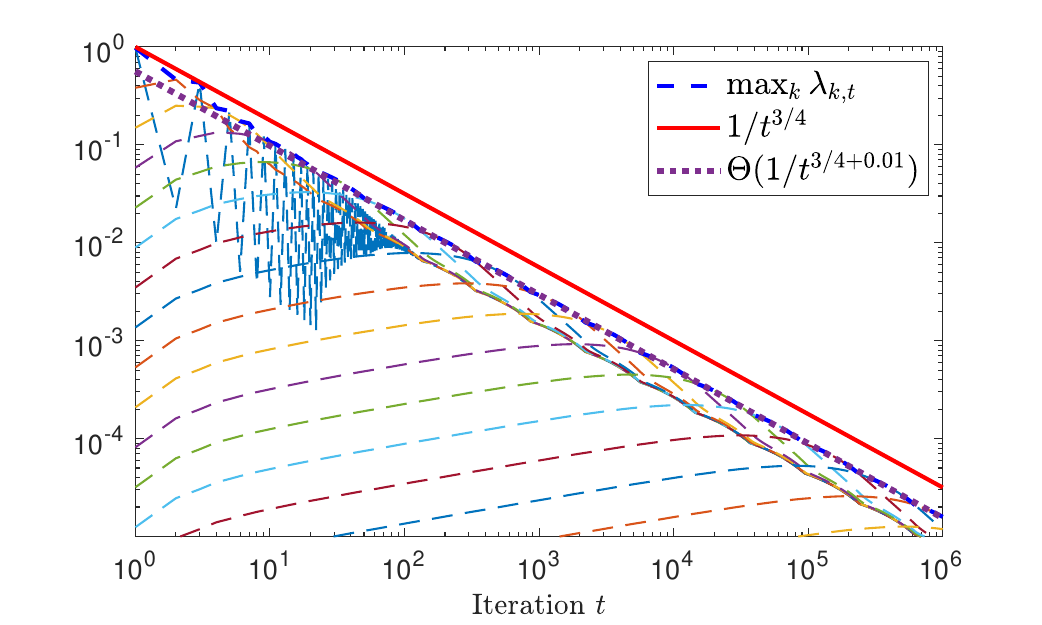}
  \caption{Two simulations of the eigenvalue recursion \eqref{eq:lambda-rec}. On the left is a small example ($n=6$) illustrating the two regimes that distinguish the behavior of eigenvalues below and above the 1/2 threshold. On the right is a larger example consisting of $n=50$ eigenvalues spread evenly in the log-scale between 1 and $10^{-20}$. Here, the dotted line is a best linear fit on the log-log plot for the evolution of $\max_k \lambda_{k,t}$ (the exponent $3/4+0.01$ was rounded). To best accommodate the log-log plot we start indexing the recursions at $t=1$, not at 0.}
  \label{f:mat-rec}
\end{figure}

\paragraph{Numerical simulations.}
To further illustrate this phenomenon, we performed numerical simulations of the recursion using a number of initial conditions (defined by the eigenvalues $\rho_k$ of a hypothetical matrix $\bar\M$). The results of these simulations exhibit a convergence behavior that aligns closely with our theory.

In Figure \ref{f:mat-rec}, we show the results of two simulations by plotting the evolution of all of the eigenvalues of the matrices $\Nb_t$. The left plot is a smaller example that is designed to highlight the two distinct convergence regimes that arise in our analysis, depending on whether the initial eigenvalue is below or above $1/2$. Here, we consider matrices of dimension $n=6$, with the first three eigenvalues close to 1, and the latter three close to 0. As we can see, the initially large eigenvalues exhibit an oscillating behavior as the convergence progresses, whereas the initially small eigenvalues exhibit a more smooth trajectory. This is because the eigenvalue recursion \eqref{eq:lambda-rec} encounters a sign-change in the term $1-2\rho_k$ if and only if the initial eigenvalue $\rho_k$ is larger than $1/2$. This is handled in the proof by separating out these two cases. Notably, in the end both cases rely on the same summation bound, but it is applied in a different way each time.

In order to demonstrate a convergence rate close to the minimax rate, we construct a larger example with $n=50$ eigenvalues that exhibit fast exponential decay (Figure \ref{f:mat-rec}, right). As we can see, the maximum eigenvalue decays at a rate that nearly matches $O(1/t^{3/4})$, but a closer examination shows that the actual rate is slightly better, at around $\alpha \approx 3/4+0.01$. We note that the gap between this rate and the minimax lower bound of $\alpha \approx 3/4+0.003$ from Theorem \ref{t:lower} is likely caused by the small size of the matrices we use in our simulation (due to computational constraints) compared to the matrices we use in the lower bound construction, which are much~larger.

\section{Proof Sketch of the Main Summation Bound
}
\label{s:main-technical}

In this section, we give a sketch of the proof of the main technical lemma (Lemma \ref{l:main-technical}) used to establish Theorem \ref{t:main}. For the sake of clarity, we overview the argument needed to obtain a slightly modified claim, with $\alpha=3/4$ instead of $\alpha=3/4+0.001$, but with a better constant, $300$ instead of $1000$. The remaining details for this simplified claim are in Appendices \ref{a:bounds-for-a}-\ref{app:cert}, whereas the remaining details of the full proof of Lemma \ref{l:main-technical} with $\alpha=3/4+0.001$ is in Appendix \ref{app:main-tech-full}.

\paragraph{Declaration of LLM assistance.} An LLM-based assistant was used during the development of the proof of Lemma \ref{l:main-technical}. The authors verified all steps and take responsibility for the result.

\paragraph{Modified Lemma \ref{l:main-technical}.} We will show that for all $K,t\geq 300$, $\rho\in(0,1/2]$, and
  $\alpha = 3/4$, we have
  \begin{align*}
    \rho(1-2\rho)^t +
    K\rho\sum_{i=1}^t\frac{(1-2\rho)^{t-i}}{i^\alpha} \leq \frac{K}{(t+2)^\alpha}.
  \end{align*}
In our argument, we will treat the two terms on the left hand side separately. To that end, define
\begin{align*}
A_t(\rho):=\rho(1-2\rho)^t \quad\text{and}\quad
B_t(\rho,\alpha):=\rho\sum_{i=1}^t \frac{(1-2\rho)^{t-i}}{i^\alpha}.
\end{align*}
Multiplying both terms by $(t+2)^\alpha$, the inequality is now equivalent to
\begin{align*}
    (t+2)^\alpha A_t(\rho) +
    K(t+2)^\alpha B_t(\rho,\alpha) \leq K.
  \end{align*}
We will next bound the suprema of both terms over all admissible values of $t$ and $\rho$: 
\begin{equation}\label{eq:env}
A_{\sup}(\alpha):=\sup_{t\ge 300}\ \sup_{\rho\in(0,1/2]}\ (t+2)^\alpha A_t(\rho),
\qquad
B_{\sup}(\alpha):=\sup_{t\ge 300}\ \sup_{\rho\in(0,1/2]}\ (t+2)^\alpha B_t(\rho,\alpha).
\end{equation}
The following lemma provides a simple characterization of the constant $K$ that can satisfy our inequality.
\begin{lemma}\label{prop:struct}
Fix $\alpha>0$. Assume $A_{\sup}(\alpha)<\infty$ and $B_{\sup}(\alpha)<1$.
Then for any $K>0$ satisfying
\begin{equation}\label{eq:Cchoice}
K\ge \frac{A_{\sup}(\alpha)}{1-B_{\sup}(\alpha)}
\end{equation}
we have
\begin{align*}
    \rho(1-2\rho)^t +
    K\rho\sum_{i=1}^t\frac{(1-2\rho)^{t-i}}{i^\alpha} \leq \frac{K}{(t+2)^\alpha}
  \end{align*}
for all $t\geq 300$, $\rho\in(0,1/2]$.
\end{lemma}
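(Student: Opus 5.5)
The argument is a direct rearrangement using the two suprema in \eqref{eq:env}. Fix $t\geq 300$ and $\rho\in(0,1/2]$, and multiply the target inequality by $(t+2)^\alpha>0$. The claim becomes equivalent to
\begin{align*}
(t+2)^\alpha A_t(\rho) + K(t+2)^\alpha B_t(\rho,\alpha) \leq K.
\end{align*}
By the definitions in \eqref{eq:env}, the first term is at most $A_{\sup}(\alpha)$. Since $K>0$, the second term is at most $K\,B_{\sup}(\alpha)$. Both suprema are finite by assumption: $A_{\sup}(\alpha)<\infty$ is given, and $B_{\sup}(\alpha)<1$ is given. Moreover $A_t(\rho)\geq 0$ and $B_t(\rho,\alpha)\geq 0$, because $1-2\rho\in[0,1)$. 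So both bounds make sense, and the left-hand side is at most $A_{\sup}(\alpha)+K\,B_{\sup}(\alpha)$.

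It then remains to check that $A_{\sup}(\alpha)+K\,B_{\sup}(\alpha)\leq K$. This is equivalent to $A_{\sup}(\alpha)\leq K\big(1-B_{\sup}(\alpha)\big)$. Because $1-B_{\sup}(\alpha)>0$, we may divide by it, and this is exactly condition \eqref{eq:Cchoice}. Reversing the multiplication by $(t+2)^\alpha$ gives the stated inequality for every admissible $t$ and $\rho$.

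There is no real obstacle here. The only points needing care are the signs: $K>0$ is needed to bound $K B_t$ by $K B_{\sup}$, and $B_{\sup}<1$ is needed so that dividing by $1-B_{\sup}$ preserves the direction of the inequality. All the genuine difficulty is deferred to bounding $A_{\sup}$ and $B_{\sup}$ themselves, which this lemma takes as hypotheses.
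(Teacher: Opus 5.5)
Your proof is correct and follows the same route as the paper. You multiply through by $(t+2)^\alpha$, bound the two terms by $A_{\sup}(\alpha)$ and $K\,B_{\sup}(\alpha)$, and note that the condition on $K$ is equivalent to $A_{\sup}(\alpha)+K\,B_{\sup}(\alpha)\le K$ because $1-B_{\sup}(\alpha)>0$; your explicit sign checks ($K>0$, $A_t,B_t\ge 0$) are slightly more careful than the paper's one-line version.
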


\begin{proof}
If $K$ satisfies \eqref{eq:Cchoice}, then $A_{\sup}(\alpha)\le K(1-B_{\sup}(\alpha))$, so
\[
A_{\sup}(\alpha)+K\,B_{\sup}(\alpha)\le K(1-B_{\sup}(\alpha))+K\,B_{\sup}(\alpha)=K.
\]
Therefore, we have \begin{align*}
    \sup_{t\ge 300}\sup_{\rho\in(0,\tfrac12]}((t+2)^\alpha A_t(\rho)+K (t+2)^\alpha B_t(\rho))\le K
\end{align*}
which implies the desired result.
\end{proof}

The first term $A_{\sup}(\alpha)$ is easier to bound because we can find the exact maximizers $\rho$ and $t$ that attain the value $A_{\sup}(\alpha):=\sup_{t\ge 300}\ \sup_{\rho\in(0,1/2]}\ (t+2)^\alpha A_t(\rho)$ (see Lemma~\ref{lem:A-max} and Lemma~\ref{lem:A-bound} for details), and eventually we obtain
\[
A_{\sup}(3/4)\le \frac{363307}{7228832}<0.051.
\]

To bound the second term, which is a discrete sum, we perform a discrete-to-continuous reduction by comparing it with an integral. For simplicity, we will replace the $(t+2)^\alpha$ factor with $t^\alpha$, which can be easily addressed later for sufficiently large $t$. Then, letting  $\delta:=-\log(1-2\rho) \geq 0$, we have:
\begin{align*}
    (1-2\rho)^{t-i} = e^{-\delta(t-i)}.
\end{align*}
Therefore, the quantity $t^\alpha B_t(\rho,\alpha)$ corresponds to the integral $\rho\,t\int_0^1 e^{-\delta t(1-u)}u^{-\alpha}\,du$, which after a simple change of variable, $\theta := \delta t/2$ (see Remark~\ref{rem:motL}), results in the following function for $\theta>0$:
\begin{equation}\label{eq:L}
L_\alpha(\theta):=\theta\int_0^1 e^{-2\theta u}(1-u)^{-\alpha}\,du.
\end{equation}
This way, the integral comparison reduces the two-parameter family $(t,\rho)$ to the one-parameter family $\theta$.
Formally, by Lemma~\ref{lem:Bt-dom} and Lemma~\ref{lem:Bt-endpoint}, for any $0<\alpha<1$, $t\ge1$, and $\rho\in(0,1/2]$, we have the following comparison:
\begin{equation}\label{eq:Bt-dom-1}
t^\alpha B_t(\rho,\alpha)\le \max\bigg\{L_\alpha\Bigl(\frac{t}{2}\bigl(-\log(1-2\rho)\bigr)\Bigr),1/2\bigg\}.
\end{equation}

Therefore, it suffices to bound $\sup_{\theta>0}L_\alpha(\theta)$. To this end, we first observe the following ordinary differential equation (ODE) property of $L_{\alpha}(\theta)$,
\begin{equation}\label{eq:ODE}
L_\alpha'(\theta)=1-\Bigl(2-\frac{\alpha}{\theta}\Bigr)L_\alpha(\theta),
\qquad \theta>0.
\end{equation}
(see Lemma~\ref{prop:ODE} for details). This ODE property implies that we can bound $\sup_{\theta>0}L_\alpha(\theta)$ by checking a single point. More precisely, we have the following result.
\begin{lemma}[One-point criterion]\label{lem:onepoint-1}
For any $0<\alpha<1$ and $\ell\in(1/2,1)$, define
\begin{equation}\label{eq:theta-ell-1}
\theta_\ell:=\frac{\alpha}{\,2-\frac1\ell\,}.
\end{equation}
If $L_\alpha(\theta_\ell)<\ell$, then $L_\alpha(\theta)<\ell$ for all $\theta>0$, and therefore $\sup_{\theta>0}L_\alpha(\theta)\le \ell$.
\end{lemma}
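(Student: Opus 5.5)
The plan is a barrier argument: level $\ell$ can only be crossed upward at a point $\theta<\theta_\ell$, and once crossed there it cannot be re-crossed downward before $\theta_\ell$. Combined with $L_\alpha(\theta_\ell)<\ell$, this rules out any crossing.

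\textbf{Step 1 (sign of the derivative on the level set).} By the ODE \eqref{eq:ODE}, at any $\theta>0$ with $L_\alpha(\theta)=\ell$,
\begin{align*}
L_\alpha'(\theta)=1-2\ell+\frac{\alpha\ell}{\theta}.
\end{align*}
Since $\ell>1/2$, this is strictly positive iff $\theta<\frac{\alpha\ell}{2\ell-1}=\frac{\alpha}{2-1/\ell}=\theta_\ell$. So on the level set $\{L_\alpha=\ell\}$ we have $L_\alpha'>0$ for $\theta<\theta_\ell$ and $L_\alpha'\le 0$ for $\theta\ge\theta_\ell$.

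\textbf{Step 2 (regularity and the left endpoint).} $L_\alpha$ is continuously differentiable on $(0,\infty)$; this is already implicit in Lemma~\ref{prop:ODE}. Since $\alpha<1$, the integral $\int_0^1 e^{-2\theta u}(1-u)^{-\alpha}\,du\le \frac{1}{1-\alpha}$. Hence $0\le L_\alpha(\theta)\le \theta/(1-\alpha)\to 0$ as $\theta\to 0^+$, so $L_\alpha<\ell$ on some interval $(0,\epsilon)$.

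\textbf{Step 3 (first crossing).} Suppose, for contradiction, that $L_\alpha(\theta)\ge \ell$ for some $\theta>0$. Let $\theta^*:=\inf\{\theta>0: L_\alpha(\theta)\ge\ell\}$. By Step 2, $\theta^*\ge\epsilon>0$, and by continuity $L_\alpha(\theta^*)=\ell$. Since $L_\alpha<\ell$ just to the left of $\theta^*$, we get $L_\alpha'(\theta^*)\ge 0$, so Step 1 forces $\theta^*\le\theta_\ell$.
\begin{itemize}
\item If $\theta^*=\theta_\ell$, then $L_\alpha(\theta_\ell)=\ell$, contradicting the hypothesis.
\item If $\theta^*<\theta_\ell$, then $L_\alpha'(\theta^*)>0$, so $L_\alpha>\ell$ on some interval $(\theta^*,\theta^*+\eta)$.
\end{itemize}

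\textbf{Step 4 (no return before $\theta_\ell$).} In the second case, let $S:=\{\theta\in(\theta^*,\theta_\ell]: L_\alpha(\theta)<\ell\}$. The hypothesis $L_\alpha(\theta_\ell)<\ell$ gives $\theta_\ell\in S$, so $S$ is nonempty; set $\theta':=\inf S$. By Step 3, $\theta'\ge\theta^*+\eta>\theta^*$. Since $L_\alpha\ge\ell$ on $(\theta^*,\theta')$ while points of $S$ accumulate at $\theta'$ from the right, continuity gives $L_\alpha(\theta')=\ell$. This excludes $\theta'=\theta_\ell$, so $\theta'<\theta_\ell$, and Step 1 then gives $L_\alpha'(\theta')>0$. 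Hence $L_\alpha>\ell$ on a right neighborhood of $\theta'$, contradicting $\theta'=\inf S$. Therefore $L_\alpha<\ell$ for all $\theta>0$, and in particular $\sup_{\theta>0}L_\alpha(\theta)\le\ell$.

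The main point requiring care is the topology of the crossing argument: the first crossing point and the infimum in Step 4 must be handled correctly at the boundary case $\theta'=\theta_\ell$, and the limit $L_\alpha(0^+)=0$ is what makes the first crossing well defined. The analytic content is the one-line sign computation in Step 1.
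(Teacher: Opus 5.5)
Your proposal is correct and follows essentially the same barrier argument as the paper. The first hitting time of level $\ell$ is forced strictly below $\theta_\ell$ by the sign of $1-(2-\alpha/\theta)\ell$, and a first return to level $\ell$ before $\theta_\ell$ is impossible because the ODE forces a positive slope there. The differences are cosmetic: you define the return point as the infimum of $\{L_\alpha<\ell\}$, and you use only differentiability at that point rather than continuity of $L_\alpha'$. One small point: deducing $\theta^*\le\theta_\ell$ needs the slope to be strictly negative for $\theta>\theta_\ell$. Your explicit formula gives this, but your summary of Step 1 only records $L_\alpha'\le 0$.
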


\noindent \textbf{Proof sketch} \ Here we explain the key intuition and ideas of the proof, while the full proof is in Appendix~\ref{appen: propL}. Assume for contradiction that there exists $\theta_0>0$ with $L_\alpha(\theta_0)\ge \ell$.
By standard limit calculation, we have
$
0\le L_\alpha(\theta)
\to 0, \text{ as } \theta \to 0
$.
Therefore, $L_\alpha(\theta)<\ell$ for all sufficiently small $\theta>0$.
By continuity of $L_\alpha$ , the set
$\{\theta>0:\ L_\alpha(\theta)=\ell\}$ is nonempty, and the first hitting time
\[
\tau:=\inf\{\theta>0:\ L_\alpha(\theta)=\ell\}
\]
is therefore well defined.
Then $\tau>0$, $L_\alpha(\theta)<\ell$ for all $\theta\in(0,\tau)$, and $L_\alpha(\tau)=\ell$.
Using a standard argument from calculus (Lemma~\ref{lem:hitting-below}), this implies that $L_\alpha'(\tau)\ge 0$.

Next, using the ODE property, we make the following key observation about the structure  of $L_\alpha'(\theta)$ when $L_\alpha$ passes the $y=\ell$ threshold. Define for $\theta>0$ the scalar function
\[
\Psi(\theta):=1-\Bigl(2-\frac{\alpha}{\theta}\Bigr)\ell.
\]
Since $L_\alpha$ satisfies the ODE \eqref{eq:ODE}, at any $\theta>0$ with $L_\alpha(\theta)=\ell$ we have
\begin{equation*} 
L_\alpha'(\theta)=\Psi(\theta).
\end{equation*}
Also, a simple calculation shows that
\begin{equation*} 
\Psi(\theta)>0 \ \text{ for }\ \theta<\theta_\ell,
\qquad
\Psi(\theta)=0 \ \text{ for }\ \theta=\theta_\ell,
\qquad
\Psi(\theta)<0 \ \text{ for }\ \theta>\theta_\ell.
\end{equation*}

Note that we cannot conclude that $\theta_\ell$ is global maximizer because $L_\alpha'(\theta)=\Psi(\theta)$ only holds for $L_\alpha(\theta)=\ell$. However, the above observation shows that every time when $L_\alpha$ passes the line $y=\ell$ before $\theta_\ell$, the function must be increasing, so it will not be able to go down and satisfy our assumption $L_\alpha(\theta_\ell)<\ell$. This is sufficient to attain contradiction. \hfill\BlackBox
\bigskip

Having shown Lemma~\ref{lem:onepoint-1}, we just need to check
\[
L_{3/4}\Bigl(\theta_{\ell_0}\Bigr)<\ell_0,
\]
where $\ell_0=\frac{497}{500}$ (see Lemma~\ref{lem:L-onepoint} for details) and then conclude that $B_{\sup}(3/4)\le \frac{201}{200}\cdot\frac{497}{500}=\frac{99897}{100000}<1$ (see Lemma~\ref{lem:large-t} for details).

Finally, combining 
$A_{\sup}(3/4)\le 0.051$ and 
$
B_{\sup}(3/4)\le \frac{99897}{100000}
$, we have
\[
\frac{A_{\sup}(3/4)}{1-B_{\sup}(3/4)}
\le \frac{\frac{51}{1000}}{\frac{103}{100000}}
= \frac{51}{1000}\cdot\frac{100000}{103}
= \frac{5100}{103}
< 50 < 300.
\]
Therefore, using
Lemma~\ref{prop:struct}, we obtain the desired claim.

\section{Conclusions}
We gave a new last-iterate convergence analysis for a class of SGD algorithms with greedy step size, which includes Randomized Kaczmarz and Randomized Coordinate Descent, among many others, showing that they attain an $O(1/t^{3/4})$ expected convergence rate. Our results provide a direct improvement over the previously known $O(1/t^{1/2})$ guarantee for this setting and have implications for the analysis of catastrophic forgetting in realizable continual learning problems.

\acks{MD was supported in part by NSF CAREER Grant CCF-233865 and a Google ML and Systems Junior Faculty Award. This work was done in part while MD was visiting the Simons Institute for the Theory of Computing. The authors thank Shabarish Chenakkod, Yuji Nakatsukasa, Elizaveta Rebrova, and Mark Rudelson for helpful conversations.}

\bibliography{pap}

\appendix

\section{Additional Related Work}
\label{a:related-work}

In this section, we overview the most relevant prior work related to the convergence analysis of SGD and Randomized Kaczmarz.

\paragraph{Last-iterate convergence of SGD.} Extensive work has been dedicated to last-iterate analysis of SGD, particularly in the general convex and Lipschitz setting \citep{shamir2013stochastic,jain2019making,harvey2019tight,liu2023revisiting,zamani2025exact}. These results essentially match the corresponding guarantees attained by the average iterate in the convex Lipschitz setting. The last-iterate convergence of SGD has also been extensively studied in the interpolation regime \citep{ge2019step,vaswani2019fast,berthier2020tight,varre2021last,wu2022last,liu2023aiming}, which is motivated by modern over-parameterized machine learning models \citep{ma2018power}. However, until the work of \cite{evron2025continual,attia2025fast}, none of these prior results captured last-iterate SGD with the greedy step size, or the Randomized Kaczmarz method.

\paragraph{Connections to continual learning models.} SGD with greedy step size is closely connected with the analysis of continual learning models, and particularly the phenomenon of catastrophic forgetting \citep{mermillod2013stability}. A number of theoretical works have shown that catastrophic forgetting can be mitigated by using randomization \citep{evron2022catastrophic,swartworth2023nearly,cai2024last}. Most notably, \cite{evron2022catastrophic,evron2025continual} showed that convergence bounds for SGD with greedy step size in the smooth interpolation regime can be used to provide bounds for continual linear regression in the realizable setting. Our results, such as Corollary \ref{c:sgd}, can be directly applied to their framework, improving on those guarantees. We note that an alternative approach has been developed by \cite{levinstein2025optimal} that attains optimal bounds for continual linear regression without relying on SGD with greedy step size.

\paragraph{Randomized linear system solvers.} The last-iterate convergence analysis of randomized iterative methods for solving linear systems, such as Randomized Kaczmarz \citep{strohmer2009randomized}, Randomized Coordinate Descent \citep{leventhal2010randomized}, or Sketch-and-Project \citep{gower2015randomized}, has focused primarily on guarantees that depend on some form of a condition number of the problem \citep{derezinski2024recent}, which effectively corresponds to the strongly convex setting in SGD analysis. For example, in the case of Randomized Kaczmarz, \cite{strohmer2009randomized} established convergence of the form $\E\,\|\x_t-\x^*\|^2\leq (1-\rho)^t\|\x_0-\x^*\|^2$ where $\rho = \|\A\|_F^2/\sigma_{\min}^2(\A)$. Faster rates are known for Block Kaczmarz \citep{derezinski2024solving,derezinski2025randomized} and other instances of sketch-and-project \citep{derezinski2022sharp,askotch}, but all of them become vacuous as $\sigma_{\min}(\A)$ tends to zero. \cite{steinerberger2023approximate} studies conditioning-free convergence of a non-constructive Kaczmarz procedure that attains an $O(\log(t)/t)$ rate but only for the optimal sequence of equation selections. As an auxiliary result, we show that a constructive Kaczmarz procedure attains an $O(1/t)$ rate, by choosing a random iterate (instead of the last iterate) in Randomized Kaczmarz.

\section{Calculus Preliminaries}
In this section, we collect auxiliary lemmas that describe standard results from calculus which are useful in our proof of Lemma \ref{l:main-technical}.

\begin{lemma}[Bernoulli's inequality for $0<\alpha\le 1$]\label{lem:bernoulli}
If $0<\alpha\le 1$, then for all $u\ge0$,
\[
(1+u)^\alpha\le 1+\alpha u.
\]
\end{lemma}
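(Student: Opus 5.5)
The plan is to prove Lemma~\ref{lem:bernoulli} by a one-variable concavity argument. Consider $g(u) := 1+\alpha u - (1+u)^\alpha$ on $[0,\infty)$. At the left endpoint we have $g(0) = 1 - 1 = 0$. It therefore suffices to show that $g$ is nondecreasing on $[0,\infty)$. Then $g(u)\ge g(0)=0$ for every $u\ge 0$, which is exactly the claimed inequality $(1+u)^\alpha\le 1+\alpha u$.

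For monotonicity, I will differentiate: $g'(u) = \alpha - \alpha(1+u)^{\alpha-1} = \alpha\bigl(1-(1+u)^{\alpha-1}\bigr)$. Since $0<\alpha\le 1$, the exponent satisfies $\alpha-1\le 0$. Since $1+u\ge 1$, it follows that $(1+u)^{\alpha-1}\le 1$. Hence $g'(u)\ge 0$ for all $u\ge 0$, and $g$ is nondecreasing by the mean value theorem. The boundary case $\alpha=1$ gives equality throughout and needs no separate treatment.

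An equivalent route is to note that $u\mapsto(1+u)^\alpha$ is concave on $(-1,\infty)$ for $\alpha\in(0,1]$. A concave function lies below its tangent line at $u=0$, and that tangent line is $1+\alpha u$. I will present the derivative argument, since it is self-contained.

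There is no real obstacle here. The only point requiring care is the sign of the exponent $\alpha-1$, which is what makes $(1+u)^{\alpha-1}\le 1$ hold for $u\ge 0$; this is precisely where the hypothesis $\alpha\le 1$ enters.
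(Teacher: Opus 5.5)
Your argument is correct. Since $g(0)=0$ and $g'(u)=\alpha\bigl(1-(1+u)^{\alpha-1}\bigr)\ge 0$ for $u\ge 0$ (because $\alpha-1\le 0$), the inequality follows. The paper gives no proof of this lemma and lists it among standard calculus preliminaries; your derivative argument, or equivalently the concavity/tangent-line remark, is the standard justification it relies on.
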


\begin{lemma}[Tangent and quadratic lower bounds for $e^x$]\label{lem:exp-bounds}

(i) For all $x\in\R$, $e^x\ge 1+x$.

(ii) For all $x\ge 0$, $e^x\ge 1+x+\frac{x^2}{2}$.
\end{lemma}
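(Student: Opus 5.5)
For part (i), I will use convexity of the exponential: since $x\mapsto e^x$ is convex and differentiable, it lies above its tangent line at $x=0$. Concretely, I set $g(x)=e^x-1-x$. Then $g(0)=0$ and $g'(x)=e^x-1$, which is negative for $x<0$ and positive for $x>0$. So $g$ decreases on $(-\infty,0]$ and increases on $[0,\infty)$, which makes $0$ its global minimum. This gives $g(x)\ge 0$ for all $x\in\R$, which is exactly the claim.

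For part (ii), I will bootstrap from (i) by integrating once more. Define $h(x)=e^x-1-x-\tfrac{x^2}{2}$ for $x\ge 0$. Then $h(0)=0$ and $h'(x)=e^x-1-x$, which is nonnegative for all $x$ by part (i). Hence $h$ is nondecreasing on $[0,\infty)$, so $h(x)\ge h(0)=0$ for $x\ge 0$.

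An alternative route for $x\ge0$ is to truncate the power series $e^x=\sum_{k\ge0}x^k/k!$, since all its terms are nonnegative there. I will still present the monotonicity argument, because it also covers the negative half-line in (i).

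The only point needing care is the domain restriction in (ii). For $x<0$ the function $h$ is increasing with $h(0)=0$, so $h<0$ there and the quadratic bound fails. The monotonicity argument must therefore be run only on $[0,\infty)$. There is no real obstacle beyond this.
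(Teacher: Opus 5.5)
Your proof is correct and complete. In (i), $g'(x)=e^x-1$ changes sign only at $0$, so $g\ge g(0)=0$ on all of $\R$. In (ii), $h'=g\ge 0$ then gives $h(x)\ge h(0)=0$ for $x\ge 0$. Your remark that the quadratic bound fails for $x<0$ is also accurate.

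The paper gives no proof of this lemma; it lists it among standard calculus preliminaries, so there is no paper argument to compare against.

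Two connections are worth noting. First, your alternative route for (ii) is exactly the reasoning the paper uses elsewhere. In the one-point bound for $L_{3/4}$ it writes $e^x\ge P_m(x)=\sum_{k=0}^m x^k/k!$ for $x\ge 0$, and (ii) is the case $m=2$. Second, your choice of the monotonicity argument for (i) is the one that actually matters. The paper applies (i) at the negative point $x=-1/301$ when bounding $(300/301)^{300}$, and there a power-series truncation would not directly give a lower bound.
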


\begin{lemma}\label{lem:logineq}
    For every real number $x$ with $x>0$ or $x<-1$,
	\[
	\log\!\Bigl(1+\frac{1}{x}\Bigr)\ge \frac{1}{x+1}.
	\]
\end{lemma}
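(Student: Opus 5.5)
The plan is to reduce the inequality to a statement about a single function of one variable, then settle the two sign ranges of $x$ separately. Set $y = 1/x$, so the claim reads $\log(1+y) \ge \frac{y}{1+y}$. The hypotheses on $x$ translate as follows:
\begin{itemize}
\item $x>0$ corresponds to $y>0$;
\item $x<-1$ corresponds to $-1<y<0$.
\end{itemize}
In both cases $1+y>0$, so the logarithm is defined, and $\frac{1}{x+1} = \frac{1/x}{1+1/x} = \frac{y}{1+y}$. It therefore suffices to show that $g(y) := \log(1+y) - \frac{y}{1+y} \ge 0$ for all $y>-1$.

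For this I will use a monotonicity argument. Differentiating gives
\[
g'(y) = \frac{1}{1+y} - \frac{1}{(1+y)^2} = \frac{y}{(1+y)^2}.
\]
This is negative on $(-1,0)$ and positive on $(0,\infty)$. Hence $g$ decreases on $(-1,0]$ and increases on $[0,\infty)$, so it attains its minimum at $y=0$, where $g(0)=0$. This gives $g\ge 0$ on $(-1,\infty)$, which is exactly the claim.

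A derivative-free alternative is to apply Lemma~\ref{lem:exp-bounds}(i) with the value $-\frac{y}{1+y}$. This gives $e^{-y/(1+y)} \ge 1 - \frac{y}{1+y} = \frac{1}{1+y}$. Since $1+y>0$, taking logarithms yields $\log(1+y) \ge \frac{y}{1+y}$ directly.

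There is no real obstacle here. The only point needing care is the change of variables: I must check that the condition $x>0$ or $x<-1$ is exactly what makes $1+1/x>0$, and that the algebraic identity $\frac{1}{x+1}=\frac{y}{1+y}$ holds on both ranges. This holds because $x\neq 0$ and $x\neq -1$.
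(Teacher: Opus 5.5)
Your proof is correct. The substitution $y=1/x$ maps the two admissible ranges exactly onto $y>0$ and $-1<y<0$, and the identity $\frac{1}{x+1}=\frac{y}{1+y}$ holds there. Both the monotonicity argument via $g'(y)=\frac{y}{(1+y)^2}$ and the route through Lemma~\ref{lem:exp-bounds}(i) evaluated at $-\frac{y}{1+y}$ then establish $\log(1+y)\ge\frac{y}{1+y}$.

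There is no proof in the paper to compare against. The lemma is listed among the ``standard results from calculus'' and left unproved, even though Lemma~\ref{lem:A-bound} refers to it ``for the formal proof.'' Your argument fills that gap.

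Your second version fits the appendix better. It gets the inequality in one line from Lemma~\ref{lem:exp-bounds}(i), which the paper already collects, so no separate calculus argument is needed.

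The paper only ever uses the case $x>0$ (in Lemmas~\ref{lem:A-bound} and~\ref{lem:f-decreasing}). Your handling of the $x<-1$ branch is correct but not needed downstream.
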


\begin{lemma}[Derivative at a first hitting time from below]\label{lem:hitting-below}
Let $f:(0,\infty)\to\R$ be continuous and differentiable at $\tau>0$. Fix $\ell\in\R$ and assume
\[
\tau=\inf\{\theta>0:\ f(\theta)=\ell\},
\qquad\text{and}\qquad
f(\theta)<\ell\ \text{ for all }\theta\in(0,\tau).
\]
Then $f'(\tau)\ge0$.
\end{lemma}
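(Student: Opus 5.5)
We argue by contradiction, using only continuity and differentiability at $\tau$. Suppose $f'(\tau)<0$. By the definition of the derivative,
\[
\frac{f(\theta)-f(\tau)}{\theta-\tau}\to f'(\tau)<0 \quad\text{as }\theta\to\tau,
\]
so there is $\varepsilon\in(0,\tau)$ such that the difference quotient is negative for all $\theta$ with $0<|\theta-\tau|<\varepsilon$.

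First we confirm that $f(\tau)=\ell$. If $\tau$ belongs to the set $\{\theta>0: f(\theta)=\ell\}$, this is immediate. Otherwise, $\tau$ is the infimum of that set but not a member, so there is a sequence $\theta_n\downarrow\tau$ with $f(\theta_n)=\ell$. Continuity of $f$ at $\tau$ then gives $f(\tau)=\ell$.

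Now take $\theta\in(\tau-\varepsilon,\tau)$. Then $\theta-\tau<0$, and since the difference quotient is negative, $f(\theta)-f(\tau)>0$. Hence $f(\theta)>f(\tau)=\ell$. This contradicts the hypothesis $f(\theta)<\ell$ for all $\theta\in(0,\tau)$, because $\tau-\varepsilon>0$ places $\theta$ inside $(0,\tau)$. Therefore $f'(\tau)\ge 0$.

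A more direct version of the same argument: for $\theta\in(0,\tau)$,
\[
\frac{f(\theta)-f(\tau)}{\theta-\tau}=\frac{f(\theta)-\ell}{\theta-\tau}>0,
\]
since the numerator is negative and the denominator is negative. Letting $\theta\uparrow\tau$, the left derivative equals $f'(\tau)$, and a limit of positive numbers is nonnegative, so $f'(\tau)\ge0$.

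The only step needing care is establishing $f(\tau)=\ell$ when the infimum might not be attained; continuity handles this, as shown above.
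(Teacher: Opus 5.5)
Your proof is correct, and your ``more direct version'' is exactly the paper's argument: for small $h>0$ the left difference quotient $(f(\tau)-f(\tau-h))/h$ is nonnegative, and its limit is $f'(\tau)$. Your explicit check that $f(\tau)=\ell$ via continuity is a step the paper uses without comment, and the contradiction version is an equivalent repackaging of the same argument.
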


\begin{proof}
For $h>0$ small, $f(\tau-h)<\ell=f(\tau)$, so
\[
\frac{f(\tau)-f(\tau-h)}{h}\ge0.
\]
Since $f$ is differentiable at $\tau$, the desired result holds by definition of derivative.
\end{proof}

\begin{lemma}[Derivative at a first return time from above]\label{lem:hitting-above}
Let $f:(0,\infty)\to\R$ be continuous and differentiable at $\sigma>0$. Fix $\ell\in\R$ and assume
\[
\sigma=\inf\{\theta>\tau:\ f(\theta)=\ell\}
\]
for some $\tau\ge0$, and $f(\theta)>\ell$ for all $\theta\in(\tau,\sigma)$.
Then $f'(\sigma)\le0$.
\end{lemma}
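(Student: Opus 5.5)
The argument mirrors the proof of Lemma~\ref{lem:hitting-below}, applied at the left edge of the hitting set, with the inequality reversed because $f$ now lies above $\ell$ rather than below it. By the definition of $\sigma$ as an infimum together with the hypothesis, every $\theta\in(\tau,\sigma)$ satisfies $f(\theta)>\ell$. Since $f$ is continuous, $f(\sigma)=\ell$: the infimum of the closed-in-$(\tau,\infty)$ level set $\{\theta>\tau: f(\theta)=\ell\}$ is attained whenever $\sigma>\tau$. If $\sigma=\tau$ the interval $(\tau,\sigma)$ is empty, and $\sigma$ is an accumulation point of the level set from the right, so continuity again gives $f(\sigma)=\ell$. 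I would record this as the first step, using continuity of $f$ on $(0,\infty)$ and $\sigma>0$.

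The second step handles the main case $\sigma>\tau$. For every sufficiently small $h>0$ we have $\sigma-h\in(\tau,\sigma)$, hence $f(\sigma-h)>\ell=f(\sigma)$. This gives
\[
\frac{f(\sigma)-f(\sigma-h)}{h}<0.
\]
Since $f$ is differentiable at $\sigma$, the left difference quotient converges to $f'(\sigma)$ as $h\downarrow 0$, so $f'(\sigma)\le 0$.

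The only potential obstacle is the degenerate case $\sigma=\tau$, where no left-side information is available. In the intended use, $\tau$ is the first hitting time from Lemma~\ref{lem:hitting-below} and $f$ exceeds $\ell$ immediately after $\tau$, so $\sigma>\tau$. I would therefore either state the lemma under the implicit assumption $\sigma>\tau$ (which is what "$f(\theta)>\ell$ on $(\tau,\sigma)$" is meant to convey) or note this case explicitly as excluded. No further computation is required.
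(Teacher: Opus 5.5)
Your argument for the main case $\sigma>\tau$ is the same as the paper's proof. For small $h>0$ the point $\sigma-h$ lies in $(\tau,\sigma)$, so $\frac{f(\sigma)-f(\sigma-h)}{h}<0$, and differentiability at $\sigma$ gives $f'(\sigma)\le 0$. Your continuity check that $f(\sigma)=\ell$ makes explicit a step the paper uses without comment.

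The one thing to fix is how you handle $\sigma=\tau$. You should not exclude that case or add a hypothesis, because the lemma holds there as stated. As you noted yourself, $\sigma$ is then the limit of points $\theta_n\downarrow\sigma$ with $f(\theta_n)=\ell=f(\sigma)$. Differentiability therefore gives $f'(\sigma)=\lim_{n\to\infty}\frac{f(\theta_n)-f(\sigma)}{\theta_n-\sigma}=0\le 0$, which completes the proof with no change to the statement.

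The paper's one-line proof also tacitly assumes $\sigma>\tau$, since its claim $f(\sigma-h)>\ell$ needs $\sigma-h\in(\tau,\sigma)$. In the paper's application $\sigma>\tau$ holds automatically, because $L_\alpha>\ell$ on a right neighborhood of $\tau$.
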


\begin{proof}
For $h>0$ small, $f(\sigma-h)>\ell=f(\sigma)$, so
\[
\frac{f(\sigma)-f(\sigma-h)}{h}\le0.
\]
Since $f$ is differentiable at $\sigma$, the desired result holds by definition of derivative.
\end{proof}

\section{Bounds for $A_{\sup}(\alpha)$}
\label{a:bounds-for-a}
In this section, we provide the intermediate results for bounding the $A_t(\rho)$, which is the easier of the two terms analyzed in the proof of Lemma~\ref{l:main-technical}.

\begin{lemma}[Exact maximizer of $A_t$]\label{lem:A-max}
Fix $t\ge1$. The function $\rho\mapsto A_t(\rho)=\rho(1-2\rho)^t$ on $(0,\tfrac12]$
attains its maximum at $\rho^\ast=\frac{1}{2(t+1)}$, with
\[
\sup_{\rho\in(0,1/2]}A_t(\rho)
=\frac{1}{2(t+1)}\Bigl(\frac{t}{t+1}\Bigr)^t.
\]
\end{lemma}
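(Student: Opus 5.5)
This is a one-variable calculus problem, and I would solve it by differentiation. On $(0,\tfrac12]$ the function $A_t(\rho)=\rho(1-2\rho)^t$ is smooth. It is strictly positive on $(0,\tfrac12)$, it tends to $0$ as $\rho\to0^+$, and it equals $0$ at $\rho=\tfrac12$, because $t\ge1$. So $A_t$ attains a maximum at some interior critical point, and it only remains to find that point.

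The derivative is
\begin{align*}
A_t'(\rho)=(1-2\rho)^t-2t\rho(1-2\rho)^{t-1}=(1-2\rho)^{t-1}\bigl(1-2\rho-2t\rho\bigr)=(1-2\rho)^{t-1}\bigl(1-2(t+1)\rho\bigr).
\end{align*}
On $(0,\tfrac12)$ the factor $(1-2\rho)^{t-1}$ is positive. The sign of $A_t'$ is therefore the sign of $1-2(t+1)\rho$. This is positive for $\rho<\rho^\ast:=\frac{1}{2(t+1)}$ and negative for $\rho>\rho^\ast$. Hence $A_t$ increases strictly up to $\rho^\ast$ and decreases strictly afterwards. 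Since $\rho^\ast\le\tfrac14<\tfrac12$, the point $\rho^\ast$ lies in the domain and is the unique maximizer.

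Substituting, $1-2\rho^\ast=1-\frac1{t+1}=\frac{t}{t+1}$, which gives
\begin{align*}
A_t(\rho^\ast)=\frac{1}{2(t+1)}\Bigl(\frac{t}{t+1}\Bigr)^t.
\end{align*}
No step is a real obstacle. The only care needed is to use $t\ge1$, which makes $(1-2\rho)^{t-1}$ a positive factor with no issue at $\rho=\tfrac12$ and makes $A_t(\tfrac12)=0$, so the endpoint cannot be the maximizer.
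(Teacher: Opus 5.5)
Your proposal is correct and follows essentially the same route as the paper: you factor the derivative as $(1-2\rho)^{t-1}\bigl(1-2(t+1)\rho\bigr)$, identify $\rho^\ast=\frac{1}{2(t+1)}$ as the unique critical point, and use $A_t\to0$ as $\rho\to0^+$ together with $A_t(\tfrac12)=0$. Your explicit sign analysis (strictly increasing before $\rho^\ast$, strictly decreasing after) makes the global-maximum conclusion slightly more airtight than the paper's brief remark.
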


\begin{proof}
For $\rho\in(0,\tfrac12)$, direct differentiation shows that
\[
A_t'(\rho)=(1-2\rho)^t+\rho\cdot t(1-2\rho)^{t-1}(-2)
=(1-2\rho)^{t-1}\bigl(1-2(t+1)\rho\bigr).
\]
Since $(1-2\rho)^{t-1}>0$ on $(0,\tfrac12)$, the unique critical point is $\rho^\ast=\frac{1}{2(t+1)}$.
In addition, $A_t(\rho)\to0$ as $\rho\to 0$ and $A_t(\tfrac12)=0$, and therefore the unique critical point is the global maximizer.
Evaluating $A_t$ at $\rho^\ast$ gives the stated formula.
\end{proof}
In the following lemma, we provide a bound for $A_{\sup}(\alpha)$ that is used in the analysis of the $\alpha=3/4$ version of Lemma \ref{l:main-technical} that is described in Section \ref{s:main-technical}.
\begin{lemma}[Bound for $A_{\sup}(\alpha)$ for $t\ge 300$]\label{lem:A-bound}
Let $\alpha_0=\tfrac34$. Then
\[
A_{\sup}(\alpha_0)\le \frac{363307}{7228832}<0.051.
\]
\end{lemma}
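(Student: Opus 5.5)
By Lemma~\ref{lem:A-max}, for each fixed $t$ the inner supremum is explicit, so
\[
A_{\sup}(\alpha_0)=\sup_{t\ge 300}\ \frac{(t+2)^{3/4}}{2(t+1)}\Bigl(\frac{t}{t+1}\Bigr)^t .
\]
The plan is to bound the two factors separately, each by a monotone function of $t$. Each bound is then evaluated at the endpoint $t=300$, and the resulting numerical constant is replaced by a rational upper bound. This should produce a fraction of the stated form $363307/7228832$.

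\textbf{The polynomial factor.} Consider $g(t)=(t+2)^{3/4}/(t+1)$. Its logarithmic derivative is
\[
\frac{3}{4(t+2)}-\frac{1}{t+1},
\]
which is negative because $3(t+1)<4(t+2)$. So $g$ is decreasing, and
\[
\frac{(t+2)^{3/4}}{2(t+1)}\le \frac{302^{3/4}}{602}\qquad\text{for all } t\ge 300.
\]
To make this rational, I would choose a rational $q$ with $q^4\ge 302^3$ and verify that inequality by integer arithmetic. Since $302^{3/4}\approx 72.4$, a choice such as $q=72.5$ should work.

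\textbf{The exponential factor.} Write
\[
\Bigl(\frac{t}{t+1}\Bigr)^t=\exp\bigl(-t\log(1+1/t)\bigr).
\]
Lemma~\ref{lem:logineq} with $x=t$ gives $\log(1+1/t)\ge 1/(t+1)$, so the factor is at most $\exp(-x_t)$ with $x_t=t/(t+1)$. Since $x_t$ is increasing, we have $x_t\ge 300/301$ for all $t\ge300$. Part (ii) of Lemma~\ref{lem:exp-bounds} then gives the rational bound
\[
e^{-x_t}\le\frac{1}{1+x_t+x_t^2/2}\le \frac{1}{1+\frac{300}{301}+\frac12\bigl(\frac{300}{301}\bigr)^2}\approx 0.40 .
\]

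\textbf{Combining, and the main obstacle.} Multiplying the two bounds gives roughly $0.1205\times 0.40\approx 0.048<0.051$. Simplifying the product of these rationals should give a fraction of the form stated in the lemma. The only delicate point is that the quadratic lower bound for $e^x$ is fairly lossy near $x=1$: it yields about $0.40$, whereas $e^{-1}\approx 0.368$.

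If the slack below $0.051$ turns out to be too thin after rounding $302^{3/4}$, I would tighten the exponential step instead. One option is to write $e^{-x_t}=e^{-1}e^{1-x_t}$ and use a rational upper bound for $e^{-1}$ via $e\ge 1+1+\tfrac12+\tfrac16$, together with $e^{1/301}\le 1+\tfrac{2}{301}$. The rest of the argument is routine monotonicity and arithmetic verification.
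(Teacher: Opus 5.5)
Your proposal is correct and is essentially the paper's argument. It uses the same ingredients: Lemma~\ref{lem:A-max}, the inequality $\log(1+1/t)\ge 1/(t+1)$ to reach $e^{-300/301}$, and the quadratic lower bound on $e^x$ giving $\frac{90601}{225901}$. The only structural difference is that you bound the two factors by separately decreasing functions, whereas the paper shows the whole product $f$ is decreasing via its log-derivative.

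One small correction: your rational bound $302^{3/4}\le 72.5$ is valid, since $145^4=442050625\ge 16\cdot 302^3=440697728$, but it yields $A_{\sup}(\alpha_0)\le\frac{43645}{903604}\approx 0.0483$ rather than the stated fraction. The stated fraction comes from the paper's looser estimate $301^{-1/4}\le\tfrac14$ combined with Bernoulli. So you need one extra cross-multiplication showing $\frac{43645}{903604}<\frac{363307}{7228832}$ to conclude the lemma exactly as stated. Your fallback tightening of the exponential step is also unnecessary, because the slack below $0.051$ is ample.
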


\begin{proof}
Fix an integer $t\ge 300$.
By Lemma~\ref{lem:A-max}, we have
\begin{equation}\label{eq:At-sup-rho}
\sup_{\rho\in(0,\tfrac12]}(t+2)^{\alpha_0}A_t(\rho)
=\frac{(t+2)^{3/4}}{2(t+1)}\Bigl(\frac{t}{t+1}\Bigr)^t.
\end{equation}
Define, for real $x>0$,
\begin{equation}\label{eq:def-fx}
f(x):=\frac{(x+2)^{3/4}}{2(x+1)}\Bigl(\frac{x}{x+1}\Bigr)^x.
\end{equation}
Then \eqref{eq:At-sup-rho} reads $\sup_{\rho}(t+2)^{\alpha_0}A_t(\rho)=f(t)$.

Since $f(x)>0$ for $x>0$, we define
\[
g(x):=\log f(x)\qquad (x>0).
\]
We will compute $g'(x)$ and show $g'(x)<0$ for all $x>0$.
Direct differentiation show that  for every $x>0$,
\begin{align}
g'(x)
&=\frac{3}{4(x+2)}-\frac{1}{x+1}
+\log\!\Bigl(\frac{x}{x+1}\Bigr)+\frac{1}{x+1}\notag\\
&=\frac{3}{4(x+2)}+\log\!\Bigl(\frac{x}{x+1}\Bigr)
=\frac{3}{4(x+2)}-\log\!\Bigl(1+\frac{1}{x}\Bigr).\label{eq:gprime}
\end{align}

Using the inequality $\log\!\Bigl(1+\frac1x\Bigr)
\ge \frac{1}{x+1}$ (see Lemma~\ref{lem:logineq} for the formal proof), we have
\[
g'(x)\le \frac{3}{4(x+2)}-\frac{1}{x+1}
=\frac{3(x+1)-4(x+2)}{4(x+1)(x+2)}
=-\frac{x+5}{4(x+1)(x+2)}<0.
\]
for all $x>0$.

Therefore, the function $f$ is strictly decreasing on $(0,+\infty)$, so we have
\begin{equation}\label{eq:Asup-le-f300}
A_{\sup}(\alpha_0)\le f(300).
\end{equation}

Direct calculation (see Lemma~\ref{f300} for details) shows that
\begin{equation}\label{eq:f300}
f(300)=\frac{302^{3/4}}{2\cdot 301}\Bigl(\frac{300}{301}\Bigr)^{300}<0.051
\end{equation}

\end{proof}

\begin{lemma}\label{f300}
Let 
\begin{equation}\label{eq:def-fx2}
f(x):=\frac{(x+2)^{3/4}}{2(x+1)}\Bigl(\frac{x}{x+1}\Bigr)^x.
\end{equation}
defined on $x \in (0,+\infty)$. Then we have
\begin{equation}
f(300)=\frac{302^{3/4}}{2\cdot 301}\Bigl(\frac{300}{301}\Bigr)^{300}<0.051
\end{equation}

\end{lemma}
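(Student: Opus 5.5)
We need $f(300)=\frac{302^{3/4}}{602}\bigl(\frac{300}{301}\bigr)^{300}<0.051$. The plan is to bound the two factors separately using only elementary inequalities, namely Lemma~\ref{lem:logineq} and simple rational bounds. This lets us avoid any reliance on floating-point evaluation.

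\textbf{Step 1: the exponential factor.} Write
\[
\Bigl(\frac{300}{301}\Bigr)^{300}=\exp\Bigl(-300\log\bigl(1+\tfrac1{300}\bigr)\Bigr).
\]
By Lemma~\ref{lem:logineq} with $x=300$ we have $\log(1+\frac1{300})\ge\frac1{301}$. Hence the factor is at most $e^{-300/301}=e^{-1}\,e^{1/301}$.

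To get a clean rational bound, use $e^{-1}<0.36788$ together with $e^{1/301}\le 1+\frac{1}{300}$. The latter holds because $e^{y}\le \frac{1}{1-y}$ for $y<1$, so $e^{1/301}\le \frac{301}{300}$. This gives
\[
\Bigl(\frac{300}{301}\Bigr)^{300}\le 0.36788\cdot\frac{301}{300}<0.36911.
\]
If a fully certified argument is preferred, $e^{-1}<0.3679$ can itself be justified by a truncated alternating series.

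\textbf{Step 2: the algebraic factor.} We need an upper bound on $302^{3/4}$. Since $302^{3/4}=\bigl(302^3\bigr)^{1/4}$, it suffices to exhibit a rational $q$ with $q^4\ge 302^3=27{,}543{,}608$. The value $72.5^4$ is about $27.63$ million, which exceeds this, so $302^{3/4}<72.5$. Therefore $\frac{302^{3/4}}{602}<\frac{72.5}{602}<0.12044$.

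\textbf{Step 3: combine.} Multiplying the two bounds,
\[
f(300)<0.12044\cdot 0.36911\approx 0.04446<0.051.
\]
This leaves ample margin.

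The only delicate point is certifying $e^{-1}<0.3679$ and the fourth-power comparison in exact arithmetic. The slack is large (roughly $0.0445$ versus $0.051$), so crude rational bounds suffice; if needed, one could even replace $72.5$ by $73$ and $0.36911$ by $0.37$ and still conclude. The paper's exact bound $\frac{363307}{7228832}\approx0.05026$ is presumably obtained in a similar way with slightly different rational estimates; any such choice works.
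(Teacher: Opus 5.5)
Your proof is correct and follows essentially the paper's route. Both arguments first reduce to $(300/301)^{300}\le e^{-300/301}$ (the paper via $e^{-1/301}\ge 1-\frac{1}{301}$, you via the equivalent $\log(1+\frac{1}{300})\ge\frac{1}{301}$), then bound the exponential and algebraic factors separately with elementary estimates. The only difference is in those final estimates: the paper uses the fully rational bounds $e^{x}\ge 1+x+\frac{x^2}{2}$ and $301^{1/4}\ge 4$ (plus Bernoulli), getting the looser $\frac{363307}{7228832}\approx 0.0503$, whereas your $302^3<72.5^4$ and decimal bound on $e^{-1}$ give about $0.0445$. One small point: the certified $e^{-1}<0.3679$ yields $0.3679\cdot\frac{301}{300}\approx 0.36913$ rather than your stated $0.36911$, but your fallback to $0.37$ makes this immaterial.
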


\begin{proof} From \eqref{eq:def-fx2}, we have
\begin{equation}
f(300)=\frac{302^{3/4}}{2\cdot 301}\Bigl(\frac{300}{301}\Bigr)^{300}.
\end{equation}
We bound the two factors in \eqref{eq:f300}.

\smallskip\noindent
{(i) Bound $\left(\frac{300}{301}\right)^{300}$.}
By Lemma~\ref{lem:exp-bounds} applied at $x=-\frac{1}{301}$,
\[
e^{-\frac{1}{301}}\ge 1-\frac{1}{301}=\frac{300}{301}.
\]
Raising both sides to the power $300$ (which preserves the inequality since both sides are positive) yields
\[
\Bigl(\frac{300}{301}\Bigr)^{300}\le e^{-\frac{300}{301}}.
\]
By Lemma~\ref{lem:exp-bounds} (second inequality) applied at $x=\frac{300}{301}\ge 0$,
\[
e^{-\frac{300}{301}}
\le \frac{1}{1+\frac{300}{301}+\frac12\left(\frac{300}{301}\right)^2}
=\frac{90601}{225901}.
\]
Therefore,
\begin{equation}\label{eq:pow-bound}
\Bigl(\frac{300}{301}\Bigr)^{300}\le \frac{90601}{225901}.
\end{equation}

\smallskip\noindent
{(ii) Bound $\frac{302^{3/4}}{301}$.}
Write $302=301\left(1+\frac{1}{301}\right)$, hence
\[
\frac{302^{3/4}}{301}
=301^{-1/4}\Bigl(1+\frac{1}{301}\Bigr)^{3/4}.
\]
Since $301\ge 256=4^4$, we have $301^{-1/4}\le \frac14$.
By Lemma~\ref{lem:bernoulli} with $\alpha=\frac34$ and $u=\frac{1}{301}$,
\[
\Bigl(1+\frac{1}{301}\Bigr)^{3/4}\le 1+\frac{3}{4\cdot 301}=\frac{1207}{1204}.
\]
Therefore
\begin{equation}\label{eq:prefactor-bound}
\frac{302^{3/4}}{301}\le \frac14\cdot\frac{1207}{1204}=\frac{1207}{4816}.
\end{equation}

\smallskip\noindent
{(iii) Combine \eqref{eq:f300}, \eqref{eq:pow-bound}, and \eqref{eq:prefactor-bound}.}
Using \eqref{eq:pow-bound} and \eqref{eq:prefactor-bound} in \eqref{eq:f300} gives
\[
f(300)\le \frac12\cdot \frac{1207}{4816}\cdot \frac{90601}{225901}
=\frac{1207\cdot 90601}{2\cdot 4816\cdot 225901}.
\]
Compute the numerator and denominator:
\[
1207\cdot 90601=109355407,
\qquad
2\cdot 4816\cdot 225901=2175878432.
\]
Since $109355407=301\cdot 363307$ and $2175878432=301\cdot 7228832$, we simplify to
\begin{equation}\label{eq:f300-rational}
f(300)\le \frac{363307}{7228832}.
\end{equation}
Finally, to verify $\frac{363307}{7228832}<0.051=\frac{51}{1000}$, cross-multiply:
\[
363307\cdot 1000=363307000
<368670432=7228832\cdot 51.
\]
Thus $\frac{363307}{7228832}<0.051$.

Therefore, we have
\[
f(300)\le \frac{363307}{7228832}<0.051,
\]
as claimed.
\end{proof}

\section{Properties of $L_\alpha(\theta)$ and connection with $B_{\sup}(\alpha)$}\label{appen: propL}
In this section, we describe the discrete-to-continuous reduction, and the resulting ODE, that is used later to bound $B_{t}(\alpha)$, the second of the two terms in the analysis of Lemma \ref{l:main-technical}.

\begin{remark}[Motivation of the definition of $L_\alpha(\theta)$]\label{rem:motL}
    Write $r:=1-2\rho\in[0,1)$ and $\delta:=-\log r=-\log(1-2\rho)\ge 0$, so that $r=e^{-\delta}$.
Then for $0<\rho<\tfrac12$ we have $\delta>0$ and
\[
(1-2\rho)^{t-i}=r^{\,t-i}=e^{-\delta(t-i)}.
\]
In particular, when $\rho$ is small one has $\delta\sim 2\rho$, so $\delta(t-i)\approx 2\rho(t-i)$.
Now rewrite
\[
B_t(\rho,\alpha)=\rho\sum_{i=1}^t \frac{(1-2\rho)^{t-i}}{i^\alpha}
=\rho\sum_{i=1}^t \frac{e^{-\delta(t-i)}}{i^\alpha}.
\]
Therefore the quantity $t^\alpha B_t(\rho,\alpha)$ 
\[
\rho\,t\int_0^1 e^{-\delta t(1-u)}u^{-\alpha}\,du,
\]
which depends on $t$ and $\rho$ only through the combined parameter $\delta t$.
Therefore, by change of variable
\[
\theta:=\frac{\delta t}{2}=\frac{t}{2}\bigl(-\log(1-2\rho)\bigr),
\]
the resulting integral for comparison takes the form
\[
L_\alpha(\theta)=\theta\int_0^1 e^{-2\theta u}(1-u)^{-\alpha}\,du.
\]
In this way the integral comparison reduces the two-parameter family $(t,\rho)$ to the one-parameter family $\theta$.
\end{remark}

\begin{remark}[A simplification of $L_{3/4}$]\label{rem:L-34}
For every $\theta>0$, the function $L_{3/4}$ admits the non-singular representation
\begin{equation}\label{eq:L-34-nonsing}
L_{3/4}(\theta)
=4\theta e^{-2\theta}\int_0^1 e^{2\theta y^4}\,dy
=4\theta\int_0^1 e^{-2\theta(1-y^4)}\,dy.
\end{equation}
In particular, for each fixed $\theta>0$ the integrand $y\mapsto e^{-2\theta(1-y^4)}$ is $C^\infty$ on $[0,1]$.
\end{remark}

\begin{proof}
Fix $\theta>0$. By definition,
\[
L_{3/4}(\theta)=\theta\int_0^1 e^{-2\theta u}(1-u)^{-3/4}\,du.
\]
Since $\int_0^1 (1-u)^{-3/4}\,du=4<\infty$ and $0\le e^{-2\theta u}\le 1$, the integral is absolutely convergent
(and is understood as an improper integral at $u=1$).

First, for $\varepsilon\in(0,1)$ set
\[
I_\varepsilon:=\int_0^{1-\varepsilon} e^{-2\theta u}(1-u)^{-3/4}\,du.
\]
On $[0,1-\varepsilon]$ the substitution $w=1-u$ is legitimate (it is $C^1$ and strictly monotone), giving
\[
I_\varepsilon=\int_{\varepsilon}^{1} e^{-2\theta(1-w)}w^{-3/4}\,dw
=e^{-2\theta}\int_{\varepsilon}^{1} e^{2\theta w}w^{-3/4}\,dw.
\]
Letting $\varepsilon\to 0^+$ (which is valid since the original integral converges absolutely) yields
\begin{equation}\label{eq:L34-step1}
L_{3/4}(\theta)=\theta e^{-2\theta}\int_0^1 e^{2\theta w}w^{-3/4}\,dw.
\end{equation}

Second, for $\varepsilon\in(0,1)$ define
\[
J_\varepsilon:=\int_{\varepsilon}^{1} e^{2\theta w}w^{-3/4}\,dw.
\]
On $[\varepsilon,1]$ the substitution $w=y^4$ is legitimate (again $C^1$ and strictly increasing), and we obtain
\[
J_\varepsilon
=\int_{\varepsilon^{1/4}}^{1} e^{2\theta y^4}(y^4)^{-3/4}\,4y^3\,dy
=4\int_{\varepsilon^{1/4}}^{1} e^{2\theta y^4}\,dy.
\]
As $\varepsilon\to 0^+$, the left-hand side satisfies $J_\varepsilon\to\int_0^1 e^{2\theta w}w^{-3/4}\,dw$
by the definition of the improper integral.
On the right-hand side, since $0\le e^{2\theta y^4}\le e^{2\theta}$ on $[0,1]$,
\[
0\le 4\int_0^{\varepsilon^{1/4}} e^{2\theta y^4}\,dy \le 4e^{2\theta}\varepsilon^{1/4}\xrightarrow[\varepsilon\to 0^+]{}0,
\]
so
\[
4\int_{\varepsilon^{1/4}}^{1} e^{2\theta y^4}\,dy \xrightarrow[\varepsilon\to 0^+]{} 4\int_{0}^{1} e^{2\theta y^4}\,dy.
\]
Therefore,
\[
\int_0^1 e^{2\theta w}w^{-3/4}\,dw = 4\int_0^1 e^{2\theta y^4}\,dy.
\]
Insert this into \eqref{eq:L34-step1} to obtain
\[
L_{3/4}(\theta)=4\theta e^{-2\theta}\int_0^1 e^{2\theta y^4}\,dy,
\]
which is the first equality in \eqref{eq:L-34-nonsing}. The second equality follows from
$e^{-2\theta}e^{2\theta y^4}=e^{-2\theta(1-y^4)}$.

Finally, since $y\mapsto 1-y^4$ is a polynomial and $x\mapsto e^{x}$ is $C^\infty$ on $\R$,
the composition $y\mapsto e^{-2\theta(1-y^4)}$ is $C^\infty$ on $[0,1]$ for each fixed $\theta>0$.
\end{proof}

\begin{lemma}[Discrete to continuous]\label{lem:Bt-dom}
Fix $0<\alpha<1$, $t\ge1$, and $\rho\in(0,\tfrac12)$.
Then
\begin{equation}\label{eq:Bt-dom}
t^\alpha B_t(\rho,\alpha)\le L_\alpha\Bigl(\frac{t}{2}\bigl(-\log(1-2\rho)\bigr)\Bigr).
\end{equation}
\end{lemma}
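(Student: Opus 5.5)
The plan is to undo the change of variables in the definition of $L_\alpha$, so that the claim becomes a termwise comparison between the sum $B_t(\rho,\alpha)$ and an integral over $[0,t]$.

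\textbf{Step 1: rewrite the right-hand side.} Set $\delta:=-\log(1-2\rho)>0$ as in Remark~\ref{rem:motL}, so $1-2\rho=e^{-\delta}$, and let $\theta=\delta t/2$. Then
\[
L_\alpha(\theta)=\frac{\delta t}{2}\int_0^1 e^{-\delta t u}(1-u)^{-\alpha}\,du .
\]
Substitute $s=t(1-u)$, so $du=ds/t$ and $(1-u)^{-\alpha}=t^{\alpha}s^{-\alpha}$. This gives
\[
L_\alpha(\theta)=\frac{\delta}{2}\,t^\alpha\int_0^t e^{-\delta(t-s)}s^{-\alpha}\,ds .
\]
The integral is improper at $s=0$ but absolutely convergent since $\alpha<1$, so the substitution is justified by the same $\varepsilon\to0$ argument as in Remark~\ref{rem:L-34}.

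\textbf{Step 2: reduce to a termwise bound.} After Step 1 it suffices to prove
\[
\rho\sum_{i=1}^t e^{-\delta(t-i)}i^{-\alpha}\le \frac{\delta}{2}\int_0^t e^{-\delta(t-s)}s^{-\alpha}\,ds .
\]
Split $[0,t]$ into the unit intervals $[i-1,i]$ for $i=1,\dots,t$; all integrands are nonnegative. It then suffices to show, for each $i$,
\[
\rho\, e^{-\delta(t-i)}i^{-\alpha}\le \frac{\delta}{2}\int_{i-1}^{i} e^{-\delta(t-s)}s^{-\alpha}\,ds .
\]

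\textbf{Step 3: prove the termwise bound.} This is where I expected the main difficulty. On $[i-1,i]$ the factor $s^{-\alpha}$ is decreasing while $e^{-\delta(t-s)}$ is increasing, so a naive monotone comparison of the product fails. The resolution is to bound only $s^{-\alpha}\ge i^{-\alpha}$ and integrate the exponential exactly:
\[
\int_{i-1}^{i}e^{-\delta(t-s)}\,ds=e^{-\delta(t-i)}\,\frac{1-e^{-\delta}}{\delta}=e^{-\delta(t-i)}\,\frac{2\rho}{\delta},
\]
using $1-e^{-\delta}=2\rho$. Multiplying by $\tfrac{\delta}{2}i^{-\alpha}$ gives exactly $\rho\,e^{-\delta(t-i)}i^{-\alpha}$. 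So the termwise inequality holds, with equality up to the $s^{-\alpha}\ge i^{-\alpha}$ step.

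Summing over $i=1,\dots,t$ and multiplying by $t^\alpha$ gives \eqref{eq:Bt-dom}. The identity $1-e^{-\delta}=2\rho$ is what makes this choice of $\delta$ exactly calibrated, with no loss from the cruder estimate $\delta\ge 2\rho$.
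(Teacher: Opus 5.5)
Your proposal is correct and is essentially the paper's proof written in the reversed variable $s=t-x$: your exact integration of the exponential over $[i-1,i]$ using $1-e^{-\delta}=2\rho$ is the paper's identity $\rho r^k=\tfrac12\int_k^{k+1}\delta e^{-\delta x}\,dx$ (with $k=t-i$), and your bound $s^{-\alpha}\ge i^{-\alpha}$ is the paper's $(t-k)^{-\alpha}\le (t-x)^{-\alpha}$. The only difference is presentational: you rewrite $L_\alpha$ into sum-friendly form first, whereas the paper starts from the sum and rescales $x=tu$ at the end.
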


\begin{proof}
Set $r:=1-2\rho\in(0,1)$ and $\delta:=-\log r>0$, so $r=e^{-\delta}$.
Reindex with $k=t-i$:
\[
B_t(\rho,\alpha)=\rho\sum_{k=0}^{t-1}\frac{r^k}{(t-k)^\alpha}.
\]
Using $\rho=\frac{1-r}{2}$, for each $k\ge0$, we have
\[
\rho r^k=\frac{1-r}{2}r^k=\frac12(r^k-r^{k+1}).
\]
Since $r=e^{-\delta}$, we have
\[
r^k-r^{k+1}=e^{-\delta k}-e^{-\delta(k+1)}
=\int_k^{k+1}\delta e^{-\delta x}\,dx.
\]
Therefore, we have
\[
\rho r^k=\frac12\int_k^{k+1}\delta e^{-\delta x}\,dx.
\]
Substitute the above identity into the sum, we have
\[
B_t(\rho,\alpha)
=\frac12\sum_{k=0}^{t-1}\int_k^{k+1}\delta e^{-\delta x}\,\frac{dx}{(t-k)^\alpha}.
\]
For $x\in[k,k+1]$, we have $x\ge k$ so $t-x\le t-k$, and since $y\mapsto y^{-\alpha}$ is decreasing on $(0,\infty)$,
\[
(t-k)^{-\alpha}\le (t-x)^{-\alpha}.
\]
Therefore
\[
B_t(\rho,\alpha)\le \frac12\int_0^t \delta e^{-\delta x}(t-x)^{-\alpha}\,dx.
\]
Using change of variable $x=tu$, we have
\[
B_t(\rho,\alpha)\le \frac12\,\delta t^{1-\alpha}\int_0^1 e^{-\delta t u}(1-u)^{-\alpha}\,du.
\]
Multiplying both sides by $t^\alpha$ and setting $\theta=\delta t/2$ shows the desired inequality \eqref{eq:Bt-dom}.
\end{proof}

\begin{lemma}[Endpoint $\rho=\tfrac12$]\label{lem:Bt-endpoint}
Fix $0<\alpha<1$ and $t\ge1$. Then
\[
B_t\Bigl(\tfrac12,\alpha\Bigr)=\frac{1}{2t^\alpha},
\qquad\text{equivalently}\qquad
t^\alpha B_t\Bigl(\tfrac12,\alpha\Bigr)=\frac12.
\]
\end{lemma}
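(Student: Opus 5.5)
At $\rho=\tfrac12$ we have $1-2\rho=0$, so the plan is to compute $B_t(\tfrac12,\alpha)$ directly from its definition rather than going through the integral comparison of Lemma~\ref{lem:Bt-dom}. That comparison degenerates here anyway, since $\delta=-\log(1-2\rho)=+\infty$.

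Substituting $\rho=\tfrac12$ into the definition gives
\[
B_t\Bigl(\tfrac12,\alpha\Bigr)=\frac12\sum_{i=1}^t \frac{0^{\,t-i}}{i^\alpha}.
\]
We adopt the convention $0^0=1$, which is the one consistent with the recursion \eqref{eq:lambda-rec}: there the factor $(1-2\rho_k)^{t-i}$ comes from applying the multiplier $(1-2\rho_k)$ exactly $t-i$ times, so zero applications give $1$. With this convention, every term with $i<t$ vanishes because its exponent $t-i\ge 1$. The only surviving term is $i=t$, which contributes $\tfrac12\cdot t^{-\alpha}$. Hence $B_t(\tfrac12,\alpha)=\frac{1}{2t^\alpha}$, and multiplying by $t^\alpha$ gives the equivalent form $t^\alpha B_t(\tfrac12,\alpha)=\tfrac12$.

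The only point needing care is the $0^0$ convention. It should be stated explicitly as matching the reindexed form $\rho\sum_{k=0}^{t-1} r^k (t-k)^{-\alpha}$ used in Lemma~\ref{lem:Bt-dom}, where the $k=0$ term is $\rho\, t^{-\alpha}$ for every $r$, including $r=0$. The rest is immediate.
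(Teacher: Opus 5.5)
Your proof is correct and matches the paper's argument: substitute $\rho=\tfrac12$, note that $r=1-2\rho=0$ kills every term except $i=t$ (where $r^0=1$), and read off $B_t(\tfrac12,\alpha)=\tfrac12 t^{-\alpha}$. Your explicit justification of the $0^0=1$ convention via the recursion and the reindexed sum is a reasonable clarification of the step the paper states in one line.
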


\begin{proof}
If $\rho=\tfrac12$, then $r=1-2\rho=0$.
In $B_t(\rho,\alpha)$, the factor $r^{t-i}$ vanishes unless $i=t$, in which case $r^{0}=1$.
Thus $B_t(\tfrac12,\alpha)=\tfrac12\cdot t^{-\alpha}$.
\end{proof}

\begin{lemma}[ODE for $L_\alpha$]\label{prop:ODE}
Fix $0<\alpha<1$. The function $L_\alpha$ is continuously differentiable on $(0,\infty)$ and satisfies
\begin{equation}\label{eq:ODE2}
L_\alpha'(\theta)=1-\Bigl(2-\frac{\alpha}{\theta}\Bigr)L_\alpha(\theta),
\qquad \theta>0.
\end{equation}
\end{lemma}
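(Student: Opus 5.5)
The plan is to move the singularity of the integrand away from the parameter $\theta$ and then differentiate under the integral sign. Using the substitution $w=1-u$, exactly as in the first step of Remark~\ref{rem:L-34}, we rewrite
\[
L_\alpha(\theta)=\theta e^{-2\theta}\,G(\theta),\qquad G(\theta):=\int_0^1 e^{2\theta w}w^{-\alpha}\,dw .
\]
This is valid as an absolutely convergent improper integral because $\int_0^1 w^{-\alpha}\,dw=\frac{1}{1-\alpha}<\infty$. In this form the $\theta$-dependence sits in the smooth factor $e^{2\theta w}$, and the singular weight $w^{-\alpha}$ does not depend on $\theta$.

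The first step is to show that $G$ is $C^1$ on $(0,\infty)$ with
\[
G'(\theta)=\int_0^1 2w\,e^{2\theta w}w^{-\alpha}\,dw .
\]
On any compact set $\theta\in[a,b]\subset(0,\infty)$, the $\theta$-derivative of the integrand is bounded by $2e^{2b}w^{1-\alpha}$, which is integrable. Dominated convergence, applied via the mean value theorem to difference quotients, then justifies differentiating under the integral and gives continuity of $G'$. Consequently $L_\alpha$ is $C^1$ as a product of $C^1$ functions.

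The second step is to simplify $G'$ by integrating by parts, which produces the ODE. Write
\[
2w^{1-\alpha}e^{2\theta w}=\frac{1}{\theta}\,w^{1-\alpha}\,\frac{d}{dw}e^{2\theta w}.
\]
Integrating by parts on $[\varepsilon,1]$ and letting $\varepsilon\to0$ (the boundary term at $0$ vanishes since $w^{1-\alpha}\to0$) gives
\[
G'(\theta)=\frac{1}{\theta}\Bigl(e^{2\theta}-(1-\alpha)G(\theta)\Bigr).
\]
Differentiating the product $L_\alpha=\theta e^{-2\theta}G$ then yields
\[
L_\alpha'=\Bigl(\frac{1}{\theta}-2\Bigr)L_\alpha+\theta e^{-2\theta}G'
=\Bigl(\frac{1}{\theta}-2\Bigr)L_\alpha+1-\frac{1-\alpha}{\theta}\,L_\alpha
=1-\Bigl(2-\frac{\alpha}{\theta}\Bigr)L_\alpha ,
\]
which is the claimed equation.

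The main technical point is justifying the interchange of derivative and improper integral near $w=0$. This is handled by the integrable dominating function $w^{1-\alpha}$ (or even $w^{-\alpha}$) above, together with the $\varepsilon$-truncation in the integration by parts. Everything else is routine algebra.
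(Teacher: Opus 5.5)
Your proof is correct and follows essentially the same route as the paper: you differentiate under the integral sign (justified by dominated convergence) and then use one integration by parts to rewrite the derivative integral in terms of $L_\alpha$ itself. The only difference is cosmetic: you first substitute $w=1-u$ so that $L_\alpha(\theta)=\theta e^{-2\theta}G(\theta)$, which makes the differentiated integrand $2w^{1-\alpha}e^{2\theta w}$ bounded and removes the need for the paper's auxiliary integral $J(\theta)=\int_0^1 e^{-2\theta u}(1-u)^{1-\alpha}\,du$.
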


\begin{proof}
Define
\[
I(\theta):=\int_0^1 e^{-2\theta u}(1-u)^{-\alpha}\,du,
\qquad\text{so that}\qquad
L_\alpha(\theta)=\theta I(\theta).
\]
For fixed $u\in[0,1)$,
\[
\frac{\partial}{\partial\theta}\bigl(e^{-2\theta u}(1-u)^{-\alpha}\bigr)
=(-2u)e^{-2\theta u}(1-u)^{-\alpha}.
\]
Since $0\le u\le 1$ and $e^{-2\theta u}\le 1$,
\[
\bigl|(-2u)e^{-2\theta u}(1-u)^{-\alpha}\bigr|\le 2(1-u)^{-\alpha}.
\]
Because $0<\alpha<1$, $(1-u)^{-\alpha}\in L^1(0,1)$, hence by differentiation under integration (justified by dominate convergence)
\[
I'(\theta)=\int_0^1 (-2u)e^{-2\theta u}(1-u)^{-\alpha}\,du.
\]
Therefore $L_\alpha'(\theta)=I(\theta)+\theta I'(\theta)$ exists.

Now define
\[
J(\theta):=\int_0^1 e^{-2\theta u}(1-u)^{1-\alpha}\,du.
\]
Since $u(1-u)^{-\alpha}=(1-u)^{-\alpha}-(1-u)^{1-\alpha}$, we have
\[
L_\alpha'(\theta)=(1-2\theta)I(\theta)+2\theta J(\theta).
\]
Let $w(u)=(1-u)^{1-\alpha}$, so $w(0)=1$, $w(1)=0$, and $w'(u)=-(1-\alpha)(1-u)^{-\alpha}$.
Integrating by parts on $[0,1-\varepsilon]$ and letting $\varepsilon\to 0^+$ (justified by Dominated Convergence Theorem since the integrands are dominated by an $L^1$ function),
one obtains
\[
I(\theta)=\frac{1}{1-\alpha}\bigl(1-2\theta J(\theta)\bigr)
\qquad\Longleftrightarrow\qquad
J(\theta)=\frac{1-(1-\alpha)I(\theta)}{2\theta}.
\]
Substituting into $L_\alpha'(\theta)=(1-2\theta)I(\theta)+2\theta J(\theta)$ yields
\[
L_\alpha'(\theta)=1-(2\theta-\alpha)I(\theta)=1-\Bigl(2-\frac{\alpha}{\theta}\Bigr)L_\alpha(\theta),
\]
which is \eqref{eq:ODE2}. Continuity of $L_\alpha'$ follows from continuity of $I$ and $I'$ obtained via dominated convergence.
\end{proof}

\begin{lemma}[Restated Lemma~\ref{lem:onepoint-1}]\label{lem:onepoint}
For any $0<\alpha<1$ and $\ell\in(\tfrac12,1)$, we define
\begin{equation}\label{eq:theta-ell}
\theta_\ell:=\frac{\alpha}{\,2-\frac1\ell\,}.
\end{equation}
If $L_\alpha(\theta_\ell)<\ell$, then $L_\alpha(\theta)<\ell$ for all $\theta>0$, and therefore $\sup_{\theta>0}L_\alpha(\theta)\le \ell$.
\end{lemma}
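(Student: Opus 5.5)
The plan is a first-hitting-time argument driven by the ODE of Lemma~\ref{prop:ODE}, as in the proof sketch of Lemma~\ref{lem:onepoint-1}. It relies on three facts.

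\emph{Small-$\theta$ behaviour.} Since $0\le e^{-2\theta u}\le 1$ and $\int_0^1(1-u)^{-\alpha}du=\frac{1}{1-\alpha}$, we have
\[
0\le L_\alpha(\theta)\le \frac{\theta}{1-\alpha}.
\]
Hence $L_\alpha(\theta)\to 0$ as $\theta\to 0^+$, and $L_\alpha<\ell$ on some interval $(0,\varepsilon)$.

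\emph{Regularity.} By Lemma~\ref{prop:ODE}, $L_\alpha$ is $C^1$ on $(0,\infty)$, which is what makes Lemmas~\ref{lem:hitting-below} and~\ref{lem:hitting-above} applicable.

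\emph{Sign of $\Psi$.} Set $\Psi(\theta)=1-(2-\alpha/\theta)\ell$. Because $\ell>1/2$, the factor $2-1/\ell$ is positive, so $\theta_\ell>0$ is well defined. The function $\Psi$ is strictly decreasing in $\theta$ and vanishes exactly at $\theta_\ell$. At any point where $L_\alpha(\theta)=\ell$, the ODE gives $L_\alpha'(\theta)=\Psi(\theta)$.

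Now suppose, for contradiction, that $L_\alpha(\theta_0)\ge\ell$ for some $\theta_0$, and let $\tau$ be the first hitting time of the level $\ell$; it is positive by the small-$\theta$ fact. There are two cases.

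\textbf{Case $\tau\le\theta_\ell$.} Equality $\tau=\theta_\ell$ would force $L_\alpha(\theta_\ell)=\ell$, contradicting the hypothesis, so in fact $\tau<\theta_\ell$. Then $L_\alpha'(\tau)=\Psi(\tau)>0$, so $L_\alpha>\ell$ just to the right of $\tau$. Since $L_\alpha(\theta_\ell)<\ell$, the function must come back down. Let $\sigma=\inf\{\theta>\tau: L_\alpha(\theta)=\ell\}$. Then $\sigma\in(\tau,\theta_\ell)$ and $L_\alpha>\ell$ on $(\tau,\sigma)$. Lemma~\ref{lem:hitting-above} gives $L_\alpha'(\sigma)\le 0$, but $L_\alpha'(\sigma)=\Psi(\sigma)>0$ because $\sigma<\theta_\ell$. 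This is a contradiction.

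\textbf{Case $\tau>\theta_\ell$.} Lemma~\ref{lem:hitting-below} gives $L_\alpha'(\tau)\ge 0$, while $L_\alpha'(\tau)=\Psi(\tau)<0$. This is again a contradiction.

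So $L_\alpha<\ell$ everywhere, and in particular $\sup_{\theta>0} L_\alpha(\theta)\le\ell$.

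The main delicacy is the first case, specifically ensuring that the return time $\sigma$ exists and lies strictly before $\theta_\ell$. This follows from the intermediate value theorem on $[\tau',\theta_\ell]$, where $\tau'$ is a point slightly to the right of $\tau$ with $L_\alpha(\tau')>\ell$. Since the level set $\{L_\alpha=\ell\}$ is closed, the infimum defining $\sigma$ is attained and satisfies $\sigma>\tau$. Apart from this, every step is a direct appeal to the lemmas already established.
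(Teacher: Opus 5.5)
Your proposal is correct and follows essentially the same first-hitting-time argument as the paper. The paper first uses $L_\alpha'(\tau)\ge 0$ together with $L_\alpha'(\tau)=\Psi(\tau)$ to force $\tau\le\theta_\ell$, which is your second case; it then excludes $\tau=\theta_\ell$ by hypothesis and reaches the same contradiction at the first return time $\sigma\in(\tau,\theta_\ell)$, with your extra intermediate-value and closedness justification for $\sigma$ filling in a step the paper states only briefly.
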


\begin{proof}
Assume for contradiction that there exists $\theta_0>0$ with $L_\alpha(\theta_0)\ge \ell$.
Since $0<e^{-2\theta u}\le 1$ and $(1-u)^{-\alpha}\in L^1(0,1)$ for $0<\alpha<1$,
\[
0\le L_\alpha(\theta)
=\theta\int_0^1 e^{-2\theta u}(1-u)^{-\alpha}\,du
\le \theta\int_0^1 (1-u)^{-\alpha}\,du
=\frac{\theta}{1-\alpha} \to 0, \text{ as } \theta \to 0
\]
Therefore, $L_\alpha(\theta)<\ell$ for all sufficiently small $\theta>0$.
By continuity of $L_\alpha$ (from its defining integral), the set
$\{\theta>0:\ L_\alpha(\theta)=\ell\}$ is nonempty, and we may define the first hitting time
\[
\tau:=\inf\{\theta>0:\ L_\alpha(\theta)=\ell\}.
\]
Then $\tau>0$, $L_\alpha(\theta)<\ell$ for all $\theta\in(0,\tau)$, and $L_\alpha(\tau)=\ell$.
By Lemma~\ref{lem:hitting-below}, $L_\alpha'(\tau)\ge 0$.

We have the following key observation about the structure  of $L_\alpha'(\theta)$ when $L_\alpha$ passes the line $y=\ell$.
Define for $\theta>0$ the scalar function
\[
\Psi(\theta):=1-\Bigl(2-\frac{\alpha}{\theta}\Bigr)\ell.
\]
Since $L_\alpha$ satisfies the ODE \eqref{eq:ODE}, at any $\theta>0$ with $L_\alpha(\theta)=\ell$ we have
\begin{equation}\label{eq:slope-at-ell}
L_\alpha'(\theta)=\Psi(\theta).
\end{equation}
A direct calculation shows $\Psi'(\theta)=-\frac{\alpha\ell}{\theta^2}<0$, hence $\Psi$ is strictly decreasing on $(0,\infty)$.
Moreover, by the definition \eqref{eq:theta-ell} of $\theta_\ell$ we have
\[
\Psi(\theta_\ell)=1-\Bigl(2-\frac{\alpha}{\theta_\ell}\Bigr)\ell
=1-\Bigl(2-\Bigl(2-\frac1\ell\Bigr)\Bigr)\ell
=1-\frac{1}{\ell}\ell
=0.
\]
Therefore,
\begin{equation}\label{eq:Psi-sign}
\Psi(\theta)>0 \ \text{ for }\ \theta<\theta_\ell,
\qquad
\Psi(\theta)=0 \ \text{ for }\ \theta=\theta_\ell,
\qquad
\Psi(\theta)<0 \ \text{ for }\ \theta>\theta_\ell.
\end{equation}

Note that we cannot conclude that $\theta_\ell$ is global maximizer because $L_\alpha'(\theta)=\Psi(\theta)$ only holds for $L_\alpha(\theta)=\ell$. However, this observation shows that every time when $L_\alpha$ passes the line $y=\ell$ before $\theta_\ell$, the function must be increasing, so it will not be able to go down and satisfy our assumption $L_\alpha(\theta_\ell)<\ell$. The detailed explanation is given below.

First, we claim that the first hit must occur strictly before $\theta_\ell$.
Since $L_\alpha(\tau)=\ell$, \eqref{eq:slope-at-ell} gives $L_\alpha'(\tau)=\Psi(\tau)$.
Because $L_\alpha'(\tau)\ge 0$, we have $\Psi(\tau)\ge 0$.
By \eqref{eq:Psi-sign}, this implies $\tau\le \theta_\ell$.
If $\tau=\theta_\ell$, then $L_\alpha(\theta_\ell)=\ell$, contradicting the hypothesis $L_\alpha(\theta_\ell)<\ell$.
Hence
\begin{equation}\label{eq:tau-strict}
\tau<\theta_\ell.
\end{equation}
By \eqref{eq:Psi-sign} again, $\Psi(\tau)>0$, so $L_\alpha'(\tau)=\Psi(\tau)>0$.

Next, we show that the first return produces a contradiction.
Since $L_\alpha(\tau)=\ell$ and $L_\alpha'(\tau)>0$, continuity implies
there exists $\varepsilon>0$ such that $L_\alpha'(\theta)>0$ for all $\theta\in(\tau,\tau+\varepsilon)$, and therefore $L_\alpha(\theta)>\ell$ for all $\theta\in(\tau,\tau+\varepsilon)$.
On the other hand, $L_\alpha(\theta_\ell)<\ell$ by hypothesis and $\tau<\theta_\ell$ by \eqref{eq:tau-strict},
so by continuity there exists at least one point in $(\tau,\theta_\ell)$ where $L_\alpha$ returns to the level $\ell$.
Define the first return time
\[
\sigma:=\inf\{\theta>\tau:\ L_\alpha(\theta)=\ell\}.
\]
Then $\sigma\in(\tau,\theta_\ell)$, $L_\alpha(\sigma)=\ell$, and $L_\alpha(\theta)>\ell$ for all $\theta\in(\tau,\sigma)$.
By Lemma~\ref{lem:hitting-above}, $L_\alpha'(\sigma)\le 0$.

But since $L_\alpha(\sigma)=\ell$, we have $L_\alpha'(\sigma)=\Psi(\sigma)$ by \eqref{eq:slope-at-ell}.
Because $\sigma<\theta_\ell$, \eqref{eq:Psi-sign} gives $\Psi(\sigma)>0$, hence $L_\alpha'(\sigma)>0$.
This contradicts $L_\alpha'(\sigma)\le 0$.

\medskip
Therefore our assumption was false, and $L_\alpha(\theta)<\ell$ for all $\theta>0$, i.e., $\sup_{\theta>0}L_\alpha(\theta)\le \ell$.
\end{proof}

\begin{lemma}[Series representation for $L_\alpha$]\label{lem:series}
For $0<\alpha<1$ and $\theta>0$,
\begin{equation}\label{eq:series}
L_\alpha(\theta)=\theta e^{-2\theta}\sum_{n=0}^\infty \frac{(2\theta)^n}{n!\,(n+1-\alpha)}.
\end{equation}
\end{lemma}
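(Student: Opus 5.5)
The plan is to start from the representation already obtained in the proof of Remark~\ref{rem:L-34}, but for general $\alpha$. Substituting $w=1-u$ in the definition \eqref{eq:L}, exactly as in the first step of that proof, gives
\[
L_\alpha(\theta)=\theta e^{-2\theta}\int_0^1 e^{2\theta w}\,w^{-\alpha}\,dw .
\]
This holds because the integral is absolutely convergent for $0<\alpha<1$. The passage to the limit $\varepsilon\to0^+$ is justified just as it was there: we use the truncated integrals on $[\varepsilon,1]$ and the integrability of $w^{-\alpha}$ near $0$.

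Next we expand $e^{2\theta w}=\sum_{n\ge0}(2\theta w)^n/n!$ and exchange sum and integral. All terms $(2\theta)^n w^{n-\alpha}/n!$ are nonnegative on $(0,1]$, so Tonelli's theorem (monotone convergence for the partial sums) permits the interchange without any further estimate. Equivalently, the partial sums are dominated by $e^{2\theta}w^{-\alpha}\in L^1(0,1)$.

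Each resulting term is then an elementary integral:
\[
\int_0^1 w^{n-\alpha}\,dw=\frac{1}{n+1-\alpha},
\]
which is finite since $n+1-\alpha>0$ for all $n\ge0$. Multiplying through by the prefactor $\theta e^{-2\theta}$ yields \eqref{eq:series}.

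As a sanity check, the series converges absolutely for every $\theta>0$, since its terms are bounded by $(2\theta)^n/\bigl(n!\,(1-\alpha)\bigr)$. Summing this bound gives $L_\alpha(\theta)\le \theta/(1-\alpha)$, which agrees with the bound used in the proof of Lemma~\ref{lem:onepoint}.

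There is no real obstacle here. The only points requiring care are the two limit interchanges: the improper integral at the singular endpoint, and the sum–integral swap. Both are handled by nonnegativity of the integrand.
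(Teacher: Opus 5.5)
Your proposal is correct and follows essentially the same route as the paper. Both substitute $w=1-u$, expand $e^{2\theta w}$ as a power series, justify the termwise integration by monotone convergence since all terms are nonnegative, and evaluate $\int_0^1 w^{n-\alpha}\,dw=1/(n+1-\alpha)$ before multiplying by $\theta e^{-2\theta}$.
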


\begin{proof}
With $w=1-u$, \eqref{eq:L} becomes
\[
L_\alpha(\theta)=\theta e^{-2\theta}\int_0^1 e^{2\theta w}w^{-\alpha}\,dw.
\]
Expand $e^{2\theta w}=\sum_{n=0}^\infty \frac{(2\theta w)^n}{n!}$.
For each $N$, the partial sums are nonnegative and increase pointwise to $e^{2\theta w}$, hence
\[
e^{2\theta w}w^{-\alpha}=\lim_{N\to\infty}\sum_{n=0}^N \frac{(2\theta)^n}{n!}\,w^{n-\alpha}
\quad\text{by monotone convergence.}
\]
Apply monotone convergence to justify termwise integration:
\[
\int_0^1 e^{2\theta w}w^{-\alpha}\,dw
=\sum_{n=0}^\infty \frac{(2\theta)^n}{n!}\int_0^1 w^{n-\alpha}\,dw
=\sum_{n=0}^\infty \frac{(2\theta)^n}{n!\,(n+1-\alpha)}.
\]
Multiplying by $\theta e^{-2\theta}$ gives \eqref{eq:series}.
\end{proof}

\begin{corollary}[A particularly clean series at $\alpha=\tfrac34$]\label{cor:series-34}
Let $\alpha_0=\tfrac34$. For $\theta>0$,
\begin{equation}\label{eq:series-34}
L_{\alpha_0}(\theta)=4\theta e^{-2\theta}\sum_{n=0}^\infty \frac{(2\theta)^n}{n!\,(4n+1)}.
\end{equation}
\end{corollary}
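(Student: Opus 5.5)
This is a direct specialization of Lemma~\ref{lem:series}: substitute $\alpha=\alpha_0=3/4$ into \eqref{eq:series} and simplify the denominators. No new analytic work is needed, since the series representation (including the monotone-convergence justification of termwise integration) has already been established for every $0<\alpha<1$, and $3/4$ lies in this range.

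Concretely, setting $\alpha=3/4$ gives $n+1-\alpha = n+\tfrac14 = \tfrac{4n+1}{4}$ for each $n\ge 0$. Hence each term satisfies
\[
\frac{(2\theta)^n}{n!\,(n+1-\alpha)} = \frac{4\,(2\theta)^n}{n!\,(4n+1)}.
\]
All terms are nonnegative and the series converges, so the constant factor $4$ can be pulled out of the sum. Multiplying by the prefactor $\theta e^{-2\theta}$ then yields \eqref{eq:series-34}.

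As a sanity check, one can instead expand the non-singular representation of Remark~\ref{rem:L-34},
\[
L_{3/4}(\theta)=4\theta e^{-2\theta}\int_0^1 e^{2\theta y^4}\,dy.
\]
Termwise integration, justified by monotone convergence since all terms are nonnegative, uses $\int_0^1 y^{4n}\,dy = 1/(4n+1)$ and gives the same series. The only point needing care is the arithmetic $n+1-3/4=(4n+1)/4$; I do not expect any real obstacle.
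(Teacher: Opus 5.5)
Your proof is correct, but your main route is not the one the paper takes. You specialize Lemma~\ref{lem:series} directly, using $n+1-\tfrac34=\tfrac{4n+1}{4}$ and pulling the constant $4$ out of the convergent series. The paper instead starts from the non-singular representation $L_{3/4}(\theta)=4\theta e^{-2\theta}\int_0^1 e^{2\theta y^4}\,dy$ of Remark~\ref{rem:L-34}. It then integrates the exponential series termwise by monotone convergence, using $\int_0^1 y^{4n}\,dy=1/(4n+1)$, which is exactly your ``sanity check.''

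Your version is shorter and makes the statement a corollary in the literal sense: all the analytic work, including the termwise integration against the singular weight $w^{-\alpha}$, was already done once in Lemma~\ref{lem:series}. The paper's version repeats the monotone-convergence step, but on a bounded smooth integrand. The handling of the improper integral at $u=1$ is pushed into the substitution $w=y^4$ carried out in Remark~\ref{rem:L-34}. Either argument is complete.
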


\begin{proof}
Starting from \eqref{eq:L-34-nonsing},
\[
L_{\alpha_0}(\theta)=4\theta e^{-2\theta}\int_0^1 e^{2\theta y^4}\,dy.
\]
Expand $e^{2\theta y^4}=\sum_{n=0}^\infty \frac{(2\theta)^n}{n!}y^{4n}$, and use monotone convergence (all terms nonnegative) to integrate termwise:
\[
\int_0^1 e^{2\theta y^4}\,dy
=\sum_{n=0}^\infty \frac{(2\theta)^n}{n!}\int_0^1 y^{4n}\,dy
=\sum_{n=0}^\infty \frac{(2\theta)^n}{n!\,(4n+1)}.
\]
Multiplying by $4\theta e^{-2\theta}$ gives \eqref{eq:series-34}.
\end{proof}

\begin{lemma}[Monotone ratios for the series coefficients]\label{lem:ratio-monotone}
		Fix $\alpha\in(0,1)$ and $x>0$, and define
		\[
		a_n:=\frac{x^n}{n!\,(n+1-\alpha)}\qquad(n\ge0).
		\]
		Then the ratio $r_n:=a_{n+1}/a_n$ is strictly decreasing for all integers $n\ge 1$.
	\end{lemma}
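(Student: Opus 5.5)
Compute the ratio explicitly and reduce strict monotonicity to an elementary inequality. From the definition of $a_n$,
\[
r_n=\frac{a_{n+1}}{a_n}=\frac{x}{n+1}\cdot\frac{n+1-\alpha}{n+2-\alpha}.
\]
Since $x>0$ is a fixed positive constant, it suffices to show that $g(n):=\frac{n+1-\alpha}{(n+1)(n+2-\alpha)}$ is strictly decreasing in $n\ge 1$.

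Rather than differentiate, I would write $g$ as a product of two positive factors, $\frac{1}{n+1}$ and $h(n):=\frac{n+1-\alpha}{n+2-\alpha}=1-\frac{1}{n+2-\alpha}$. The first factor strictly decreases in $n$ and the second strictly increases, so monotonicity of the product is not immediate. I would therefore compare $g(n+1)<g(n)$ directly by cross-multiplying the positive denominators. Setting $m=n+1-\alpha>0$, the inequality becomes
\[
\frac{m+1}{(n+2)(m+2)}<\frac{m}{(n+1)(m+1)},
\]
which is equivalent to
\[
(m+1)^2(n+1)<m(m+2)(n+2).
\]
Using $(m+1)^2=m(m+2)+1$, this reduces to $m(m+2)(n+1)+(n+1)<m(m+2)(n+2)$, that is,
\[
n+1<m(m+2)=(n+1-\alpha)(n+3-\alpha).
\]

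The only real step, and a mild one, is checking this last inequality; this is where the hypothesis $n\ge 1$ is used. For $n\ge1$ and $\alpha<1$, we have $n+1-\alpha>n\ge1$ and $n+3-\alpha>n+2$. Hence
\[
(n+1-\alpha)(n+3-\alpha)>n(n+2)=n^2+2n\ge n+1\cdot\ldots
\]
More precisely, $n^2+2n\ge n+1$ holds whenever $n^2+n-1\ge0$, which is true for all $n\ge1$. This gives $r_{n+1}<r_n$ for every $n\ge1$.

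I would also note why the restriction $n\ge 1$ is needed. At $n=0$ the required inequality would read $1<(1-\alpha)(3-\alpha)$, which fails for $\alpha$ close to $1$.
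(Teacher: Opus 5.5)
Your proof is correct and follows the paper's route: you compute $r_n$ explicitly, cross-multiply $r_{n+1}<r_n$, and reduce to $(n+1-\alpha)(n+3-\alpha)-(n+1)>0$, which is exactly the paper's quadratic $n^2+(3-2\alpha)n+(2-4\alpha+\alpha^2)$. The only difference is cosmetic: you check positivity via $(n+1-\alpha)(n+3-\alpha)>n(n+2)\ge n+1$, whereas the paper notes the quadratic is increasing in $n$ and equals $\alpha^2-6\alpha+6\ge 1$ at $n=1$; you should also clean up the stray ``$\ge n+1\cdot\ldots$'' fragment.
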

	
	\begin{proof}
		A direct computation gives
		\[
		r_n=\frac{a_{n+1}}{a_n}
		=\frac{x}{n+1}\cdot\frac{n+1-\alpha}{n+2-\alpha}.
		\]
		For $n\ge1$,
		\[
		\frac{r_{n+1}}{r_n}
		=
		\frac{n+1}{n+2}\cdot
		\frac{(n+2-\alpha)^2}{(n+1-\alpha)(n+3-\alpha)}.
		\]
		Thus $r_{n+1}\le r_n$ is equivalent to
		\[
		(n+2)(n+1-\alpha)(n+3-\alpha)-(n+1)(n+2-\alpha)^2\ge 0.
		\]
		Expanding the left-hand side gives
		\[
		n^2+(3-2\alpha)n+(2-4\alpha+\alpha^2).
		\]
		For $\alpha\in(0,1)$ this quadratic is strictly increasing for $n\ge0$ and positive at $n=1$:
		\[
		1+(3-2\alpha)+(2-4\alpha+\alpha^2)=\alpha^2-6\alpha+6\ge 1.
		\]
		Hence it is positive for all $n\ge1$, and therefore $r_{n+1}<r_n$ for all $n\ge1$.
	\end{proof}

\section{Exact Calculations for $L_{3/4}$ and $B_{\sup}(3/4)$}\label{app:cert}

In this section we present the final calculations needed to complete the proof of the $\alpha=3/4$ version of Lemma \ref{l:main-technical}.
\begin{lemma}[One-point bound for $L_{\alpha_0}$]\label{lem:L-onepoint}
With $\alpha_0=\frac34$ and $\ell_0=\frac{497}{500}$,
\[
L_{\alpha_0}\Bigl(\theta_{\ell_0}\Bigr)<\ell_0.
\]
\end{lemma}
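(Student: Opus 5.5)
The plan is a certified numerical evaluation of $L_{\alpha_0}$ at one explicit rational point, using the series of Corollary~\ref{cor:series-34} with rigorous bounds on every step where precision is lost. First I compute $\theta_{\ell_0}$ exactly from \eqref{eq:theta-ell}:
\[
\theta_{\ell_0}=\frac{3/4}{2-\frac{500}{497}}=\frac{3\cdot 497}{4\cdot 494}=\frac{1491}{1976},
\qquad x:=2\theta_{\ell_0}=\frac{1491}{988}\approx 1.5091 .
\]
By \eqref{eq:series-34}, with $a_n:=\frac{x^n}{n!\,(4n+1)}$,
\[
L_{\alpha_0}(\theta_{\ell_0}) = 2x\,e^{-x}\sum_{n\ge0}a_n .
\]
So it suffices to produce a rational upper bound $U_S$ on $S:=\sum_n a_n$ and a rational upper bound $U_E$ on $e^{-x}$, and then check $2x\,U_E\,U_S<\frac{497}{500}$ by cross-multiplication. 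This is the same style of check used in Lemma~\ref{f300}.

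\textbf{Bounding the series.} I truncate at some $N$ (around $N=9$ or $10$) and write $S=\sum_{n\le N}a_n+T_N$. To bound the tail I use Lemma~\ref{lem:ratio-monotone}, applied with $\alpha=3/4$ after noting that $a_n$ differs from the coefficients there only by the constant factor $4$. For $n\ge N\ge1$ this gives $a_{n+1}/a_n\le r_N$. Since
\[
r_N=\frac{x}{N+1}\cdot\frac{4N+1}{4N+5}<1,
\]
the tail is at most a geometric series:
\[
T_N\le \frac{a_{N+1}}{1-r_N}.
\]
Every quantity here is an explicit rational, so $U_S$ is an exact rational number.

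\textbf{Bounding $e^{-x}$.} All Taylor terms of $e^x$ are positive for $x>0$, so $e^{x}\ge\sum_{k=0}^{M}x^k/k!$ for every $M$. This extends Lemma~\ref{lem:exp-bounds}(ii) to higher order, and gives
\[
e^{-x}\le U_E:=\Bigl(\sum_{k=0}^{M}\frac{x^k}{k!}\Bigr)^{-1}.
\]
I would take $M$ around $12$–$14$, so that the omitted terms are far below the available margin.

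\textbf{Main obstacle: tightness.} A rough floating-point evaluation gives $S\approx1.48904$, $e^{-x}\approx0.22112$, and $2x\approx3.0182$, so $L_{\alpha_0}(\theta_{\ell_0})\approx 0.99376$ against $\ell_0=0.994$. The relative margin is only about $2.5\cdot10^{-4}$. The difficulty is therefore bookkeeping, not ideas. Each of the three approximations (the series truncation, the tail bound, and the exponential lower bound) must lose well under $10^{-4}$ in relative terms, and the rational arithmetic must be exact. I would choose $N$ and $M$ generously enough that the tail and Taylor remainders are each below about $10^{-6}$. If the resulting fractions become unwieldy, I would round each partial quantity upward to a simpler rational with about $7$ significant digits before the final cross-multiplied comparison; each such rounding only weakens the bound in the safe direction.
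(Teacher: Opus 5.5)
Your proposal is correct and follows the paper's proof essentially step for step. It uses the same exact point $x=2\theta_{\ell_0}=\frac{1491}{988}$, the series of Corollary~\ref{cor:series-34}, the bound $e^{-x}\le 1/P_M(x)$, and a truncated series with a geometric tail bound checked in exact rational arithmetic. The differences are cosmetic: the paper uses much smaller parameters ($N=4$, $m=8$, giving the certified bound $\approx 0.99389$ against a true value of about $0.99370$), and it bounds the tail by replacing $\frac{1}{4n+1}$ with $\frac{1}{4N+5}$ and dominating the exponential tail by a geometric series with ratio $\frac{x}{N+2}$, instead of invoking Lemma~\ref{lem:ratio-monotone}.
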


\begin{proof}

\noindent{(i) Rewrite $L_{\alpha_0}(\theta_{\ell_0})$ as a positive series}

Let
\[
\theta:=\frac{1491}{1976},
\qquad
x:=2\theta=\frac{1491}{988}.
\]
By Corollary~\ref{cor:series-34},
\begin{equation}\label{eq:Lseries-34}
L_{\alpha_0}(\theta)
=4\theta e^{-x}\sum_{n=0}^\infty \frac{x^n}{n!\,(4n+1)}.
\end{equation}
Define the positive series
\[
S(x):=\sum_{n=0}^\infty \frac{x^n}{n!\,(4n+1)}.
\]

\noindent{(ii) An explicit rational upper bound for $e^{-x}$}

Since $e^x=\sum_{k=0}^\infty x^k/k!$ has positive terms for $x\ge0$, for any integer $m\ge0$,
\[
e^x\ge P_m(x):=\sum_{k=0}^m \frac{x^k}{k!}.
\]
Therefore
\begin{equation}\label{eq:eminus}
e^{-x}\le \frac{1}{P_m(x)}.
\end{equation}

\noindent{(iii) Rational truncation and tail bound for $S(x)$}

Fix $N\ge0$ and write
\[
S_N(x):=\sum_{n=0}^N \frac{x^n}{n!\,(4n+1)}.
\]
Since $n\mapsto (4n+1)^{-1}$ is decreasing, for $n\ge N+1$,
\[
\frac{1}{4n+1}\le \frac{1}{4(N+1)+1}=\frac{1}{4N+5}.
\]
Hence
\[
S(x)\le S_N(x)+\frac{1}{4N+5}\sum_{n=N+1}^\infty \frac{x^n}{n!}.
\]
Assume $x<N+2$. For $n\ge N+1$,
\[
\frac{x^{n+1}/(n+1)!}{x^n/n!}=\frac{x}{n+1}\le \frac{x}{N+2}<1,
\]
so the exponential tail is bounded by a geometric series:
\begin{equation}\label{eq:tail}
\sum_{n=N+1}^\infty \frac{x^n}{n!}
\le \frac{x^{N+1}}{(N+1)!}\cdot\frac{1}{1-\frac{x}{N+2}}.
\end{equation}
Combining,
\begin{equation}\label{eq:Supper}
S(x)\le S_N(x)+\frac{1}{4N+5}\cdot \frac{x^{N+1}}{(N+1)!}\cdot\frac{1}{1-\frac{x}{N+2}}.
\end{equation}

\noindent{(iv) Choose $(N,m)$ and conclude}

Combining \eqref{eq:Lseries-34}, \eqref{eq:eminus}, and \eqref{eq:Supper} yields
\[
L_{\alpha_0}(\theta)
\le
4\theta\cdot \frac{1}{P_m(x)}\cdot
\Biggl[
S_N(x)+\frac{1}{4N+5}\cdot \frac{x^{N+1}}{(N+1)!}\cdot\frac{1}{1-\frac{x}{N+2}}
\Biggr].
\]
Take $N=4$ and $m=8$. Since $x=\frac{1491}{988}<6=N+2$, the condition $x<N+2$ holds.

Evaluating the right-hand side in exact rational arithmetic gives
\[
Q_{4,8}:=
4\theta\cdot \frac{1}{P_{8}(x)}\cdot
\Biggl[
S_{4}(x)+\frac{1}{21}\cdot \frac{x^{5}}{5!}\cdot\frac{1}{1-\frac{x}{6}}
\Biggr]
=
\frac{1287675113562193776446577567744}{1295590272667287121985809656211}.
\]
Moreover,
\[
\ell_0-Q_{4,8}
=
\frac{70808734544811403658615264867}{647795136333643560992904828105500}
>0.
\]
Hence $Q_{4,8}<\ell_0$, and therefore
\[
L_{\alpha_0}(\theta_{\ell_0})\le Q_{4,8}<\ell_0,
\]
proving Lemma~\ref{lem:L-onepoint}.
\end{proof}

\begin{lemma}[Bound for $B_{\sup}$]\label{lem:large-t}
With $\alpha_0=\frac34$, we have $B_{\sup}(\alpha_0)\le \frac{201}{200}\cdot\frac{497}{500}=\frac{99897}{100000}<1$.
\end{lemma}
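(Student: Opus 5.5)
Fix $t\ge 300$ and $\rho\in(0,1/2]$; we need $(t+2)^{\alpha_0}B_t(\rho,\alpha_0)\le \frac{201}{200}\cdot\frac{497}{500}$. The plan is to split off the factor $(t+2)^{\alpha_0}/t^{\alpha_0}$, bound it by $201/200$, and bound $t^{\alpha_0}B_t$ by $\ell_0=497/500$ using the discrete-to-continuous comparison together with the one-point criterion.

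\textbf{Step 1 (the prefactor).} Write $(t+2)^{\alpha_0}B_t=\bigl(1+\frac2t\bigr)^{3/4}\cdot t^{3/4}B_t$. By Bernoulli's inequality (Lemma~\ref{lem:bernoulli}) with $u=2/t\le 2/300$,
\[
\Bigl(1+\frac2t\Bigr)^{3/4}\le 1+\frac{3}{2t}\le 1+\frac{3}{600}=\frac{201}{200}.
\]

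\textbf{Step 2 (the remaining factor).} If $\rho=1/2$, Lemma~\ref{lem:Bt-endpoint} gives $t^{\alpha_0}B_t=1/2<\ell_0$. If $\rho\in(0,1/2)$, Lemma~\ref{lem:Bt-dom} gives $t^{\alpha_0}B_t\le L_{\alpha_0}(\theta)$ with $\theta=\frac t2(-\log(1-2\rho))>0$. So it suffices to show $\sup_{\theta>0}L_{\alpha_0}(\theta)\le\ell_0$.

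\textbf{Step 3 (one-point criterion).} Apply Lemma~\ref{lem:onepoint} with $\ell=\ell_0\in(1/2,1)$; this reduces the supremum bound to checking $L_{\alpha_0}(\theta_{\ell_0})<\ell_0$. Compute
\[
\theta_{\ell_0}=\frac{3/4}{2-500/497}=\frac{3/4}{494/497}=\frac{1491}{1976},
\]
which is exactly the point used in Lemma~\ref{lem:L-onepoint}. That lemma supplies the needed inequality, hence $L_{\alpha_0}(\theta)<\ell_0$ for all $\theta>0$.

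\textbf{Conclusion.} Multiplying the two factors gives $(t+2)^{\alpha_0}B_t(\rho,\alpha_0)\le\frac{201}{200}\cdot\frac{497}{500}$ for every admissible $(t,\rho)$. Taking the supremum yields $B_{\sup}(\alpha_0)\le \frac{99897}{100000}<1$; the final equality and inequality are direct arithmetic.

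All the heavy lifting is in earlier lemmas. The only points needing care are:
\begin{itemize}
\item verifying that $\theta_{\ell_0}$ matches the $\theta$ used in Lemma~\ref{lem:L-onepoint};
\item handling the endpoint $\rho=1/2$ separately, since Lemma~\ref{lem:Bt-dom} assumes $\rho<1/2$.
\end{itemize}
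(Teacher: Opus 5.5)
Your proposal is correct and matches the paper's proof: Bernoulli's inequality gives the prefactor bound $(1+2/t)^{3/4}\le 201/200$, and Lemmas~\ref{lem:Bt-dom}, \ref{lem:onepoint}, and \ref{lem:L-onepoint} give $t^{3/4}B_t\le \ell_0$. Two details the paper's proof of this lemma leaves implicit are spelled out in your version: the endpoint $\rho=1/2$ is handled via Lemma~\ref{lem:Bt-endpoint}, and $\theta_{\ell_0}=1491/1976$ is checked to be the point used in Lemma~\ref{lem:L-onepoint}.
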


\begin{proof}
Fix $t\ge 300$ and $\rho\in(0,\tfrac12]$. Write
\[
(t+2)^{\alpha_0}B_t
=\Bigl(1+\frac{2}{t}\Bigr)^{\alpha_0} t^{\alpha_0}B_t.
\]
By Lemma~\ref{lem:bernoulli} with $u=\frac{2}{t}$,
\[
\Bigl(1+\frac{2}{t}\Bigr)^{\alpha_0}\le 1+\alpha_0\cdot\frac{2}{t}\le 1+\alpha_0\cdot\frac{2}{300}=1+\frac{2\alpha_0}{300}.
\]
Using Lemma~\ref{lem:L-onepoint} and Lemma~\ref{lem:onepoint}, we have $t^{\alpha_0}B_t\le \ell_0$, hence
\[
(t+2)^{\alpha_0}B_t \le \Bigl(1+\frac{2\alpha_0}{300}\Bigr)\ell_0 =\frac{201}{200}\cdot\frac{497}{500}=\frac{99897}{100000}<1.
\]

\end{proof}

\section{Remaining Details for the Proof of Lemma \ref{l:main-technical} with $\alpha=3/4+0.001$}
\label{app:main-tech-full}

We will show that there exists an absolute constant $C>0$ such that for all $t\geq 1000$, $K\geq C$, $\rho\in(0,1/2]$, and
$\alpha = 3/4+0.001$, we have:
\begin{align*}
	\rho(1-2\rho)^t +
	K\rho\sum_{i=1}^t\frac{(1-2\rho)^{t-i}}{i^\alpha} \leq \frac{K}{(t+2)^\alpha}.
\end{align*}
and then we will show that we can also choose $C=1000$.

In this section, we define
\begin{equation}
A_{\sup}(\alpha):=\sup_{t\ge 1000}\ \sup_{\rho\in(0,\tfrac12]}\ (t+2)^\alpha A_t(\rho),
\qquad
B_{\sup}(\alpha):=\sup_{t\ge 1000}\ \sup_{\rho\in(0,\tfrac12]}\ (t+2)^\alpha B_t(\rho,\alpha).
\end{equation}

The first term $A_{\sup}(\alpha)$ is easier to bound as before.

\begin{lemma}\label{lem:f-decreasing}
	For every $\alpha\in(0,1]$, the function $t\mapsto f_\alpha(t)$ defined as
	\begin{align*}
		f_\alpha(t)=\frac{(t+2)^{\alpha}}{2(t+1)}\Bigl(\frac{t}{t+1}\Bigr)^t
	\end{align*}
	is strictly decreasing for all real $t>0$.
    Consequently, we have $A_{\mathrm{sup}}(\aZero)<\frac{15251}{334150}<0.046$.
\end{lemma}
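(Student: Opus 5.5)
I will follow the template of Lemma~\ref{lem:A-bound}, now with a general exponent $\alpha\in(0,1]$. I set $g_\alpha(t):=\log f_\alpha(t)$ for real $t>0$. Direct differentiation, with the same cancellation of the $\pm\frac{1}{t+1}$ terms as in \eqref{eq:gprime}, gives
\begin{align*}
g_\alpha'(t)=\frac{\alpha}{t+2}-\frac{1}{t+1}+\log\Bigl(\frac{t}{t+1}\Bigr)+\frac{t}{t(t+1)}\cdot 1
=\frac{\alpha}{t+2}-\log\Bigl(1+\frac1t\Bigr).
\end{align*}
Here the term $\frac{d}{dt}\bigl[t\log t-t\log(t+1)\bigr]=\log\frac{t}{t+1}+1-\frac{t}{t+1}=\log\frac{t}{t+1}+\frac{1}{t+1}$ cancels the $-\frac{1}{t+1}$ coming from the prefactor.

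Next I apply Lemma~\ref{lem:logineq}, $\log(1+1/t)\ge\frac{1}{t+1}$, together with $\alpha\le1$. This yields
\begin{align*}
g_\alpha'(t)\le\frac{\alpha}{t+2}-\frac{1}{t+1}\le\frac{1}{t+2}-\frac{1}{t+1}<0,
\end{align*}
so $f_\alpha$ is strictly decreasing on $(0,\infty)$. By Lemma~\ref{lem:A-max}, $\sup_\rho (t+2)^{\alpha}A_t(\rho)=f_{\alpha}(t)$. Hence
\[
A_{\sup}(\alpha)=\sup_{t\ge1000}f_\alpha(t)=f_\alpha(1000),
\]
where the supremum is over integers $t\ge1000$ and monotonicity makes it attained at the endpoint.

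It remains to certify $f_{\aZero}(1000)<\frac{15251}{334150}$, where $\aZero=3/4+0.001=\frac{751}{1000}$. I will mirror Lemma~\ref{f300}:
\begin{itemize}
\item For the power term, $(1000/1001)^{1000}\le e^{-1000/1001}\le 1/\bigl(1+y+y^2/2\bigr)$ with $y=1000/1001$, by Lemma~\ref{lem:exp-bounds}. If this is too loose, I add the cubic term of $e^y$.
\item For the prefactor, I write $\frac{1002^{\aZero}}{1001}=1001^{-0.249}(1+1/1001)^{\aZero}$. The second factor is at most $1+\frac{\aZero}{1001}$ by Lemma~\ref{lem:bernoulli}. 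The first factor I bound by a rational $q$ with $q^{1000/249}\ge 1/1001$, checked via an integer power comparison such as $1001\cdot q^{1000}\ge q^{\,751}$ in rational form. A cruder route uses $1001^{-0.249}\le 1000^{-0.249}=10^{-0.747}\le 0.18$, certified by $0.18^{1000}\cdot 10^{747}\ge1$, which is checked through logarithm-free integer arithmetic or via $0.18^4\cdot 10^3\ge 1.0497$ and similar chunks.
\end{itemize}
Numerically, $f_{\aZero}(1000)\approx \tfrac12\cdot 0.1797\cdot 1.00075\cdot 0.3677\approx 0.0331$. This is well below $0.0456$, so crude rational bounds will suffice.

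I expect the main obstacle to be producing a clean exact-arithmetic certificate for the non-integer power $1001^{-0.249}$. The slack is large, so a simple bound such as $10^{-0.747}\le 10^{-3/4}\cdot 10^{0.003}\le 0.1778\cdot1.007$ should work. Here $10^{-3/4}\le 0.1779$ is verified by $0.1779^4\cdot 1000\ge1$, and $10^{0.003}\le 1.007$ by $1.007^{1000/3}\ge 10$ via Bernoulli-type lower bounds on $(1+x)^n$. Finally, I will combine the three rational factors and cross-multiply against $\frac{15251}{334150}$, and note that $\frac{15251}{334150}<0.046$.
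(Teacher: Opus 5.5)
Your proposal is correct. The monotonicity argument matches the paper's exactly: the same log-derivative $\frac{\alpha}{t+2}-\log\bigl(1+\frac1t\bigr)$, Lemma~\ref{lem:logineq}, and $\alpha\le 1$. The only difference is how you obtain the numerical consequence.

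The paper never certifies anything at $t=1000$. It writes $A_{\sup}(\alpha_0)\le f_{\alpha_0}(1000)<f_{\alpha_0}(300)$ and invokes Lemma~\ref{lem:f300}. That lemma's estimates at $t=300$ ($301^{1/4}>41/10$, $301^{1/1000}<101/100$, $(300/301)^{301}<1/e$, $e>163/60$) are exactly what produce $\frac{15251}{334150}=\frac{151\cdot 101}{410\cdot 815}$.

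Your direct evaluation at the true endpoint $t=1000$ is valid and gives a sharper bound. The cost is a fresh certificate for $1001^{-0.249}$, and reliance on the appendix's $t\ge 1000$ convention for $A_{\sup}$. The paper's single computation at $t=300$ instead serves every threshold $\ge 300$.

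Your crude route does close. Using $1001^{-0.249}\le 10^{-0.747}\le 0.18$, Bernoulli, and $e^{-y}\le 1/(1+y+y^2/2)$ gives $f_{\alpha_0}(1000)\le\frac12\cdot 0.18\cdot\frac{1001751}{1001000}\cdot\frac{2004002}{5006002}\approx 0.0361<\frac{15251}{334150}$.

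Two side remarks in your proposal are wrong, though you never need them:
\begin{itemize}
\item The test $1001\,q^{1000}\ge q^{751}$ is equivalent to $1001\,q^{249}\ge 1$, not to $q\ge 1001^{-249/1000}$. The correct test is $1001^{249}q^{1000}\ge 1$.
\item $0.1778\cdot 1.007\approx 0.17904$ lies slightly below $10^{-0.747}\approx 0.17906$. That chain only works with the $0.1779$ you actually verify.
\end{itemize}

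A cleaner certificate than chunked powers of $0.18$ is the paper's own splitting trick: $1001^{-0.249}=1001^{1/1000}/1001^{1/4}<\frac{101/100}{28/5}=\frac{101}{560}$. This uses $(28/5)^4<1001$ and $(101/100)^{1000}>2^{10}$.
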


\begin{proof}
	For real $t>0$ write
	\[
	\log f_\alpha(t)=\alpha\log(t+2)-\log\bigl(2(t+1)\bigr)+t\log\Bigl(\frac{t}{t+1}\Bigr).
	\]
	Differentiating gives
	\[
	\frac{d}{dt}\log f_\alpha(t)=\frac{\alpha}{t+2}-\log\Bigl(1+\frac{1}{t}\Bigr).
	\]
	Using see Lemma~\ref{lem:logineq} for $t>0$, we have
	\[
	\frac{d}{dt}\log f_\alpha(t)
	\le \frac{\alpha}{t+2}-\frac{1}{t+1}
	\le \frac{1}{t+2}-\frac{1}{t+1}<0,
	\]
	since $\alpha\le 1$. Therefore $\log f_\alpha$ is strictly decreasing, and so is $f_\alpha$.
    By Lemma~\ref{lem:A-max}, we have
\begin{equation*}
\sup_{\rho\in(0,\tfrac12]}(t+2)^{\alpha_0}A_t(\rho)
=\frac{(t+2)^{\alpha_0}}{2(t+1)}\Bigl(\frac{t}{t+1}\Bigr)^t.
\end{equation*}

Therefore, we have $A_{\mathrm{sup}}(\aZero)<f_{\aZero}(1000)<f_{\aZero}(300)$. By Lemma~\ref{lem:f300}, we have
\begin{align*}
    A_{\mathrm{sup}}(\aZero)<f_{\aZero}(300)<\frac{15251}{334150}<0.046
\end{align*}

\end{proof}

\begin{lemma}\label{lem:f300}
	With $\aZero=\frac{751}{1000}$,
	\[
	f_{\aZero}(300)\;<\;\frac{15251}{334150}\;<\;0.046.
	\]
\end{lemma}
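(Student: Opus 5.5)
The plan is to keep the factorization
\[
f_{\aZero}(300)=\frac{1}{2}\cdot\frac{302^{\aZero}}{301}\cdot\Bigl(\frac{300}{301}\Bigr)^{300}
\]
and bound each factor by an explicit rational number. The structure follows Lemma~\ref{f300}, but two steps of that argument are too lossy and must be sharpened. For orientation, the true value is $f_{\aZero}(300)\approx 0.04460$, while the target is $\frac{15251}{334150}\approx 0.045641$. That leaves only about $2.3\%$ of slack, which rules out both the quadratic exponential bound and the estimate $301^{-1/4}\le 1/4$: each is already off by several percent.

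\emph{Exponential factor.} As in Lemma~\ref{f300}, Lemma~\ref{lem:exp-bounds}(i) gives $(300/301)^{300}\le e^{-x}$ with $x=\frac{300}{301}$. I would then use a fourth-order Taylor lower bound $e^{x}\ge P_4(x)=\sum_{k=0}^4 x^k/k!$, valid since $x\ge0$ and all terms of the series are nonnegative (the same device as \eqref{eq:eminus}). This gives
\[
\Bigl(\frac{300}{301}\Bigr)^{300}\le \frac{1}{P_4(300/301)},
\]
which is an exact rational number. Numerically $P_4(x)\approx 2.6995$, so the bound is about $0.37044$, compared with the true value $0.36849$.

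\emph{Prefactor.} I would write
\[
\frac{302^{\aZero}}{301}=301^{-1/4}\cdot 301^{1/1000}\cdot\Bigl(1+\frac1{301}\Bigr)^{\aZero}
\]
and bound the three pieces separately.
\begin{itemize}
\item For the first piece, $4.16^4=299.4\ldots\le 301$, which can be checked in exact rational arithmetic as $416^4\le 301\cdot 10^8$. Hence $301^{-1/4}\le \frac{25}{104}$.
\item For the second piece, $301<2^9$ gives $301^{1/1000}<2^{9/1000}$. Lemma~\ref{lem:bernoulli} with $u=1$ and exponent $0.009$ then gives $2^{9/1000}\le \frac{1009}{1000}$.
\item For the third piece, Lemma~\ref{lem:bernoulli} gives $(1+\frac1{301})^{\aZero}\le 1+\frac{751}{301000}$.
\end{itemize}
Multiplying these yields a rational bound on the prefactor of roughly $0.24315$, against a true value of about $0.24206$.

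\emph{Combination.} Multiplying everything together gives
\[
f_{\aZero}(300)\le \frac12\cdot\frac{25}{104}\cdot\frac{1009}{1000}\cdot\Bigl(1+\frac{751}{301000}\Bigr)\cdot\frac{1}{P_4(300/301)}\approx 0.04504.
\]
The final step is to verify that this rational number is less than $\frac{15251}{334150}$ by cross-multiplying integers, in the same style as Lemma~\ref{f300}. The inequality $\frac{15251}{334150}<0.046$ is a one-line cross-multiplication: $15251\cdot 1000<334150\cdot 46$.

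The main obstacle is budgeting the roughly $2.3\%$ slack among the estimates. The choices above use about $1\%$ in total, so they have some margin. If the exact arithmetic turned out tighter than expected, I would first raise the Taylor order to $P_5$, and next replace $4.16$ by $4.165$ in the fourth-root bound.
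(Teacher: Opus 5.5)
Your proof is correct. Your chain gives
$f_{\aZero}(300)\le \frac12\cdot\frac{25}{104}\cdot\frac{1009}{1000}\cdot\frac{301751}{301000}\cdot\frac{1}{P_4(300/301)}\approx 0.04504$,
which is safely below $\frac{15251}{334150}\approx 0.04564$, so your final cross-multiplication will go through.

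The skeleton matches the paper's: separate rational bounds on the power prefactor and on $(300/301)^{300}$, with $301^{-249/1000}$ split as $301^{1/1000}/301^{1/4}$. The individual estimates differ, though.
\begin{itemize}
\item The paper handles $302^{\aZero}$ by monotonicity, $302^{\aZero}=302\cdot 302^{-249/1000}\le 302\cdot 301^{-249/1000}$, rather than by a Bernoulli factor.
\item It uses the coarser root bounds $301^{1/4}>41/10$ and $301^{1/1000}<101/100$, the latter from $(101/100)^{100}>2$.
\item For the exponential factor it does not use a Taylor lower bound on $e^{300/301}$. Instead it uses $(300/301)^{301}<e^{-1}$ together with $e>\frac{163}{60}$, which gives $(300/301)^{300}<\frac{301}{815}$.
\end{itemize}
These estimates are calibrated so that the factors of $301$ cancel and the product is exactly $\frac{151}{301}\cdot\frac{101}{410}\cdot\frac{301}{815}=\frac{15251}{334150}$. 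That is where the odd-looking target constant comes from. As a result, the paper needs no final comparison, and its arithmetic stays small. The paper's loose root bound $101/410$ alone costs about $2\%$, so its chain uses essentially all of the $2.3\%$ slack by construction.

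Your route spends only about $1\%$ of that budget. The price is a larger integer computation, since $P_4(300/301)$ has denominator $24\cdot 301^4$. Your exponential bound is slightly weaker than the paper's ($\approx 0.37044$ versus $301/815\approx 0.36933$). Your prefactor bound is considerably sharper, and that is what leaves you margin to spare.
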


\begin{proof}
	By Lemma~\ref{lem:A-max},
	\[
	f_{\aZero}(300)=\frac{302^{751/1000}}{2\cdot 301}\Bigl(\frac{300}{301}\Bigr)^{300}.
	\]
	Since $751/1000=1-249/1000$,
	\[
	302^{751/1000}=302\cdot 302^{-249/1000}\le 302\cdot 301^{-249/1000},
	\]
	because $x\mapsto x^{-249/1000}$ is decreasing and $302>301$. Hence
	\begin{equation}\label{eq:f300-1}
		f_{\aZero}(300)\le \frac{151}{301}\cdot 301^{-249/1000}\cdot \Bigl(\frac{300}{301}\Bigr)^{300}.
	\end{equation}
	
	\smallskip\noindent
	{(i) Bounding $301^{-249/1000}$.}
	Write $301^{249/1000}=301^{1/4-1/1000}=301^{1/4}/301^{1/1000}$.
	We bound numerator and denominator separately.
	Since $(41/10)^4=\frac{2825761}{10000}<301$, we have $301^{1/4}>41/10$.
	Also,
	\[
	\Bigl(\frac{101}{100}\Bigr)^{100}
	=\sum_{k=0}^{100}\binom{100}{k}\frac{1}{100^k}
	>1+\binom{100}{1}\frac{1}{100}=2,
	\]
	so $(101/100)^{1000}>2^{10}=1024>301$, hence $301^{1/1000}<101/100$.
	Therefore
	\[
	301^{249/1000}=\frac{301^{1/4}}{301^{1/1000}}
	>\frac{41/10}{101/100}=\frac{410}{101},
	\qquad\text{so}\qquad
	301^{-249/1000}<\frac{101}{410}.
	\]
	
	\smallskip\noindent
	{(ii) Bounding $(300/301)^{300}$.}
	The standard inequality $(1-\frac1n)^n<e^{-1}$ (for $n\ge 1$) gives
	\[
	\Bigl(\frac{300}{301}\Bigr)^{301}<\frac{1}{e}.
	\]
	Thus
	\[
	\Bigl(\frac{300}{301}\Bigr)^{300}
	=\frac{301}{300}\Bigl(\frac{300}{301}\Bigr)^{301}
	<\frac{301}{300}\cdot \frac{1}{e}.
	\]
	Using the $6$-term lower bound $e>1+1+\frac12+\frac16+\frac1{24}+\frac1{120}=\frac{163}{60}$ yields $1/e<60/163$, hence
	\[
	\Bigl(\frac{300}{301}\Bigr)^{300}<\frac{301}{300}\cdot \frac{60}{163}=\frac{301}{815}.
	\]
	
	\smallskip\noindent
	{(iii) Conclusion.}
	Insert the bounds from (i)--(ii) into \eqref{eq:f300-1}:
	\[
	f_{\aZero}(300)
	<\frac{151}{301}\cdot \frac{101}{410}\cdot \frac{301}{815}
	=\frac{151\cdot 101}{410\cdot 815}
	=\frac{15251}{334150}
	<0.046.
	\]
	Finally, since $\tZero=1000>300$, Lemma~\ref{lem:f-decreasing} implies
	$A_{\mathrm{sup}}(\aZero)=\sup_{t\ge 1000} f_{\aZero}(t)\le f_{\aZero}(300)$.
\end{proof}

To bound the second term, we compare it with the function
\begin{equation}
	L_\alpha(\theta):=\theta\int_0^1 e^{-2\theta u}(1-u)^{-\alpha}\,du.
\end{equation}
as before.

\begin{lemma}[One-point inequality at $\ell=\frac{499}{500}$]\label{lem:onepoint-check}
	Let $\aZero=\frac{751}{1000}$ and $\ell=\frac{499}{500}$, with
	\begin{equation}\label{eq:thetaell-value}
		\theta_\ell=\frac{\aZero}{2-\frac{1}{\ell}}
		=\frac{751/1000}{2-500/499}
		=\frac{374749}{498000}.
	\end{equation}
	Then $\Ls_{\aZero}(\theta_\ell)<\ell$.
\end{lemma}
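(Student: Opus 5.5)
The plan is to reuse the certified-evaluation strategy of Lemma~\ref{lem:L-onepoint}. The only change is that $\aZero=\frac{751}{1000}$ is no longer $3/4$, so the clean $4n+1$ series of Corollary~\ref{cor:series-34} is unavailable. Instead I would start from the general series in Lemma~\ref{lem:series}. Set $\theta:=\theta_\ell=\frac{374749}{498000}$ and $x:=2\theta=\frac{374749}{249000}\approx 1.505$. Then
\[
\Ls_{\aZero}(\theta)=\theta e^{-x}\,S(x),\qquad S(x):=\sum_{n=0}^\infty \frac{x^n}{n!\,(n+1-\aZero)},
\]
where all terms are positive and the denominators $n+1-\aZero=n+\frac{249}{1000}$ are rational.

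The first step is to bound $e^{-x}$ from above. As in \eqref{eq:eminus}, I would use $e^{-x}\le 1/P_m(x)$ with the Taylor polynomial $P_m$, which is a rational number for rational $x$.

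The second step is to bound $S(x)$ from above by a truncation $S_N(x)$ plus a tail. For $n\ge N+1$ the coefficient satisfies $\frac{1}{n+1-\aZero}\le \frac{1}{N+2-\aZero}$, and for $x<N+2$ the exponential tail is dominated by a geometric series exactly as in \eqref{eq:tail}. This gives
\[
S(x)\le S_N(x)+\frac{1}{N+2-\aZero}\cdot\frac{x^{N+1}}{(N+1)!}\cdot\frac{1}{1-\frac{x}{N+2}}.
\]
Alternatively, Lemma~\ref{lem:ratio-monotone} lets one bound the tail directly by $a_{N+1}/(1-r_{N+1})$, which is slightly sharper. Combining the two steps yields an explicit rational upper bound $Q_{N,m}$ for $\Ls_{\aZero}(\theta_\ell)$. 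The final step is to verify $Q_{N,m}<\frac{499}{500}$ by exact rational arithmetic, cross-multiplying as in Lemma~\ref{lem:L-onepoint}.

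The main obstacle is the tightness of the margin. For $\alpha=3/4$ the true value at the threshold point was already within about $10^{-4}$ of $\ell_0$. Here $\ell$ is raised to $0.998$, but $\theta_\ell$ and $\alpha$ both move, and the true value of $\Ls_{\aZero}(\theta_\ell)$ should sit only a few thousandths or less below $\ell$. The truncation parameters therefore have to be chosen so that the combined relative loss from $1/P_m$ and from the tail is well below that gap. Since $x\approx1.5$, I would first try $N=6$ and $m=10$. These give tail terms of order $x^7/7!\approx 3\times10^{-3}$ before the extra damping factors, and a relative error in $e^{x}$ of about $10^{-5}$. If the resulting certificate fails, I would increase $N$ and $m$ (for example to $N=8$ and $m=12$) until the exact rational difference $\ell-Q_{N,m}$ is positive. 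The large numerators and denominators are routine but must be computed exactly, not in floating point.
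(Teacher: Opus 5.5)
Your proposal is correct and is essentially the paper's proof: the series of Lemma~\ref{lem:series}, the bound $e^{-y}<1/P_8(y)$, a geometric tail via Lemma~\ref{lem:ratio-monotone} (the paper keeps $a_0,\dots,a_3$ and uses $r_4<\tfrac14$ to bound the rest by $\tfrac43 a_4$), and an exact integer cross-multiplication against $\tfrac{499}{500}$. The only difference is that the paper first sandwiches $x=2\theta_\ell$ between the simpler rationals $\tfrac{301}{200}$ (in $e^{-x}$) and $\tfrac{753}{500}$ (in $\theta$ and the series) to keep the integers small, which uses up most of the roughly $1.4\times10^{-3}$ margin (the true value is about $0.9966$, and the paper's certificate ends only about $1.6\times10^{-4}$ below $\ell$), whereas your exact-$x$ evaluation with $N=6$, $m=10$ loses only a few parts in $10^{6}$ and should certify with your first parameter choice.
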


\begin{proof}
	Set $x:=2\theta_\ell$. By \eqref{eq:thetaell-value},
	\[
	x=\frac{374749}{249000}=\frac{301}{200}+\frac{1}{62250}.
	\]
	In particular $x>\frac{301}{200}$.
	Also $x<\frac{753}{500}$ since $\frac{753}{500}-\frac{374749}{249000}=\frac{49}{49800}>0$.
	Let
	\[
	x_+:=\frac{753}{500},\qquad \theta_+:=\frac{x_+}{2}=\frac{753}{1000}.
	\]
	Then $\theta_\ell<\theta_+$, $e^{-x}<e^{-301/200}$, and by \eqref{eq:series},
	\begin{equation}\label{eq:Lbound-start}
		\Ls_{\aZero}(\theta_\ell)
		=\theta_\ell e^{-x}\sum_{n=0}^{\infty}\frac{x^n}{n!\,(n+1-\aZero)}
		<\theta_+\,e^{-301/200}\sum_{n=0}^{\infty}\frac{x_+^n}{n!\,(n+1-\aZero)}.
	\end{equation}
	Define
	\[
	a_n:=\frac{x_+^n}{n!\,(n+1-\aZero)}\qquad(n\ge0),
	\qquad
	S(x_+):=\sum_{n=0}^{\infty}a_n.
	\]
	
	\smallskip\noindent
	{(i) A rational bound on $S(x_+)$.}
	A direct calculation gives the first five terms
	\[
	a_0=\frac{1000}{249},\quad
	a_1=\frac{1506}{1249},\quad
	a_2=\frac{567009}{1124500},\quad
	a_3=\frac{15813251}{90250000},\quad
	a_4=\frac{107166402027}{2124500000000}.
	\]
	Moreover,
	\[
	r_4=\frac{a_{5}}{a_4}
	=\frac{x_+}{5}\cdot \frac{5-\aZero}{6-\aZero}
	=\frac{3199497}{13122500}
	<\frac14,
	\]
	since $4\cdot 3199497=12797988<13122500$.
	By Lemma~\ref{lem:ratio-monotone}, $r_n\le r_4$ for all $n\ge4$, hence
	\[
	\sum_{n\ge4}a_n \le a_4\sum_{j\ge0}r_4^j \le a_4\sum_{j\ge0}\Bigl(\frac14\Bigr)^j=\frac{4}{3}a_4.
	\]
	Therefore
	\begin{equation}\label{eq:Sxplus}
		S(x_+)\le a_0+a_1+a_2+a_3+\frac{4}{3}a_4
		=
		\frac{800429153543344037119501}{134108154748420125000000}.
	\end{equation}

    \smallskip\noindent
	{(ii) A rational bound on $e^{-301/200}$.}
	Let $P_8(y):=\sum_{j=0}^{8}\frac{y^j}{j!}$.
	Since $e^{y}=P_8(y)+\sum_{j\ge 9}\frac{y^j}{j!}>P_8(y)$ for $y>0$, we have $e^{-y}<1/P_8(y)$.
	For $y=\frac{301}{200}$ one computes
	\begin{equation}\label{eq:P8}
		P_8\!\Bigl(\frac{301}{200}\Bigr)
		=
		\frac{66414558043759180589143}{14745600000000000000000},
		\qquad\text{so}\qquad
		e^{-301/200}<\frac{14745600000000000000000}{66414558043759180589143}.
	\end{equation}
	
	\smallskip\noindent
	{(iii) Conclude $\Ls_{\aZero}(\theta_\ell)<\ell$.}
	Combining \eqref{eq:Lbound-start}, \eqref{eq:Sxplus}, and \eqref{eq:P8} yields the explicit bound
	\begin{align*}
		&\Ls_{\aZero}(\theta_\ell)
		\\<&
		\frac{753}{1000}\cdot
		\frac{14745600000000000000000}{66414558043759180589143}
		\cdot
		\frac{800429153543344037119501}{134108154748420125000000}
		\\=&
		\frac{23700038717989377538168622402764800000}
		{23751290207147698032780270875855940541}.
	\end{align*}
	
	To show this is $<\ell=\frac{499}{500}$, it suffices to check the strict integer inequality
	\begin{align*}
		&499\cdot 23751290207147698032780270875855940541
		\\&-
		500\cdot 23700038717989377538168622402764800000
		\\=&
		1874454372012549273043965669714329959
		>0,
	\end{align*}
	which completes the proof.
\end{proof}

\begin{lemma}[A bound on $B_{\mathrm{sup}}(\aZero)$]\label{lem:Bsup}
	With $\aZero=\frac{751}{1000}$ and $\ell=\frac{499}{500}$,
	\[
	B_{\mathrm{sup}}(\aZero)\le \frac{249874749}{250000000}
	\qquad\text{and hence}\qquad
	1-B_{\mathrm{sup}}(\aZero)\ge \frac{125251}{250000000}.
	\]
\end{lemma}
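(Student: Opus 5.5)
Following the template of Lemma~\ref{lem:large-t}, I would fix $t\ge 1000$ and $\rho\in(0,\tfrac12]$ and split into two cases according to $\rho$.

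\emph{Case $\rho=\tfrac12$.} Lemma~\ref{lem:Bt-endpoint} gives $t^{\aZero}B_t=\tfrac12<\ell$ directly.

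\emph{Case $\rho\in(0,\tfrac12)$.} Lemma~\ref{lem:Bt-dom} gives
\[
t^{\aZero}B_t(\rho,\aZero)\le \Ls_{\aZero}\Bigl(\tfrac t2\bigl(-\log(1-2\rho)\bigr)\Bigr).
\]
By Lemma~\ref{lem:onepoint-check}, $\Ls_{\aZero}(\theta_\ell)<\ell$ with $\ell=\tfrac{499}{500}\in(\tfrac12,1)$. The one-point criterion (Lemma~\ref{lem:onepoint}) applies since $0<\aZero<1$, and it yields $\sup_{\theta>0}\Ls_{\aZero}(\theta)\le\ell$. Hence in both cases $t^{\aZero}B_t(\rho,\aZero)\le \ell$.

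Next I pass from $t^{\aZero}$ to $(t+2)^{\aZero}$. Write $(t+2)^{\aZero}=(1+2/t)^{\aZero}t^{\aZero}$. Bernoulli's inequality (Lemma~\ref{lem:bernoulli}) with $u=2/t$ gives
\[
(1+2/t)^{\aZero}\le 1+\frac{2\aZero}{t}\le 1+\frac{2\aZero}{1000}=1+\frac{751}{500000},
\]
using $t\ge 1000$. Therefore
\[
B_{\sup}(\aZero)\le \Bigl(1+\frac{751}{500000}\Bigr)\cdot\frac{499}{500}=\frac{500751\cdot 499}{250000000}=\frac{249874749}{250000000}.
\]
This matches the claimed constant, and the complementary bound $1-B_{\sup}(\aZero)\ge \frac{125251}{250000000}$ follows by subtraction.

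There is no genuine obstacle here: the hard inputs, namely the one-point evaluation and the ODE-based criterion, are already established. The only things to check are the exact arithmetic $500751\cdot 499=249874749$ and that the $\rho=\tfrac12$ endpoint is covered separately, since Lemma~\ref{lem:Bt-dom} requires $\rho<\tfrac12$.
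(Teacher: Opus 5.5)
Your proposal is correct and matches the paper's proof essentially step for step. It uses the same case split at $\rho=\tfrac12$ via Lemma~\ref{lem:Bt-endpoint}, the same combination of Lemma~\ref{lem:Bt-dom} with the one-point criterion (which you make explicit through Lemma~\ref{lem:onepoint-check}), and the same Bernoulli step from $t^{\aZero}$ to $(t+2)^{\aZero}$, and your arithmetic $500751\cdot 499=249874749$ and $250000000-249874749=125251$ checks out.
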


\begin{proof} 	Fix $t\ge 1000$ and $\rho\in(0,\tfrac12]$.
	If $\rho\in(0,\tfrac12)$, Lemma~\ref{lem:Bt-dom} and Lemma~\ref{lem:onepoint} give
	$t^{\aZero}B_t(\rho,\aZero)\le \ell$.
	If $\rho=\tfrac12$, Lemma~\ref{lem:Bt-endpoint} gives
	$t^{\aZero}B_t(\tfrac12,\aZero)=\tfrac12\le \ell$.
	
	Thus, for all $\rho\in(0,\tfrac12]$,
	\[
	(t+2)^{\aZero}B_t(\rho,\aZero)
	=\Bigl(1+\frac{2}{t}\Bigr)^{\aZero} t^{\aZero}B_t(\rho,\aZero)
	\le \Bigl(1+\frac{2}{t}\Bigr)^{\aZero}\ell.
	\]
	Since $\aZero\in(0,1)$, Bernoulli's inequality gives $(1+u)^{\aZero}\le 1+\aZero u$ for $u\ge 0$.
	With $u=2/t$ and $t\ge 1000$,
	\[
	\Bigl(1+\frac{2}{t}\Bigr)^{\aZero}\le 1+\frac{2\aZero}{t}\le 1+\frac{2\aZero}{1000}
	=1+\frac{751}{500000}
	=\frac{500751}{500000}.
	\]
	Therefore
	\[
	(t+2)^{\aZero}B_t(\rho,\aZero)\le \frac{500751}{500000}\cdot \frac{499}{500}
	=\frac{249874749}{250000000}.
	\]
	Taking the supremum over $t\ge 1000$ and $\rho\in(0,\tfrac12]$ yields the stated bound on $B_{\mathrm{sup}}(\aZero)$.
	The bound on $1-B_{\mathrm{sup}}(\aZero)$ is immediate.
\end{proof}

\begin{lemma}[Requirement for $K$]\label{lem:K0}
	With the bounds from Lemma~\ref{lem:f300} and Lemma~\ref{lem:Bsup},
	\[
	\frac{A_{\mathrm{sup}}(\aZero)}{1-B_{\mathrm{sup}}(\aZero)}
	<
	\frac{76255000000}{837052433}
	<100<1000.
	\]
\end{lemma}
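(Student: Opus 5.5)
The plan is to combine the two rational bounds already established, with no new analysis. Lemma~\ref{lem:f300} gives $A_{\mathrm{sup}}(\aZero)<\frac{15251}{334150}$. This is valid because $f_{\aZero}$ is decreasing by Lemma~\ref{lem:f-decreasing}, and the supremum is over $t\ge 1000\ge 300$. Lemma~\ref{lem:Bsup} gives $1-B_{\mathrm{sup}}(\aZero)\ge \frac{125251}{250000000}>0$. Since the denominator is strictly positive, I can divide the numerator bound by the denominator bound:
\[
\frac{A_{\mathrm{sup}}(\aZero)}{1-B_{\mathrm{sup}}(\aZero)}
<\frac{15251}{334150}\cdot\frac{250000000}{125251}.
\]

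The next step is exact arithmetic. I will simplify the fraction by cancelling the common factor $50$ between $250000000$ and $334150$, which gives $334150=50\cdot 6683$ and $250000000=50\cdot 5000000$. The expression becomes
\[
\frac{15251\cdot 5000000}{6683\cdot 125251}=\frac{76255000000}{837052433}.
\]
I will check the product $6683\cdot 125251=837052433$ digit by digit, since that is the only place an error could creep in.

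To show the result is below $100$, I cross-multiply: $100\cdot 837052433=83705243300$, which exceeds $76255000000$. Hence the ratio is less than $100$, and therefore less than $1000$.

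There is no real obstacle here. The substantive work lives in Lemma~\ref{lem:onepoint-check} and Lemma~\ref{lem:Bsup}, which certify $B_{\mathrm{sup}}<1$. The only care needed is to confirm that both cited bounds hold over the same range $t\ge1000$, and that the denominator is positive so the inequality direction is preserved.
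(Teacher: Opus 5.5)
Your proposal is correct and matches the paper's proof: combine $A_{\mathrm{sup}}(\aZero)<\frac{15251}{334150}$ with $1-B_{\mathrm{sup}}(\aZero)\ge\frac{125251}{250000000}>0$ to get the bound $\frac{76255000000}{837052433}$, then cross-multiply against $100$. Your arithmetic is right, including $334150=50\cdot 6683$ and $6683\cdot 125251=837052433$, and so is your remark that both bounds are taken over the same range $t\ge 1000$.
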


\begin{proof}
	Using $A_{\mathrm{sup}}(\aZero)<\frac{15251}{334150}$ and $1-B_{\mathrm{sup}}(\aZero)\ge \frac{125251}{250000000}$,
	\[
	\frac{A_{\mathrm{sup}}(\aZero)}{1-B_{\mathrm{sup}}(\aZero)}
	<
	\frac{\frac{15251}{334150}}{\frac{125251}{250000000}}
	=
	\frac{76255000000}{837052433}.
	\]
	Finally,
	\[
	\frac{76255000000}{837052433}<100
	\iff
	76255000000<100\cdot 837052433=83705243300,
	\]
	which is immediate.
\end{proof}

\section{Lower Bound for $\|\Nb_t\|$}
\label{app:lower-bound}

In this section, we give an upper bound for the parameter $\alpha$ that is admissible to make the inequality in Lemma \ref{l:main-technical} hold and then give a lower bound for $\|\Nb_t\|$.

\begin{theorem}[Upper bound on the admissible exponent]\label{thm:alpha-upper}
Let
\[
\aZero:=\frac{753}{1000}=0.753.
\]
Fix any $\alpha\ge \aZero$.
Then for every $K>0$ and every integer $t_0\ge 1$, there exist an integer $t\ge t_0$
and a value $\rho\in(0,\tfrac12)$ such that 
\begin{equation}\label{eq:main-ineq}
    \rho(1-2\rho)^t +
    K\rho\sum_{i=1}^t\frac{(1-2\rho)^{t-i}}{i^\alpha} \leq \frac{K}{(t+2)^\alpha}.
\end{equation}
fails, i.e.,
\[
\rho(1-2\rho)^t
\;+
K\rho\sum_{i=1}^t \frac{(1-2\rho)^{t-i}}{i^\alpha}
\;>
\frac{K}{(t+2)^\alpha}.
\]
In particular, if there exist constants $K>0$ and $T\ge 1$ such that \eqref{eq:main-ineq}
holds for all integers $t\ge T$ and all $\rho\in(0,\tfrac12]$, then necessarily
\[
\alpha<\frac{753}{1000}=0.753.
\]
\end{theorem}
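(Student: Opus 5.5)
The plan is to drop the first term, which is nonnegative, and show that for $\alpha\ge\aZero$ the second term alone already exceeds the right-hand side, for every $K$:
\[
(t+2)^\alpha B_t(\rho,\alpha)>1 .
\]
Since $\rho(1-2\rho)^t\ge 0$, this makes \eqref{eq:main-ineq} fail regardless of $K>0$. So the theorem reduces to showing $\limsup_{t\to\infty}\sup_{\rho}(t+2)^\alpha B_t(\rho,\alpha)>1$. This is a lower-bound counterpart of the discrete-to-continuous reduction in Lemma~\ref{lem:Bt-dom}.

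\emph{Reduction to $\alpha=\aZero$.} Write
\[
t^\alpha B_t(\rho,\alpha)=\rho\sum_{i=1}^t (t/i)^\alpha (1-2\rho)^{t-i}.
\]
Since $t/i\ge 1$, this is nondecreasing in $\alpha$, and so is the factor $(1+2/t)^\alpha$. It therefore suffices to treat $\alpha=\aZero$.

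\emph{Continuum limit.} Fix $\theta>0$. For each $t$ choose $\rho_t\in(0,\tfrac12)$ with $-\log(1-2\rho_t)=2\theta/t$; then $\rho_t t\to\theta$. Reindex the sum by $u=i/t$. Then
\[
t^{\aZero}B_t(\rho_t,\aZero)=\rho_t t\cdot\frac1t\sum_{i=1}^t u_i^{-\aZero}e^{-2\theta(1-u_i)},\qquad u_i=i/t,
\]
which is a right-endpoint Riemann sum for $\int_0^1 u^{-\aZero}e^{-2\theta(1-u)}\,du$. Because $u^{-\aZero}$ is decreasing and integrable and the exponential factor is continuous and bounded, the sum converges to the integral as $t\to\infty$. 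To see this, split off $[0,\varepsilon]$, where both the sum and the integral are $O(\varepsilon^{1-\aZero})$, and use uniform continuity on $[\varepsilon,1]$. Substituting $u\mapsto 1-u$ turns the limit into exactly $L_{\aZero}(\theta)$ as in \eqref{eq:L}. Hence
\[
\lim_{t\to\infty}(t+2)^{\aZero}B_t(\rho_t,\aZero)=L_{\aZero}(\theta).
\]
If $L_{\aZero}(\theta^*)>1$ for some $\theta^*$, then for all sufficiently large $t\ge t_0$ the strict inequality $(t+2)^{\aZero}B_t>1$ holds, and the theorem follows.

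\emph{Certifying $L_{\aZero}(\theta^*)>1$.} This is the main obstacle, because the margin is tiny: the upper-bound analysis gives $\sup_\theta L_{0.751}\le 0.998$. The ODE \eqref{eq:ODE2} locates the maximizer. At a critical point $L=\theta/(2\theta-\alpha)$, so $L>1$ at the maximum exactly when the critical point has $\theta<\alpha$; this guides the choice of $\theta^*$ to slightly below $\aZero$, near $0.75$. I would scan $\theta$ numerically near there to pick a rational $\theta^*$, and then certify the inequality rigorously with the series of Lemma~\ref{lem:series}:
\[
L_{\aZero}(\theta)=\theta e^{-2\theta}\sum_{n\ge0}\frac{(2\theta)^n}{n!\,(n+1-\aZero)} .
\]
Now lower bounds are needed, so truncating the positive series is safe. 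For $e^{-2\theta}$, use an even-degree truncation of the alternating series $\sum_k (-x)^k/k!$, which lower-bounds $e^{-x}$ for $x>0$. Alternatively, upper-bound $e^{x}$ by $P_m(x)$ plus a geometric tail as in \eqref{eq:tail} and invert. The result is an exact rational inequality to check, as in Lemma~\ref{lem:L-onepoint}.

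If the margin at $\aZero=0.753$ turns out too thin for low-order truncations, I would raise the truncation orders $N$ and $m$; the certification stays the same in form.
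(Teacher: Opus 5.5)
Your reduction is sound and follows the paper's skeleton. You drop the nonnegative term $\rho(1-2\rho)^t$, and you use that $t^\alpha B_t(\rho,\alpha)$ is nondecreasing in $\alpha$ to reduce to $\alpha=\aZero$. You take $\rho_t=(1-e^{-2\theta/t})/2$ so that $t^{\aZero}B_t(\rho_t,\aZero)$ discretizes $L_{\aZero}(\theta)$. Finally, you certify $L_{\aZero}(\theta^*)>1$ from the series of Lemma~\ref{lem:series}.

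The one genuine difference is the discrete-to-continuous step. You prove the exact limit $t^{\aZero}B_t(\rho_t,\aZero)\to L_{\aZero}(\theta)$ via a right-endpoint Riemann sum, splitting off $[0,\varepsilon]$ to control the singularity of $u^{-\aZero}$; this is correct as written. The paper instead proves a one-sided inequality valid for every $t$, namely $t^{\alpha}B_t(\rho,\alpha)\ge\Gamma_{\alpha,t}(\theta)$ with the regularized kernel $(1-u+1/t)^{-\alpha}$ (Lemma~\ref{lem:Bt-lower}). It then uses $\Gamma_{\alpha,t}(\theta)\uparrow L_\alpha(\theta)$ by monotone convergence (Lemma~\ref{lem:Lminus-to-L}). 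Your route is a standard one-shot argument that needs no auxiliary function. The paper's route avoids the singular Riemann sum and the $\varepsilon$-bookkeeping, and gives a nonasymptotic bound, monotone in $t$, which it reuses in Theorem~\ref{prop:N-polynomial-family}. Together with Lemma~\ref{lem:Bt-dom}, that bound also recovers your exact limit by sandwiching. For this theorem either route suffices.

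The weak point is the certification, which is where all the content specific to $0.753$ lives. For $\alpha=0.751$ the analogous statement is false by Lemma~\ref{l:main-technical}. So until you exhibit a $\theta^*$ with a verified $L_{0.753}(\theta^*)>1$, your argument proves only a conditional statement. The paper does this at $\theta^*=3/4$, consistent with your ODE heuristic. It bounds the series below by its first six terms, $S(3/2)>299/50$, and bounds $e^{-3/2}$ below by its degree-9 Taylor polynomial. This gives $L_{0.753}(3/4)>91812435/91750400>1$; the certified margin is under $10^{-3}$, but such low-order truncations already suffice.

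You also have the parity backwards. For $x>0$, the partial sum $\sum_{k=0}^{n}(-x)^k/k!$ lower-bounds $e^{-x}$ only when $n$ is \emph{odd}, since the Lagrange remainder is $(-1)^{n+1}e^{-\xi}x^{n+1}/(n+1)!$. Even-degree truncations are upper bounds (e.g.\ $1-x+x^2/2\ge e^{-x}$), so following your first suggestion literally would produce an invalid certificate. Your alternative of bounding $e^{x}$ above by $P_m(x)$ plus a geometric tail and inverting is fine.
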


To prove Theorem \ref{thm:alpha-upper}, we make the following preparations.

Recall that
\begin{align*}
A_t(\rho):=\rho(1-2\rho)^t \quad\text{and}\quad
B_t(\rho,\alpha):=\rho\sum_{i=1}^t \frac{(1-2\rho)^{t-i}}{i^\alpha}.
\end{align*}
and then the inequality is equivalent to
\begin{align*}
    (t+2)^\alpha A_t(\rho) +
    K(t+2)^\alpha B_t(\rho,\alpha) \leq K.
  \end{align*}
The first reduction is to observe that in order to make the inequality \eqref{eq:main-ineq} fail, it suffices to show that $(t+2)^\alpha B_t(\rho,\alpha)$ is large.

\begin{lemma}[Reducing to the second term]\label{lem:reduce-to-B}
For every $t\in\N$, $\rho\in(0,\tfrac12]$, and $\alpha>0$, one has $A_t(\rho)\ge 0$.
Consequently, if \eqref{eq:main-ineq} holds for some $t\in\N$, $\rho\in(0,\tfrac12]$, $K>0$, and $\alpha>0$, then necessarily
\begin{equation}\label{eq:necessary-scaled}
(t+2)^\alpha B_t(\rho,\alpha)\le 1.
\end{equation}
\end{lemma}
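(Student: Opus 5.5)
The claim has two parts. The first is a sign statement: since $\rho\in(0,\tfrac12]$, both factors of $A_t(\rho)=\rho(1-2\rho)^t$ are nonnegative, so $A_t(\rho)\ge 0$. Here $\rho>0$ and $1-2\rho\in[0,1)$, and we use the convention $0^0=1$ only if needed, though $t\ge 1$ avoids it. The second part deduces the necessary condition \eqref{eq:necessary-scaled} from \eqref{eq:main-ineq} by discarding the nonnegative $A$-term and dividing by $K$.

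I would carry this out as follows.
\begin{enumerate}
\item Assume \eqref{eq:main-ineq} holds for some $t\in\N$, $\rho\in(0,\tfrac12]$, $K>0$ and $\alpha>0$.
\item Multiply both sides by $(t+2)^\alpha>0$. This rewrites the inequality in the equivalent form recalled just above the lemma:
\begin{align*}
(t+2)^\alpha A_t(\rho)+K(t+2)^\alpha B_t(\rho,\alpha)\le K.
\end{align*}
\item Since $(t+2)^\alpha A_t(\rho)\ge 0$ by the first part, drop this term to obtain
\begin{align*}
K(t+2)^\alpha B_t(\rho,\alpha)\le K.
\end{align*}
\item Divide by $K>0$ to conclude $(t+2)^\alpha B_t(\rho,\alpha)\le 1$, which is \eqref{eq:necessary-scaled}.
\end{enumerate}

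There is no real obstacle here. The only points requiring care are positivity of the multipliers: $(t+2)^\alpha>0$ and $K>0$, so that multiplying and dividing preserve the inequality direction.

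The lemma is used by contraposition. Exhibiting $t$ and $\rho$ with $(t+2)^\alpha B_t(\rho,\alpha)>1$ forces \eqref{eq:main-ineq} to fail for every $K>0$ simultaneously. This is exactly the reduction needed for Theorem~\ref{thm:alpha-upper}.
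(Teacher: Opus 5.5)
Your proposal is correct and matches the paper's proof: nonnegativity of $A_t(\rho)$ from $\rho>0$ and $1-2\rho\in[0,1)$, then dropping that term from the $(t+2)^\alpha$-scaled inequality and dividing by $K>0$. Your remark on using the lemma by contraposition (exhibiting $(t+2)^\alpha B_t(\rho,\alpha)>1$ defeats every $K$ at once) is also exactly how the paper applies it in Theorem~\ref{thm:alpha-upper}.
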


\begin{proof}
Since $\rho\in(0,\tfrac12]$, we have $1-2\rho\in[0,1)$, hence $(1-2\rho)^t\ge 0$ and therefore
$A_t(\rho)=\rho(1-2\rho)^t\ge 0$.
If \eqref{eq:main-ineq} holds, then subtracting the nonnegative term $A_t(\rho)$ from the left-hand side gives
\[
K {(t+2)^\alpha} B_t(\rho,\alpha)\le {K}.
\]
Dividing by $K>0$ gives the desired result.
\end{proof}

The next reduction is to observe that if $\alpha_0$ makes the inequality \eqref{eq:main-ineq} fail, then any $\alpha>\alpha_0$ will also make the inequality fail, because of the monotonicity below.

\begin{lemma}[Monotonicity of $t^\alpha B_t(\rho,\alpha)$ in $\alpha$]\label{lem:B-monotone-alpha}
Fix $t\in\N$ and $\rho\in(0,\tfrac12]$.  Then the map
\[
\alpha\longmapsto t^\alpha B_t(\rho,\alpha)
\]
is nondecreasing on $[0,\infty)$.  Equivalently, if $0\le \alpha_1\le \alpha_2$, then
\begin{equation}\label{eq:B-monotone-alpha}
t^{\alpha_1}B_t(\rho,\alpha_1)\le t^{\alpha_2}B_t(\rho,\alpha_2).
\end{equation}
\end{lemma}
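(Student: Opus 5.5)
Write $t^\alpha B_t(\rho,\alpha)$ termwise:
\[
t^\alpha B_t(\rho,\alpha)=\rho\sum_{i=1}^t (1-2\rho)^{t-i}\Bigl(\frac{t}{i}\Bigr)^{\alpha}.
\]
The claim then follows by monotonicity of each summand in $\alpha$.

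The coefficients $\rho(1-2\rho)^{t-i}$ do not depend on $\alpha$. Since $\rho\in(0,\tfrac12]$, we have $1-2\rho\in[0,1)$, so these coefficients are nonnegative. At the endpoint $\rho=\tfrac12$ we use the convention $0^0=1$ for the $i=t$ term, as in Lemma~\ref{lem:Bt-endpoint}. Each index satisfies $1\le i\le t$, so $t/i\ge 1$. Hence $\alpha\mapsto (t/i)^\alpha=e^{\alpha\log(t/i)}$ is nondecreasing on $[0,\infty)$, because $\log(t/i)\ge 0$.

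For $0\le\alpha_1\le\alpha_2$ this gives $(t/i)^{\alpha_1}\le (t/i)^{\alpha_2}$ for every $i$. Multiplying by the nonnegative coefficient $\rho(1-2\rho)^{t-i}$ and summing over $i=1,\dots,t$ yields \eqref{eq:B-monotone-alpha}.

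No step is a real obstacle. The only points needing care are the nonnegativity of $(1-2\rho)^{t-i}$, which uses $\rho\le\tfrac12$, and the convention at $\rho=\tfrac12$. There the sum reduces to the single term $i=t$, which equals $\tfrac12$ for every $\alpha$, so monotonicity holds trivially in that case as well.
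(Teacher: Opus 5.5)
Your proposal is correct and follows the paper's proof essentially verbatim: you rewrite $t^\alpha B_t(\rho,\alpha)=\rho\sum_{i=1}^t(1-2\rho)^{t-i}(t/i)^\alpha$, observe that the coefficients are nonnegative and $\alpha$-independent, and use that $(t/i)^\alpha$ is nondecreasing in $\alpha$ because $t/i\ge 1$. Your remark on the $0^0=1$ convention at $\rho=\tfrac12$ is a harmless extra detail, and it is consistent with Lemma~\ref{lem:Bt-endpoint}.
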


\begin{proof}
Write $r:=1-2\rho\in[0,1)$.  Then
\[
t^\alpha B_t(\rho,\alpha)
=
\rho\sum_{i=1}^t r^{t-i}\Bigl(\frac{t}{i}\Bigr)^\alpha.
\]
For each $i\in\{1,\dots,t\}$ we have $t/i\ge 1$, so the function
$\alpha\mapsto (t/i)^\alpha$ is nondecreasing on $[0,\infty)$.  Every coefficient
$\rho r^{t-i}$ is nonnegative, hence the whole sum is nondecreasing in $\alpha$.
This proves \eqref{eq:B-monotone-alpha}.
\end{proof}

For $\alpha\in(0,1)$ and $\theta>0$, recall that in section~\ref{s:main-technical} we define
\begin{equation}\label{eq:def-L}
L_\alpha(\theta)=\theta\int_0^1 e^{-2\theta u}(1-u)^{-\alpha}\,du.
\end{equation}
to get upper bound for the quantity $t^\alpha B_t(\rho,\alpha)$.

Now to lower bound $t^\alpha B_t(\rho,\alpha)$, for $\alpha\in(0,1)$ and $\theta>0$, we define a similar quantity
\begin{equation}\label{eq:def-Lminus}
\Gamma_{\alpha,t}(\theta)=\theta\int_0^1 e^{-2\theta u}\Bigl(1-u+\frac1t\Bigr)^{-\alpha}\,du.
\end{equation}
Unlike $L_\alpha(\theta)$, the integral $\Gamma_{\alpha,t}(\theta)$ is a proper integral.

\begin{lemma}[Discrete-to-continuous lower bound]\label{lem:Bt-lower}
Fix $\alpha\in(0,1)$, $t\in\N$, and $\rho\in(0,\tfrac12)$.
Then
\begin{equation}\label{eq:Bt-lower-claim}
t^\alpha B_t(\rho,\alpha)\ge \Gamma_{\alpha,t}\Bigl(\frac{t}{2}\bigl(-\log(1-2\rho)\bigr)\Bigr).
\end{equation}
\end{lemma}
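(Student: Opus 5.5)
We mirror the proof of Lemma~\ref{lem:Bt-dom}, reversing the direction of the monotonicity comparison. Set $r:=1-2\rho\in(0,1)$ and $\delta:=-\log r>0$, reindex with $k=t-i$, and use the identity
\[
\rho r^k=\tfrac12\bigl(r^k-r^{k+1}\bigr)=\tfrac12\int_k^{k+1}\delta e^{-\delta x}\,dx ,
\]
which gives
\[
B_t(\rho,\alpha)=\frac12\sum_{k=0}^{t-1}\int_k^{k+1}\delta e^{-\delta x}\,(t-k)^{-\alpha}\,dx .
\]

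The key pointwise step is a lower bound on $(t-k)^{-\alpha}$ for $x\in[k,k+1]$. Here $x\le k+1$, so $t-k\le t-x+1$. Since $y\mapsto y^{-\alpha}$ is decreasing on $(0,\infty)$, we get $(t-k)^{-\alpha}\ge (t-x+1)^{-\alpha}$, and $t-x+1\ge 1>0$ on $[0,t]$. Summing over $k$ then yields
\[
B_t(\rho,\alpha)\ge \frac12\int_0^t \delta e^{-\delta x}(t-x+1)^{-\alpha}\,dx .
\]
Every integrand involved is nonnegative and continuous, so no improper-integral issues arise.

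Next we substitute $x=tu$ with $dx=t\,du$. Then $t-x+1=t\bigl(1-u+\tfrac1t\bigr)$, so
\[
B_t(\rho,\alpha)\ge \frac12\,\delta\, t^{1-\alpha}\int_0^1 e^{-\delta t u}\Bigl(1-u+\frac1t\Bigr)^{-\alpha}du .
\]
Multiplying by $t^\alpha$ and writing $\theta=\delta t/2$, so that $\delta t u=2\theta u$ and $\tfrac12\delta t=\theta$, gives exactly $t^\alpha B_t(\rho,\alpha)\ge \Gamma_{\alpha,t}(\theta)$ with $\theta=\tfrac t2(-\log(1-2\rho))$. This is \eqref{eq:Bt-lower-claim}.

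The only point needing care is choosing the shift $+1$, equivalently $+1/t$ after rescaling. It must make the pointwise comparison go the right way, and it must match the definition of $\Gamma_{\alpha,t}$ exactly. The rest is identical bookkeeping to Lemma~\ref{lem:Bt-dom}, so I expect no real obstacle.
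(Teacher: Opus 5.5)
Your proposal is correct and follows essentially the same route as the paper's proof. Both write $B_t$ as a sum of integrals of $\delta e^{-\delta x}$ over $[k,k+1]$, use $t-x+1\ge t-k$ to bound $(t-k)^{-\alpha}\ge (t-x+1)^{-\alpha}$, and then rescale with $x=tu$ and $\theta=\delta t/2$ to obtain $\Gamma_{\alpha,t}(\theta)$.
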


\begin{proof}
As in the proof of Lemma \ref{lem:Bt-dom}, let
$
r=1-2\rho\in(0,1),
\delta=-\log r>0,
$
and $\theta=\frac{\delta t}{2}=\frac{t}{2}\bigl(-\log(1-2\rho)\bigr)$, and write $B_t(\rho,\alpha)$ in the integral form
\begin{align}
B_t(\rho,\alpha)
&=\frac12\sum_{k=0}^{t-1}\frac{1}{(t-k)^\alpha}\int_k^{k+1}\delta e^{-\delta x}\,dx \notag\\
&=\frac12\sum_{k=0}^{t-1}\int_k^{k+1}\delta e^{-\delta x}(t-k)^{-\alpha}\,dx. \label{eq:B-sum-of-integrals}
\end{align}

Since we want a lower bound for $t^\alpha B_t(\rho,\alpha)$, we want to lower bound the factor $(t-k)^{-\alpha}$ to make it not depend on $k$. Fix $k\in\{0,1,\dots,t-1\}$ and $x\in[k,k+1]$.  Then we have
\[
t-x+1\ge t-(k+1)+1=t-k.
\]
Because $\alpha>0$, the function $y\mapsto y^{-\alpha}$ is decreasing on $(0,\infty)$, so
\[
(t-k)^{-\alpha}\ge (t-x+1)^{-\alpha}.
\]
Multiplying by the nonnegative factor $\delta e^{-\delta x}$ and integrating over $[k,k+1]$ gives
\[
\int_k^{k+1}\delta e^{-\delta x}(t-k)^{-\alpha}\,dx
\ge
\int_k^{k+1}\delta e^{-\delta x}(t-x+1)^{-\alpha}\,dx.
\]
Summing over $k$ and using \eqref{eq:B-sum-of-integrals}, we obtain
\begin{align}
B_t(\rho,\alpha)
&\ge \frac12\sum_{k=0}^{t-1}\int_k^{k+1}\delta e^{-\delta x}(t-x+1)^{-\alpha}\,dx \notag\\
&=\frac12\int_0^t \delta e^{-\delta x}(t-x+1)^{-\alpha}\,dx. \label{eq:B-integral-lower}
\end{align}

Set $x=tu$, so $u\in[0,1]$ and $dx=t\,du$.  Then \eqref{eq:B-integral-lower} becomes
\begin{align}
B_t(\rho,\alpha)
&\ge \frac12\int_0^1 \delta e^{-\delta tu}\bigl(t-tu+1\bigr)^{-\alpha}t\,du \notag\\
&=\frac12\,\delta\,t\int_0^1 e^{-\delta tu}\Bigl(t\Bigl(1-u+\frac1t\Bigr)\Bigr)^{-\alpha}\,du \notag\\
&=\frac12\,\delta\,t^{1-\alpha}\int_0^1 e^{-\delta tu}\Bigl(1-u+\frac1t\Bigr)^{-\alpha}\,du. \label{eq:B-after-rescale}
\end{align}

Multiplying \eqref{eq:B-after-rescale} by $t^\alpha$ gives
\[
t^\alpha B_t(\rho,\alpha)
\ge
\frac12\,\delta\,t\int_0^1 e^{-\delta tu}\Bigl(1-u+\frac1t\Bigr)^{-\alpha}\,du.
\]
Now $\delta t=2\theta$, so $e^{-\delta tu}=e^{-2\theta u}$ and $\frac12\delta t=\theta$.
Hence
\[
t^\alpha B_t(\rho,\alpha)
\ge
\theta\int_0^1 e^{-2\theta u}\Bigl(1-u+\frac1t\Bigr)^{-\alpha}\,du
=
\Gamma_{\alpha,t}(\theta)=\Gamma_{\alpha,t}\Bigl(\frac{t}{2}\bigl(-\log(1-2\rho)\bigr)\Bigr),
\]
which is exactly \eqref{eq:Bt-lower-claim}.
\end{proof}

Although to lower bound $t^\alpha B_t(\rho,\alpha)$ we use $\Gamma_{\alpha,t}(\theta)$ instead of $L_\alpha(\theta)$, the following convergence result allows us to transform from $\Gamma_{\alpha,t}(\theta)$ to $L_\alpha(\theta)$.

\begin{lemma}[Monotone convergence]\label{lem:Lminus-to-L}
Fix $\alpha\in(0,1)$ and $\theta>0$.  Then the sequence $t\mapsto \Gamma_{\alpha,t}(\theta)$
is nondecreasing and
\begin{equation}\label{eq:Lminus-limit}
\lim_{t\to\infty}\Gamma_{\alpha,t}(\theta)=L_\alpha(\theta).
\end{equation}
\end{lemma}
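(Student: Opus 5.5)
The plan is to work pointwise in $u$ and then pass to the limit under the integral sign. Fix $\alpha\in(0,1)$ and $\theta>0$, and write the integrand of $\Gamma_{\alpha,t}(\theta)$ as
\[
g_t(u):=\theta e^{-2\theta u}\Bigl(1-u+\tfrac1t\Bigr)^{-\alpha},\qquad u\in[0,1].
\]
\emph{Monotonicity.} For each fixed $u\in[0,1]$, the quantity $1-u+\frac1t$ is positive and strictly decreasing in $t$. Since $y\mapsto y^{-\alpha}$ is decreasing on $(0,\infty)$, the sequence $t\mapsto g_t(u)$ is nondecreasing. Integrating over $[0,1]$ shows that $t\mapsto\Gamma_{\alpha,t}(\theta)$ is nondecreasing.

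\emph{Limit.} For every $u\in[0,1)$, continuity of $y\mapsto y^{-\alpha}$ at $y=1-u>0$ gives
\[
g_t(u)\to g(u):=\theta e^{-2\theta u}(1-u)^{-\alpha}.
\]
At $u=1$, $g_t(1)=\theta e^{-2\theta}t^{\alpha}\to\infty$, but this is a single point and hence a null set. So $g_t\uparrow g$ almost everywhere, and all $g_t$ are nonnegative measurable functions. The monotone convergence theorem then yields
\[
\int_0^1 g_t\,du\to\int_0^1 g\,du=L_\alpha(\theta).
\]
The right-hand side is exactly the (improper, absolutely convergent) integral defining $L_\alpha(\theta)$ in \eqref{eq:def-L}. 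It is finite because $\int_0^1(1-u)^{-\alpha}\,du=\frac1{1-\alpha}$.

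As an alternative to monotone convergence, one could use dominated convergence with the dominating function $g$ itself. This works because $0\le g_t\le g$ on $[0,1)$, and $g\in L^1$ by the computation above.

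There is no real obstacle here. The only point needing care is that $L_\alpha$ is defined as an improper Riemann integral, while the argument uses the Lebesgue integral. These two integrals agree because the integrand is nonnegative, a remark the paper has already implicitly used in Remark~\ref{rem:L-34}.
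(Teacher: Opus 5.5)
Your proposal is correct and takes essentially the same approach as the paper: pointwise monotonicity of $\bigl(1-u+\tfrac1t\bigr)^{-\alpha}$ in $t$, pointwise convergence to $(1-u)^{-\alpha}$ on $[0,1)$, then the monotone convergence theorem. Your extra remarks are correct and spell out details the paper leaves implicit: the point $u=1$ is a null set, the improper Riemann and Lebesgue integrals agree for nonnegative integrands, and dominated convergence with dominating function $g$ works as an alternative.
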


\begin{proof}
Fix $u\in[0,1)$.  As $t$ increases, the quantity $1-u+\frac1t$ decreases.  Since $\alpha>0$,
the function $x\mapsto x^{-\alpha}$ is decreasing on $(0,\infty)$, so $\Bigl(1-u+\frac1t\Bigr)^{-\alpha}$ is increasing in $t$ and
\[
\lim_{t\to\infty}\Bigl(1-u+\frac1t\Bigr)^{-\alpha}
=
(1-u)^{-\alpha}
\]

Multiplying by the nonnegative factor $e^{-2\theta u}$ preserves monotonicity, so by the Monotone
Convergence Theorem, we have
\[
\lim_{t\to\infty}\int_0^1 e^{-2\theta u}\Bigl(1-u+\frac1t\Bigr)^{-\alpha}\,du
=
\int_0^1 e^{-2\theta u}(1-u)^{-\alpha}\,du.
\]
Multiplying by $\theta>0$ gives \eqref{eq:Lminus-limit}.
\end{proof}

Combining above reductions, to show that there exist an integer $t\ge t_0$
and a value $\rho\in(0,\tfrac12)$ such that \eqref{eq:main-ineq}
fails with $\alpha=\alpha_0$, we only need to show $L_{\alpha_0}(\theta_0)>1$ for some properly chosen $\theta_0$. To this end, we choose $\theta_0=\frac34$ and use the series approximation
\begin{equation*}
L_\alpha(\theta)=\theta e^{-2\theta}\sum_{n=0}^\infty \frac{(2\theta)^n}{n!\,(n+1-\alpha)}.
\end{equation*}
from Lemma \ref{lem:series} to give explicit lower bound for $L_{\aZero}(\theta_0)$.

\begin{lemma}[Lower bound for L]\label{lem:L-above-1}
With $\aZero=\frac{753}{1000}$ and $\theta_0=\frac34$, one has
\[
L_{\aZero}(\theta_0)>1.
\]
\end{lemma}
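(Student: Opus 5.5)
We need to show $L_{0.753}(3/4)>1$. With $\theta_0=3/4$ we have $x:=2\theta_0=3/2$, and Lemma~\ref{lem:series} gives
\[
L_{\aZero}(\theta_0)=\tfrac34 e^{-3/2}\sum_{n\ge0}\frac{(3/2)^n}{n!\,(n+1-\aZero)},\qquad n+1-\aZero=n+\tfrac{247}{1000}.
\]
All terms are positive, so the plan is to lower bound the series by a finite truncation $S_N:=\sum_{n=0}^N a_n$ with $a_n=\frac{(3/2)^n}{n!\,(n+247/1000)}$. We then need a rational lower bound on $e^{-3/2}$, equivalently a rational upper bound on $e^{3/2}$, and finally verify $\tfrac34\,e^{-3/2}S_N>1$ by exact rational arithmetic, in the same style as Lemma~\ref{lem:onepoint-check}.

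For the upper bound on $e^{3/2}$ we use $e^{y}\le P_m(y)+\frac{y^{m+1}}{(m+1)!}\cdot\frac{1}{1-y/(m+2)}$ for $0\le y<m+2$. This follows from the geometric tail bound exactly as in \eqref{eq:tail}. Taking $m=8$ or $10$ at $y=3/2$ gives $e^{3/2}$ to about $10^{-6}$ relative accuracy, which should suffice.

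A rough numerical check shows how much room there is. The terms are $a_0\approx 4.0486$, $a_1\approx 1.2029$, $a_2\approx 0.5077$, $a_3\approx 0.1737$, $a_4\approx 0.0501$, $a_5\approx 0.0123$, $a_6\approx0.0026$, $a_7\approx 0.0005$. Summing gives $S\approx 5.998$. Multiplying by $\tfrac34 e^{-3/2}\approx 0.16735$ yields about $1.0038$, so the margin is only about $0.4\%$.

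The main obstacle is therefore the precision of this verification. We must keep enough terms (say $N=7$ or $8$, where the dropped tail is below $10^{-4}$) and a tight enough exponential bound that the accumulated truncation errors stay well under the $0.0038$ margin. Since all dropped terms are positive, truncation only decreases the lower bound. The one inequality direction that must be handled carefully is therefore the upper bound on $e^{3/2}$. We will state the final comparison as an explicit integer cross-multiplication, mirroring the earlier certificate lemmas.
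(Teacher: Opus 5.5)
Your plan is essentially the paper's proof: expand with Lemma~\ref{lem:series}, keep a finite positive truncation of the series, and certify with a rational lower bound on $e^{-3/2}$. The paper bounds $e^{-3/2}$ by the degree-$9$ alternating Taylor sum with its positive Lagrange remainder, $e^{-3/2}>102355/458752$, and keeps only $n\le 5$, getting $S>299/50$ and $L_{\aZero}(\theta_0)>91812435/91750400$; your geometric-tail upper bound on $e^{3/2}$ works equally well. One correction to your sanity check: $a_2=2.25/4.494\approx 0.5007$ (not $0.5077$), and $a_3,a_4,a_5$ are also slightly overstated, so in fact $S\approx 5.9902$ and $L_{\aZero}(\theta_0)\approx 1.0024$, a margin of about $0.24$ percent rather than $0.4$ percent. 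That margin is still far above your truncation and exponential errors (of order $10^{-5}$ or less), so the certificate goes through.
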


\begin{proof}
by Lemma \ref{lem:series}, we have
\begin{equation*}
L_{\alpha_0}(\theta_0)=\theta_0 e^{-2\theta_0}\sum_{n=0}^\infty \frac{(2\theta_0)^n}{n!\,(n+1-\alpha_0)}=\theta_0 e^{-\frac32}S(x_0).
\end{equation*}
where $x_0=2\theta_0=\frac32$ and $S(x_0)=\sum_{n=0}^\infty \frac{x_0^n}{n!\,(n+1-\alpha_0)}=\sum_{n=0}^\infty \frac{(2\theta_0)^n}{n!\,(n+1-\alpha_0)}$.

We first estimate $e^{-3/2}$ by Taylor expansion.
Apply Taylor's theorem with Lagrange remainder to $f(x)=e^{-x}$ at $x=\frac32$, expanded at $0$,
through degree $9$.  There exists $\xi\in(0,\frac32)$ such that
\begin{equation}\label{eq:taylor-9}
e^{-3/2}=\sum_{k=0}^{9}\frac{(-\frac32)^k}{k!}+\frac{e^{-\xi}}{10!}\Bigl(\frac32\Bigr)^{10}.
\end{equation}
The remainder is strictly positive, so
\begin{equation}\label{eq:exp-lower-poly}
e^{-3/2}>\sum_{k=0}^{9}\frac{(-\frac32)^k}{k!}.
\end{equation}
A direct rational computation gives
\[
\sum_{k=0}^{9}\frac{(-\frac32)^k}{k!}
=
\frac{511775}{2293760}
=
\frac{102355}{458752}.
\]

Therefore, we have
\[
e^{-3/2}>\frac{102355}{458752}.
\]

Next, we bound $S(x_0)$. All terms in the series are positive, so it is enough to keep the first six terms and conclude
\begin{equation}\label{eq:S-trunc}
S(x_0)>
\sum_{n=0}^{5}\frac{x_0^n}{n!\,(n+1-\aZero)}.
\end{equation}
Since $\aZero=\frac{753}{1000}$ and $x_0=\frac32$, the first six terms are
\begin{align*}
\frac{x_0^0}{0!(1-\aZero)}&=\frac{1000}{247},\\
\frac{x_0^1}{1!(2-\aZero)}&=\frac{1500}{1247},\\
\frac{x_0^2}{2!(3-\aZero)}&=\frac{375}{749},\\
\frac{x_0^3}{3!(4-\aZero)}&=\frac{1125}{6494},\\
\frac{x_0^4}{4!(5-\aZero)}&=\frac{3375}{67952},\\
\frac{x_0^5}{5!(6-\aZero)}&=\frac{225}{18656}.
\end{align*}
We now use the following six strict rational lower bounds:
\begin{align*}
\frac{1000}{247} &> \frac{2024}{500}, 
\\
\frac{1500}{1247} &> \frac{601}{500}, 
\\
\frac{375}{749} &> \frac{500}{999}, 
\\
\frac{1125}{6494} &> \frac{433}{2500}, 
\\
\frac{3375}{67952} &> \frac{149}{3000}, 
\\
\frac{225}{18656} &> \frac{301}{25000}. 
\end{align*}
Each is verified by cross-multiplication:
\begin{align*}
1000\cdot 500 &=500000>499928=2024\cdot 247,\\
1500\cdot 500 &=750000>749447=601\cdot 1247,\\
375\cdot 999 &=374625>374500=500\cdot 749,\\
1125\cdot 2500 &=2812500>2811902=433\cdot 6494,\\
3375\cdot 3000 &=10125000>10124848=149\cdot 67952,\\
225\cdot 25000 &=5625000>5615456=301\cdot 18656.
\end{align*}
Hence
\begin{equation}\label{eq:S-lb-sum}
S(x_0)
>
\frac{2024}{500}+\frac{601}{500}+\frac{500}{999}+\frac{433}{2500}+\frac{149}{3000}+\frac{301}{25000}.
\end{equation}
The right-hand side simplifies exactly to
\[
\frac{18685693}{3121875}.
\]
Finally,
\[
\frac{18685693}{3121875}>\frac{299}{50}
\iff
18685693\cdot 50>3121875\cdot 299,
\]
and indeed
\[
934284650>933440625.
\]
Therefore $S(x_0)>\frac{299}{50}$.

Since $\theta_0=\frac34>0$, combining all the estimates above, we obtain
\[
L_{\aZero}(\theta_0)
>
\frac34\cdot\frac{102355}{458752}\cdot\frac{299}{50}
=
\frac{91812435}{91750400}>1.
\]
This proves the lemma.
\end{proof}

\begin{proof}[Proof of Theorem~\ref{thm:alpha-upper}]

Let $\aZero=\frac{753}{1000}$ and $\theta_0=\frac34$. By Lemma~\ref{lem:L-above-1}, we have
\[
L_{\aZero}(\theta_0)>1.
\]
Now, consider arbitrary $K>0$ and $t_0>0$.
By Lemma~\ref{lem:Lminus-to-L}, the sequence $\Gamma_{\aZero,t}(\theta_0)$ is nondecreasing in $t$
and converges to $L_{\aZero}(\theta_0)$.  Since the limit is greater than 1, there exists an integer
$t\ge t_0$ such that
\begin{equation}\label{eq:choose-t-proof}
\Gamma_{\aZero,t}(\theta_0)>1.
\end{equation}
Now we set
\[
\rho=\rho_t(\theta_0)=\frac{1-e^{-2\theta_0/t}}{2}\in(0,\tfrac12).
\]
By Lemma~\ref{lem:Bt-lower}, we have
\[
t^{\aZero}B_t(\rho,\aZero)\ge \Gamma_{\aZero,t}(\theta_0)>1.
\]
Therefore
\begin{equation}\label{eq:Bt-base-bad}
B_t(\rho,\aZero)>\frac{1}{t^{\aZero}}.
\end{equation}
By monotonicity (Lemma~\ref{lem:B-monotone-alpha}), we also have
\[
t^\alpha B_t(\rho,\alpha)\ge t^{\aZero}B_t(\rho,\aZero)>1.
\]
Hence
\begin{equation}\label{eq:Bt-alpha-bad}
B_t(\rho,\alpha)>\frac{1}{t^\alpha}.
\end{equation}
Since $t+2>t$, we have $(t+2)^\alpha>t^\alpha$, and therefore
\[
\frac{1}{t^\alpha}>\frac{1}{(t+2)^\alpha}.
\]
Combining this with \eqref{eq:Bt-alpha-bad}, we get
\[
B_t(\rho,\alpha)>\frac{1}{(t+2)^\alpha}.
\]
Therefore, the necessary condition in Lemma~\ref{lem:reduce-to-B} cannot hold.

\end{proof}

To prove the lower bound for the matrix recursion $\Nb_t$, we first derive the following basic properties.
\begin{lemma}[Eigenvalue recursion]\label{prop:eig-rec}
Let
\[
\bar\M=\U\D\U^\top,
\qquad
\D=\diag(\rho_1,\dots,\rho_n),
\qquad
1\ge \rho_1\ge \cdots\ge \rho_n\ge 0.
\]
Define the following matrix recursion:
  \begin{align}
    \Nb_0 = \bar\M,\qquad \Nb_{t+1} = \Nb_t(\I-2\bar\M) + \|\Nb_t\|\cdot \bar\M.\label{eq:mat-rec2}
  \end{align}
Then the followings hold:

(i) Every $\Nb_t$ is of the form
\[
\Nb_t=\U\diag(\lambda_{1,t},\dots,\lambda_{n,t})\U^\top,
\]
with $\lambda_{k,0}=\rho_k$ and
\begin{equation}\label{eq:lambda-rec2}
\lambda_{k,t+1}=(1-2\rho_k)\lambda_{k,t}+\rho_k\mu_t,
\qquad \text{ where }\qquad 
\mu_t=\max_{1\le i\le n}\lambda_{i,t}=\norm{\Nb_t}.
\end{equation}

(ii) $\Nb_t\succeq 0$ for all $t\ge 0$.

(iii) For every $t\ge 0$,
\begin{equation}\label{eq:N-matrix-lb}
\Nb_t\succeq \bar\M(\I-\bar\M)^t.
\end{equation}
Consequently,
\begin{equation}\label{eq:N-norm-lb}
\norm{\Nb_t}\ge \max_{1\le k\le n}\rho_k(1-\rho_k)^t.
\end{equation}
\end{lemma}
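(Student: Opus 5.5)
Parts (i) and (ii) follow by induction on $t$, and (iii) follows from a second induction together with monotonicity in the recursion. For (i), the base case is $\Nb_0=\bar\M=\U\D\U^\top$, so $\lambda_{k,0}=\rho_k$. If $\Nb_t=\U\Lambda_t\U^\top$ with $\Lambda_t$ diagonal, then $\Nb_t(\I-2\bar\M)=\U\Lambda_t(\I-2\D)\U^\top$. Moreover $\|\Nb_t\|\bar\M=\U(\mu_t\D)\U^\top$, where $\mu_t=\|\Nb_t\|$. Once (ii) is established at step $t$, $\Nb_t$ is psd, so its spectral norm equals its largest eigenvalue $\max_i\lambda_{i,t}$. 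Reading off diagonal entries then gives \eqref{eq:lambda-rec2}. In general $\|\Nb_t\|=\max_i|\lambda_{i,t}|$, so I will prove (i) and (ii) together by a joint induction to avoid this subtlety.

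For (ii), I rewrite the recursion as
\[
\lambda_{k,t+1}=\lambda_{k,t}(1-\rho_k)+\rho_k(\mu_t-\lambda_{k,t}).
\]
Assume all $\lambda_{i,t}\ge 0$. Then $\lambda_{k,t}\ge0$ and $1-\rho_k\ge0$, and also $\mu_t-\lambda_{k,t}\ge 0$ because $\mu_t$ is the maximum. Hence $\lambda_{k,t+1}\ge 0$. The base case holds since $\rho_k\ge0$.

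For (iii), it suffices to show $\lambda_{k,t}\ge\rho_k(1-\rho_k)^t$ for every $k$, since both sides are diagonal in the basis $\U$. I induct on $t$; the case $t=0$ is an equality. For the inductive step I use the same rewriting. The term $\rho_k(\mu_t-\lambda_{k,t})$ is nonnegative, so
\[
\lambda_{k,t+1}\ge(1-\rho_k)\lambda_{k,t}\ge(1-\rho_k)\rho_k(1-\rho_k)^t=\rho_k(1-\rho_k)^{t+1},
\]
where the second step uses $1-\rho_k\ge0$ and the inductive hypothesis. This gives $\Nb_t\succeq\bar\M(\I-\bar\M)^t$. The norm bound \eqref{eq:N-norm-lb} then follows from $\|\Nb_t\|=\max_k\lambda_{k,t}\ge\max_k\rho_k(1-\rho_k)^t$.

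The only delicate point is the order of the argument. The identification $\mu_t=\max_i\lambda_{i,t}$ requires nonnegativity, while the nonnegativity step uses $\mu_t\ge\lambda_{k,t}$. Running the induction on (i) and (ii) simultaneously resolves this. Everything else is routine.
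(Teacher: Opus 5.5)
Your proof is correct and matches the paper's argument. The paper uses the same splitting $\Nb_{t+1}=\Nb_t(\I-\bar\M)+(\|\Nb_t\|\I-\Nb_t)\bar\M$ into commuting psd terms for (ii), and drops the second term to iterate $\Nb_{t+1}\succeq\Nb_t(\I-\bar\M)$ for (iii); your eigenvalue-level induction is exactly this computation. The circularity you guard against does not actually arise, since $\|\Nb_t\|\ge\lambda_{k,t}$ holds for any symmetric $\Nb_t$ regardless of sign, though the joint induction is harmless.
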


\begin{proof}
We first prove (i). Because $\Nb_0=\bar\M$ is a polynomial in $\bar\M$, the claim is true at $t=0$.
Assume $\Nb_t=p_t(\bar\M)$ for some real polynomial $p_t$.  Then
\[
\Nb_{t+1}=p_t(\bar\M)(\I-2\bar\M)+\norm{\Nb_t}\,\bar\M,
\]
which is again a polynomial in $\bar\M$.  Thus, by induction, every $\Nb_t$ is a polynomial in $\bar\M$
and therefore diagonal in the eigenbasis of $\bar\M$.
Writing
\[
\Nb_t=\U\diag(\lambda_{1,t},\dots,\lambda_{n,t})\U^\top,
\]
and substituting this into \eqref{eq:mat-rec2}, we obtain
\[
\lambda_{k,t+1}=(1-2\rho_k)\lambda_{k,t}+\rho_k\norm{\Nb_t},
\qquad k=1,\dots,n.
\]
Since $\Nb_t$ is symmetric and diagonal in the basis $\U$, its operator norm is the maximum absolute
value of its eigenvalues.  We will prove below that $\Nb_t\succeq 0$ for all $t$, so the operator norm is
simply the largest eigenvalue:
\[
\norm{\Nb_t}=\max_k \lambda_{k,t}=\mu_t.
\]

To prove (ii), we observe that this is true for $t=0$.  Assuming $\Nb_t\succeq 0$, observe that
$\Nb_t$ commutes with $\bar\M$ and
\begin{align*}
\Nb_{t+1}
&=\Nb_t(\I-2\bar\M)+\norm{\Nb_t}\,\bar\M\\
&=\Nb_t(\I-\bar\M)+(\norm{\Nb_t}\,\I-\Nb_t)\bar\M.
\end{align*}
The commuting matrices $\Nb_t$, $\I-\bar\M$, $\norm{\Nb_t}\,\I-\Nb_t$, and $\bar\M$ are all positive
semidefinite, hence so are the products $\Nb_t(\I-\bar\M)$ and $(\norm{\Nb_t}\,\I-\Nb_t)\bar\M$.
Their sum is therefore positive semidefinite, so $\Nb_{t+1}\succeq 0$.

To prove (iii), we recall the recursion
\begin{align}
\Nb_{t+1}
&=\Nb_t(\I-\bar\M)+(\norm{\Nb_t}\,\I-\Nb_t)\bar\M. \label{eq:N-splitting}
\end{align}
The second term on the right-hand side of \eqref{eq:N-splitting} is positive semidefinite,
so
\begin{equation}\label{eq:N-one-step-lb}
\Nb_{t+1}\succeq \Nb_t(\I-\bar\M).
\end{equation}
Since $\Nb_0=\bar\M$, iterating \eqref{eq:N-one-step-lb} gives
\[
\Nb_t\succeq \bar\M(\I-\bar\M)^t,
\]
which is \eqref{eq:N-matrix-lb}.
Now diagonalize in the basis of $\bar\M$.
The eigenvalues of $\bar\M(\I-\bar\M)^t$ are exactly
$\rho_k(1-\rho_k)^t$, so \eqref{eq:N-matrix-lb} implies
\[
\lambda_{k,t}\ge \rho_k(1-\rho_k)^t,
\qquad k=1,\dots,n.
\]
Taking the maximum over $k$ yields \eqref{eq:N-norm-lb}.
\end{proof}

With all the previous preparations, we can now prove the lower bound for the matrix recursion $\Nb_t$.
\begin{theorem}[Lower bound for the matrix recursion]\label{prop:N-polynomial-family}
Let
\[
\aZero=\frac{753}{1000},
\qquad
\theta_0=\frac34,
\qquad
\rho_m=\frac{1-e^{-2\theta_0/m}}{2}
\quad (m\in\N).
\]
Then there exist integers $t_{\ref{prop:N-polynomial-family}}\ge 1$ and a constant $c_{\ref{prop:N-polynomial-family}}>0$ with the following property:
for every horizon $T\ge t_{\ref{prop:N-polynomial-family}}$, if one defines the diagonal matrix
\[
\bar\M^{(T)}:=\diag(\rho_1,\rho_2,\dots,\rho_T)\in\R^{T\times T}
\]
and lets $(\Nb_t^{(T)})_{t\ge 0}$ be the recursion \eqref{eq:mat-rec2} associated with $\bar\M^{(T)}$,
then
\begin{equation}\label{eq:N-polynomial-family}
\norm{\Nb_t^{(T)}}\ge \frac{c_{\ref{prop:N-polynomial-family}}}{(t+1)^{\aZero}}
\qquad\text{for every }0\le t\le T.
\end{equation}
\end{theorem}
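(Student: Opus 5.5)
We argue by induction on $t$, comparing $\mu_t=\norm{\Nb_t^{(T)}}$ with the target sequence $c/(t+1)^{\aZero}$. The lower bound $\mu_t\ge c/(t+1)^{\aZero}$ is expected to be self-propagating, because the summation in Lemma~\ref{l:main-technical} is \emph{too large} at exponent $\aZero$ (Theorem~\ref{thm:alpha-upper}).

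\emph{Unrolling the recursion.} All $\rho_m\le 1-e^{-3/2}$ over $2$, and since $e^{-3/2}>0$ this gives $\rho_m<1/2$, so $1-2\rho_m\in(0,1)$. Hence each eigenvalue recursion \eqref{eq:lambda-rec2} unrolls with nonnegative coefficients:
\[
\lambda_{m,t}=\rho_m(1-2\rho_m)^t+\rho_m\sum_{i=1}^{t}(1-2\rho_m)^{t-i}\mu_{i-1}.
\]
Suppose $\mu_{j}\ge c/(j+1)^{\aZero}$ for all $j<t$. Then $\mu_{i-1}\ge c/i^{\aZero}$, and so
\[
\mu_t\ge\lambda_{m,t}\ge c\,B_t(\rho_m,\aZero)
\]
for every $m\le T$.

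\emph{Choosing $m$.} For $t\le T$ we pick $m=t$, so that $\frac{t}{2}(-\log(1-2\rho_t))=\theta_0$ exactly. Lemma~\ref{lem:Bt-lower} then gives $t^{\aZero}B_t(\rho_t,\aZero)\ge\Gamma_{\aZero,t}(\theta_0)$. By Lemmas~\ref{lem:L-above-1} and \ref{lem:Lminus-to-L}, there is $t_1$ with $\Gamma_{\aZero,t}(\theta_0)\ge 1+\epsilon$ for all $t\ge t_1$, where $\epsilon>0$ is fixed (monotonicity in $t$ gives this uniformly). This yields $\mu_t\ge c(1+\epsilon)/t^{\aZero}$. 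Finally, $(1+\epsilon)/t^{\aZero}\ge 1/(t+1)^{\aZero}$ holds for all $t\ge t_2$, since $((t+1)/t)^{\aZero}\to1$. So the induction step closes for $t\ge t_{\ref{prop:N-polynomial-family}}:=\max(t_1,t_2)$ without any loss in $c$. Only indices $m\le t\le T$ are used, which is why the dimension $T$ suffices.

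\emph{Base range.} For $0\le t< t_{\ref{prop:N-polynomial-family}}$ we use Lemma~\ref{prop:eig-rec}(iii): $\mu_t\ge\rho_1(1-\rho_1)^t\ge\rho_1(1-\rho_1)^{t_{\ref{prop:N-polynomial-family}}}$, which is a positive constant independent of $T$. Choose $c$ to be the minimum of this over the finite range, times $(t+1)^{\aZero}\ge1$, e.g.\ $c:=\rho_1(1-\rho_1)^{t_{\ref{prop:N-polynomial-family}}}$. This gives the bound on the initial segment. The induction then runs for all $t\le T$, with the hypothesis used for all earlier $j<t$ (including the base range).

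\emph{Main obstacle.} The delicate point is making the step uniform. We need a single $\epsilon$ and threshold $t_1$ not depending on $T$, which the monotone convergence of $\Gamma_{\aZero,t}$ supplies. We also must check that the induction hypothesis at early indices (with the small constant $c$) feeds correctly into the sum; it does, since $c$ multiplies the whole sum linearly. The first term $\rho_m(1-2\rho_m)^t\ge0$ is simply dropped.
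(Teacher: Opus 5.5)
Your proof is correct and follows essentially the same route as the paper: unroll the eigenvalue recursion, use the eigenvalue $\rho_t$ so that $\frac{t}{2}(-\log(1-2\rho_t))=\theta_0$ exactly, apply Lemma~\ref{lem:Bt-lower} with $\Gamma_{\aZero,t}(\theta_0)\ge 1$ for $t\ge t_1$, and handle the initial segment via Lemma~\ref{prop:eig-rec}(iii). One simplification: the slack $\epsilon$ and the threshold $t_2$ are unnecessary, because $t^{-\aZero}\ge (t+1)^{-\aZero}$ holds for every $t\ge 1$, and this is how the paper closes the induction step using only $\Gamma_{\aZero,t}(\theta_0)\ge 1$.
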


\begin{proof}
By Lemmas~\ref{lem:Lminus-to-L} and \ref{lem:L-above-1}, there exists an integer \(t_1\ge 1\) such that
\begin{equation}\label{eq:Lminus-above-one-all-large}
\Gamma_{\aZero,t}(\theta_0)\ge 1
\qquad\text{for every } t\ge t_1.
\end{equation}

Fix \(T\ge t_1\), and write
\[
\mu_t=\norm{\Nb_t^{(T)}},
\qquad
\lambda_{k,t}=\lambda_{k,t}^{(T)}.
\]
We will prove that
\begin{equation}\label{eq:target-induction}
\mu_t\ge c_1 (t+1)^{-\aZero}
\qquad (0\le t\le T)
\end{equation}
for some constant \(c_1>0\) independent of \(T\).

The shift by \(+1\) is convenient for two reasons. First, the theorem is stated starting at \(t=0\). Second, in the induction step the estimate at time \(i-1\) becomes
\[
\mu_{i-1}\ge c_1 i^{-\aZero},
\]
which matches exactly the power appearing in \(B_t(\rho_t,\aZero)\).

By Lemma~\ref{prop:eig-rec}, the eigenvalues satisfy
\[
\lambda_{k,t+1}=(1-2\rho_k)\lambda_{k,t}+\rho_k\mu_t,
\]
and therefore, after iteration,
\begin{equation}\label{eq:lambda-unrolled}
\lambda_{k,t}
=
\rho_k(1-2\rho_k)^t
+
\rho_k\sum_{i=1}^t (1-2\rho_k)^{t-i}\mu_{i-1}.
\end{equation}

We first handle the finitely many times \(0\le t<t_1\) by choosing a sufficiently small constant $c_1>0$. More precisely, by Lemma~\ref{prop:eig-rec} (iii), we have
\[
\mu_t\ge \rho_1(1-\rho_1)^t
\qquad (t\ge 0).
\]
Hence
\[
c_1=\min_{0\le s<t_1}(s+1)^{\aZero}\rho_1(1-\rho_1)^s>0
\]
is well defined and independent of \(T\). Consequently,
\begin{equation}\label{eq:base-induction}
\mu_t\ge c_1 (t+1)^{-\aZero}
\qquad (0\le t<t_1).
\end{equation}

We now prove \eqref{eq:target-induction} for \(t\ge t_1\) by induction. Fix
\(t\in\{t_1,\dots,T\}\), and assume that
\[
\mu_s\ge c_1 (s+1)^{-\aZero}
\qquad (0\le s<t)
\]
or equivalently,
\[
\mu_{i-1}\ge c_1 i^{-\aZero}
\qquad (1\le i\le t).
\]

Using \eqref{eq:lambda-unrolled} with \(k=t\), dropping the nonnegative first term, and then using the induction hypothesis, we obtain
\[
\mu_t\ge \lambda_{t,t}
\ge
\rho_t\sum_{i=1}^t (1-2\rho_t)^{t-i}\mu_{i-1}
\ge
c_1\,\rho_t\sum_{i=1}^t (1-2\rho_t)^{t-i}i^{-\aZero}
=
c_1 B_t(\rho_t,\aZero).
\]

Now
\[
1-2\rho_t=e^{-2\theta_0/t},
\qquad\text{so}\qquad
\frac{t}{2}\bigl(-\log(1-2\rho_t)\bigr)=\theta_0.
\]
Therefore Lemma~\ref{lem:Bt-lower}, together with \eqref{eq:Lminus-above-one-all-large}, yields
\[
t^{\aZero}B_t(\rho_t,\aZero)\ge \Gamma_{\aZero,t}(\theta_0)\ge 1.
\]
Hence
\[
B_t(\rho_t,\aZero)\ge t^{-\aZero}\ge (t+1)^{-\aZero},
\]
and so
\[
\mu_t\ge c_1 (t+1)^{-\aZero}.
\]
which completes the induction.

We have thus shown that \eqref{eq:N-polynomial-family} holds for every \(0\le t\le T\). Therefore the theorem follows with
\[
t_{\ref{prop:N-polynomial-family}}=t_1,
\qquad
c_{\ref{prop:N-polynomial-family}}=c_1.
\]
\end{proof}

\end{document}